\documentclass{article} 
\usepackage{beyond_calibration,times}


\usepackage[utf8]{inputenc} 
\usepackage[T1]{fontenc}    
\usepackage{booktabs}       
\usepackage{amsfonts}       
\usepackage{nicefrac}       
\usepackage{microtype}      
\usepackage{xcolor}         
\usepackage{amsthm} 
\usepackage{amsmath}
\usepackage{bbold}
\usepackage{dsfont} 
\usepackage{graphicx}
\graphicspath{ {./img/} }
\usepackage{float}
\usepackage{wrapfig}
\usepackage{subcaption}
\usepackage{thm-restate}        
\usepackage{enumitem}
\usepackage{stmaryrd}
\usepackage{listofitems}
\usepackage[toc,page,header]{appendix}
\usepackage{minitoc}
\def\mycitecolor{green!50!black}
\def\mylinkcolor{red!60!black}
\usepackage[colorlinks=true, linkcolor=\mylinkcolor, citecolor=\mycitecolor]{hyperref}       

\usepackage{url}            
\usepackage{soul}

\usepackage[textsize=tiny]{todonotes}
\setlength{\marginparwidth}{3.5cm}


\newcommand{\RR}{\mathbb{R}}
\newcommand{\E}{\mathbb{E}}

\usepackage[mathscr]{eucal}
\newcommand{\Ccal}{\mathscr{R}}
\newcommand{\Ical}{\mathcal{I}}
\newcommand{\Lcal}{\mathcal{L}}

\newcommand{\Rcal}{\mathcal{R}}
\newcommand{\Scal}{\mathcal{S}}

\newcommand{\cond}{\;\middle\vert\;}

\newcommand{\br}[1]{\left(#1\right)}
\newcommand{\sqb}[1]{\left[#1\right]}
\newcommand{\cbr}[1]{\left\{#1\right\}}

\newcommand{\abs}[1]{\left\vert\, #1 \,\right\vert}

\newcommand{\pr}[1]{\left( #1 \right)}

\newcommand{\bra}[1]{\left[ #1 \right]}
\newcommand{\N}{\mathbb{N}}
\newcommand{\R}{\mathbb{R}}

\newcommand{\Bc}{\mathcal{B}}

\newcommand{\Xc}{\mathcal{X}}

\newcommand{\PPG}[2]{P\!\pr{#1\given#2}}
\newcommand{\1}[1]{\mathds{1}_{#1}}

\newcommand{\given}[1][]{#1\vert}

\newcommand{\eg}{\emph{e.g.}}
\newcommand{\ie}{\emph{i.e.}}

\newcommand{\inlinequote}[1]{``\textit{#1}''}

\newcommand{\GL}{\mathrm{GL}}
\newcommand{\glexp}{\GL_{\textit{explained}}}
\newcommand{\glexphat}{\widehat\GL_{\textit{explained}}}
\newcommand{\glres}{\GL_{\textit{residual}}}
\newcommand{\GLind}{\GL_{\textit{induced}}}
\newcommand{\GLindhat}{\widehat\GL_{\textit{induced}}}
\newcommand{\glbias}{\GL_{\textit{bias}}}
\newcommand{\glbiashat}{\widehat{\GL}_{\textit{bias}}}

\newcommand{\glpluginhat}{\widehat{\GL}_{\textit{plugin}}}
\newcommand{\gllb}{\GL_{\mathrm{LB}}}
\newcommand{\gllbhat}{\widehat{\GL}_{\mathrm{LB}}}

\newcommand{\CL}{\mathrm{CL}}
\newcommand{\CLind}{\CL_{\textit{induced}}}

\newcommand{\clhat}{\widehat{\CL}}

\newcommand{\esp}[1]{\mathbb{E}\!\bra{#1}}
\newcommand{\espk}[2]{\esp{#1\,\middle|\,#2}}
\newcommand{\KL}{D_{\mathrm{KL}}}
\newcommand{\kl}[2]{\KL\!\pr{#1\,\middle\Vert\,#2}}
\newcommand{\Var}[1][]{\mathbb{V}_{\!#1}}
\newcommand{\var}[2][]{\mathbb{V}_{\!#1}\!\bra{#2}}
\newcommand{\vark}[3][]{\var[#1]{#2\,\middle|\,#3}}
\newcommand{\covk}[3]{\mathrm{Cov}\!\bra{#1,#2\,\middle|\,#3}}
\newcommand{\Sc}{\mathcal{S}}

\theoremstyle{plain}
\newtheorem{definition}{Definition}[section]
\newtheorem{proposition}{Proposition}[section]
\newtheorem{theorem}{Theorem}[section]
\newtheorem{corollary}{Corollary}[section]

\newtheorem{lemma}{Lemma}[section]
\newtheorem{remark}{Remark}

\newsavebox{\overlongequation}

\usepackage{etoolbox}
\makeatletter
\patchcmd{\hyper@makecurrent}{%
    \ifx\Hy@param\Hy@chapterstring
        \let\Hy@param\Hy@chapapp
    \fi
}{%
    \iftoggle{inappendix}{
        \@checkappendixparam{chapter}%
        \@checkappendixparam{section}%
        \@checkappendixparam{subsection}%
        \@checkappendixparam{subsubsection}%
        \@checkappendixparam{paragraph}%
        \@checkappendixparam{subparagraph}%
    }{}%
}{}{\errmessage{failed to patch}}

\newcommand*{\@checkappendixparam}[1]{%
    \def\@checkappendixparamtmp{#1}%
    \ifx\Hy@param\@checkappendixparamtmp
        \let\Hy@param\Hy@appendixstring
    \fi
}
\makeatletter

\makeatletter
\newcommand*{\centerfloat}{%
  \parindent \z@
  \leftskip \z@ \@plus 1fil \@minus \textwidth
  \rightskip\leftskip
  \parfillskip \z@skip}
\makeatother

\newtoggle{inappendix}
\togglefalse{inappendix}

\apptocmd{\appendix}{\toggletrue{inappendix}}{}{\errmessage{failed to patch}}
\apptocmd{\subappendices}{\toggletrue{inappendix}}{}{\errmessage{failed to patch}}


\def\mytitle{Beyond calibration: estimating the
grouping loss of modern neural networks}
\title{\mytitle}

%



\author{Alexandre Perez-Lebel, \, Marine Le Morvan, \, Gaël Varoquaux\\
Soda project team, Inria Saclay, Palaiseau, France\\
\texttt{\{alexandre.perez,marine.le-morvan,gael.varoquaux\}@inria.fr}
}

%

\iclrfinalcopy 

\begin{document}

\doparttoc  
\faketableofcontents 

\maketitle

\begin{abstract}
The ability to ensure that a classifier gives reliable confidence scores is essential to ensure informed decision-making. To this end, recent work has focused on miscalibration, \ie{}, the over or under confidence of model scores. Yet calibration is not enough: even a perfectly calibrated classifier with the best possible accuracy can have confidence scores that are far from the true posterior probabilities. This is due to the grouping loss, created by samples with the same confidence scores but different true posterior probabilities. \emph{Proper scoring rule} theory shows that given the calibration loss, the missing piece to characterize individual errors is the \emph{grouping loss}. While there are many estimators of the calibration loss, none exists for the grouping loss in standard settings.
Here, we propose an estimator to approximate the grouping loss.
We show that modern neural network architectures in vision and NLP exhibit grouping loss, notably in distribution shifts settings, which highlights the importance of pre-production validation.
\end{abstract}

\section{Introduction}
\label{sec:introduction}


Validating the compliance of a model to a predefined set of specifications is important to control operational risks related to performance but also trustworthiness, fairness or robustness to varying operating conditions. It often requires that probability estimates capture the actual uncertainty of the prediction, \ie{} are close to the true posterior probabilities. Indeed, many situations call for probability estimates rather than just a discriminant classifier. Probability estimates are needed when the decision is left to a human decision maker, when the model needs to avoid making decisions if they are too uncertain, when the context of model deployment is unknown at training time, etc.

To evaluate probabilistic predictions, statistics and decision theory have put forward \emph{proper scoring rules} \citep{Dawid2014a,gneiting2007probabilistic}, such as the Brier or the log-loss. Strictly proper scoring rules are minimized when a model produces the true posterior probabilities, which make them a valuable tool for comparing models and selecting those with the best estimated probabilities \citep[\eg{}][]{Dawid2014}.
What they do not provide though is a means of validating whether the best estimated probabilities are good enough to be put into production, or whether further effort is needed to improve the model. Indeed, proper scores compound the \emph{irreducible loss} --due to the inherent randomness of a problem, \ie{} the aleatoric uncertainty-- and the \emph{epistemic loss} --which measures how far a model is from the best possible one. For example, a classifier with a Brier score of 0.15 could have optimal estimated probabilities (irreducible loss close to 0.15) or poor ones (irreducible loss close to 0).\looseness=-1

Calibration errors are another tool to evaluate probabilistic predictions, and measuring them is an active research topic in the machine learning community. \citep{ Kumar2019verified,minderer2021revisiting,roelofs2022mitigating}. The calibration error is in fact a component of proper scoring rules \citep{Brocker2009Reliability,Kull2015Novel}: it measures whether among all samples to which a calibrated classifier gave the same confidence score, on average, a fraction equal to the confidence score is positive. Importantly, the calibration error can be evaluated efficiently as it does not require access to the ground truth probabilities, but solely to their calibrated version. Calibration is however an incomplete characterization of predictive uncertainty. It measures an aggregated error that is blind to potential individual errors compensating each other. For example, among a group of individuals to which a calibrated cancer-risk classifier assigns a probability of 0.6, a fraction of 60\% actually has cancer. But a subgroup of them could be composed of 100\% cancer patients while another would only contain 20\% of cancer patients.\looseness=-1





%
In general, estimating the true posterior probabilities or obtaining individual guarantees is impossible \citep{Vovk2005a, Barber2020}. Recent works have thus attempted to refine guarantees on uncertainty estimates at an intermediary subgroup level. In particular, \citet{Hebert-Johnson2018} has introduced the notion of multicalibration, generalizing the notion of calibration within groups studied in fairness \citep{Kleinberg2016} to every efficiently-identifiable subgroup. \citet{FoygelBarber2019, Barber2020} defines  subgroups-based coverage guarantees which lie in between the coarse marginal coverage and the impossible conditional coverage guarantees.
In a similar vein, we study the remaining term measuring the discrepancy between the calibrated probabilities and the unknown true posterior probabilities
\citep{Kull2015Novel}, \ie{} the
grouping loss, for which no estimation procedure exists to date. In particular:
%
\begin{itemize}[topsep=0pt,itemsep=0ex,partopsep=0ex,parsep=0ex,leftmargin=2ex]
    \item We provide a new decomposition of the grouping loss into explained and residual components, together with a debiased estimator of the explained component as a lower bound (\autoref{sec:characterizing}).
    \item We demonstrate on simulations that the proposed estimator can provide tight lower-bounds on the grouping loss (\autoref{sec:exp_simu}).
    \item We evidence for the first time the presence of grouping loss on \emph{pre-trained} vision and language architectures, notably in distribution shifts settings (\autoref{sec:exp_nn}).
\end{itemize}

\section{Calibration is not enough}
\label{sec:cal_not_enough}

Calibration can be understood with a broad conceptual meaning of alignment of measures and statistical estimates \citep{osborne1991statistical}. However, in the context of decision theory or classifiers, the following definitions are used \citep{foster1998asymptotic,gneiting2007probabilistic,Kull2015Novel}:
\begin{description}[topsep=0pt,itemsep=0ex,partopsep=0ex,parsep=0ex]
  \item[\textbf{True posterior probabilities:}] $Q := \PPG{Y = 1}{X}$,
  \item[\textbf{Confidence scores:}] $S := f(X)$ score output by a classifier,
\item[\textbf{Calibrated scores:}] $C := \PPG{Y = 1}{S} = \espk{Q}{S}$, average true posterior probabilities for a score $S$.\looseness=-1
\end{description}

\paragraph{Confusion about calibration} A common confusion is to mistake confidence scores of a calibrated classifier with true posterior probabilities and think that a calibrated classifier outputs true posterior probabilities, which is false. We identified three main sources of confusion in the literature --see \autoref{sec:confusing_statements} for specific quotes. First, the vocabulary used sometimes leaves room for ambiguity, \eg{}, \emph{posterior probabilities} may refer to confidence scores or to the true posterior probabilities without further specifications. Second, plain-English definitions of calibration are sometimes incorrect, defining calibrated scores as the true posterior probabilities. Lastly, even when everything is correctly defined, it is sometimes implicitly supposed that true posterior probabilities are close to the calibrated scores. While it may be true in some cases, equating the two induces misconceptions.

\paragraph{Calibration with good accuracy does not imply good individual confidences}
\label{sec:counter-ex}
It is tempting to think that a calibrated classifier with optimal accuracy should provide confidence scores
\setlength\intextsep{0pt}
\begin{wrapfigure}{r}{0.35\textwidth}\vspace*{2mm}
  \centering
  \includegraphics[width=\linewidth]{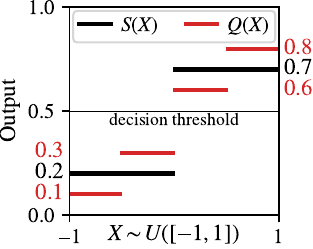}
  \vspace*{-5mm}
  \caption{A calibrated binary classifier with optimal accuracy and confidence scores $S(X)$ everywhere different from the true posterior probabilities $Q(X)$.
  \label{fig:example}}%
\end{wrapfigure}
close to the true posterior probabilities.
However, caution is necessary: \autoref{fig:example} shows a simple counterexample. The classifier presented gives an optimal accuracy as its confidence scores are always on the same side of the decision threshold as the true posterior probabilities. It is moreover calibrated, as for a given score $s$ (either 0.2 or 0.7 here), the expectation of $Q$ over the region where the confidence score is $s$ is actually equal to $s$. Yet, the confidence scores are not equal to $Q$ as $Q$ displays variance over regions of constant scores. This variance can be made as large as desired as long as both $Q$ and $S$ stay on the same side of the decision threshold to preserve accuracy. The flaws of a perfectly calibrated classifier that always predicts the same score are typically due to variations of the true posterior probabilities over constant confidence scores. As we formalize below, such variations are captured by the grouping loss \citep{Kull2015Novel}.
\autoref{sec:app:examples} provides a more realistic variant of this example based on the output of a neural network.

\section{Theoretical background}
\label{sec:theory}
\paragraph{Notations} Let $(X, Y) \in \mathcal X \times \mathcal Y$ be
jointly distributed random variables describing the features and labels
of a K-class classification task. Let $e_k$ be the one-hot vector of size
$K$ with its $k^{th}$ entry equal to one. The label space $\mathcal Y =
\cbr{e_1, \dots, e_K}$ is the set of all one-hot vectors of size $K$.
We assume that labels are drawn according to the true posterior distribution $Q = (Q_1, \dots, Q_K) \in \Delta_K$ where $Q_k:= P(Y=e_k|X)$ and $\Delta_K$ is the probability simplex $\Delta_K = \cbr{(p_1, \dots, p_K) \in \sqb{0, 1}^K: \sum_k p_k = 1}$. We consider a probabilistic classifier $f$ giving scores $S = f(X)$ with $S=\br{S_1, \dots, S_K} \in \Delta_K$. Note that $S$ and $Q$ are random vectors since they depend on $X$. This section introduces the formal definition of the grouping loss, which uses the concepts of calibrated scores as well as scoring rules.

\subsection{Calibration in a multi-class setting}%
In  multi-class settings various definitions of calibration give
different trade offs between control stringency and practical utility
\citep{Vaicenavicius2019Evaluating,Kull2019Beyond}.
The strongest definition controls the proportion of positives for groups
of samples with the same \emph{vector} of scores $S$.


\begin{definition}
    \label{def1}
    A probabilistic classifier giving scores $s = (s_1, \dots, s_k)$ is \textbf{jointly calibrated} if
    among all instances getting score $s$, the class probabilities are actually equal to $s$:
    \begin{flalign}
	\text{Calibration} &&
        \PPG{Y = e_k}{S=s} = s_k \quad \text{for k = 1, \dots, K}. &&
    \end{flalign}
\end{definition}

The score $S$ being a vector of size $K$ the number of classes, estimating the probability of $Y$ conditioned on $S$ is a difficult task that requires many samples. A weaker notion of multi-class calibration, introduced in \cite{Zadrozny2002Transforming}, requires calibration for each class marginally:

\begin{definition}
    \label{def2}
    A probabilistic classifier giving scores $s = (s_1, \dots, s_k)$ is \textbf{classwise-calibrated} if
    among all instances getting score $s_k$, the probability of class $k$ is actually equal to $s_k$:
    \begin{flalign}
	\text{Classwise calibration} &&
        \PPG{Y = e_k}{S_k=s_k} = s_k \quad \text{for k = 1, \dots, K}. &&
    \end{flalign}
\end{definition}

As classwise calibration can still be challenging to estimate when the
number of samples per class is too small, an even weaker definition is used in the machine learning community \citep{Guo2017Calibration}.

\begin{definition}
    \label{def3}
    A probabilistic classifier giving scores $s = (s_1, \dots, s_k)$ is \textbf{top-label-calibrated} if
    among all instances for which the confidence score of the predicted class is $s$, the probability that the predicted class is the correct one is $s$:
    \begin{flalign}
	\text{Top-label calibration} &&
        \PPG{Y = e_{\mathrm{argmax(s)}}}{\max(S) = s} = s. &&
    \end{flalign}
\end{definition}
Top-label calibration simplifies the problem by reducing it to a binary problem. However, it has an important limitation \citep{Vaicenavicius2019Evaluating}: as it only accounts for the confidence of the predicted class, it does not tell whether smaller probabilities are also calibrated.

\subsection{Proper scoring rules and their decomposition}


\paragraph{Scoring rules}
Scoring rules measure how well an estimated probability vector $S$ explains the observed labels $Y$. 
The two most widely used scoring rules are the log-loss and Brier score:
\begin{flalign}
\text{Log-loss}:\quad\phi^{\mathrm{LL}}(S, Y) :=- \sum_{k=1}^{K} Y_k \log S_k
&&
\text{Brier score}:\quad \phi^{\mathrm{BS}}(S, Y) :=\sum_{k=1}^{K}\left(S_{k}-Y_{k}\right)^{2}
\quad
\end{flalign}
Scoring rules are defined per sample, and the score over a dataset
is obtained by averaging over samples. More generally, the \emph{expected score} for rule $\phi$ of the estimated probability vector $S$ with regards to the class label $Y$ drawn according to $Q$ is given by $\mathrm{s_{\phi}}(S, Q)\,:=\,\mathbb{E}_{Y \sim Q} \sqb{\phi(S, Y)}$. Proper scoring rules decompositions have been introduced in terms of their divergences rather than their scores. The \emph{divergence} between probability vectors $S$ and $Q$ is then defined as $\mathrm{d_{\phi}}(S, Q):=\mathrm{s_{\phi}}(S, Q)-\mathrm{s_{\phi}}(Q, Q)$.
The divergences for the Brier score and the log-loss read:
\begin{flalign}
\text{Log-loss}:\quad d^{\mathrm{LL}}(S, Q) := \sum_{k=1}^{K} Q_k \log \frac{Q_k}{S_k}
&&
\text{Brier score}:\quad d^{\mathrm{BS}}(S, Q) :=\sum_{k=1}^{K}\left(S_{k}-Q_{k}\right)^{2}
\quad
\end{flalign}
Minimizing the Brier score \emph{in expectation} thus amounts to minimizing the mean squared error between $S$ and the unknown $Q$. A scoring rule is said
\emph{strictly proper} if its divergence is non-negative and
$\mathrm{d_{\phi}}(S, Q) = 0$ implies $S = Q$. Both the log-loss and Brier score are strictly proper.

\paragraph{Scoring rules decomposition} Let $C$ be the calibrated scores in the sense of \autoref{def1}, the strongest one \ie{}, $C_k = P(Y=e_k|S=s)$ for $k=1, \dots, K$. The divergence of strictly proper scoring rules can be decomposed as \citep{Kull2015Novel}:
\begin{equation}
    \label{eq:decomp}
    \E \sqb{d_{\phi}(S, Y)} \;=\; \underbrace{\E \sqb{d_{\phi}(S, C)}}_{\text{Calibration: CL}} \,+\, \underbrace{\E \sqb{d_{\phi}(C, Q)}}_{\text{Grouping: GL}} \,+\, \underbrace{\E \sqb{d_{\phi}(Q, Y)}}_{\text{Irreducible: IL}}
\end{equation}
where the expectation is taken over $Y \sim Q$ and $X$. CL is the
\emph{calibration loss}. IL is the \emph{irreducible loss} which stems from
the fact that one point may not have a deterministic label, making
perfect predictions impossible. GL is the \emph{grouping loss}.
Intuitively, while the calibration loss captures the
deviation of the expected score in a bin vs the expected posterior
probabilities, the grouping loss captures variations of the true
posterior probabilities around their expectation. Together calibration and grouping form the epistemic loss, capturing intrinsic the randomness of the best possible predictor. The scoring rule decomposition~(\ref{eq:decomp}) holds for top-label calibration (\autoref{def3}) as it can be reduced to a binary problem. In the case of classwise calibration, the extension is not straightforward in the general case but we prove in \autoref{prop:decomposition} that it holds for the Brier score and the log-loss.

\section{Characterization of the grouping loss}%

In this section, we focus for simplicity on all settings where the
calibrated scores can be expressed as $C_k =\E \sqb{Y_k | S}$, which
includes binary classification as well as the multi-class setting with
joint or top-label calibration. For classwise calibration,
\autoref{sec:classwise-extension} shows that all the results presented in this section also hold for the Brier score and log-loss.


\label{sec:characterizing}%

\subsection{Rewriting the grouping loss as a form of variance}%
To shed light on the grouping loss, we rewrite it using $f$-variances:

\begin{definition}[$f$-variance]
  Let $U,V : \Omega \to \R^d$ be two random variables defined on the same probability space, and function $f: \R^d \to \R$. Assuming the required expectations exist, the $f$-variance of $U$ given $V$ is:
  \setlength{\abovedisplayskip}{1mm}
  \setlength{\belowdisplayskip}{0mm}
  \begin{equation*}
    \vark[f]{U}{V} := \espk{f(U)}{V} - f(\espk{U}{V}).
  \end{equation*}
\end{definition}
The $f$-variance corresponds to the Jensen gap. It is positive by Jensen's
inequality when $f$ is convex. Beyond positivity, it can be seen as an
extension of the variance as using the square function for $f$ recovers
the traditional notion of variance. 

\begin{restatable}[The grouping loss as an $h$-variance]{lemma}{evariance}
   \label{prop:e-variance}
   Let $h$ be the negative entropy of the scoring rule $\phi$, \ie{} $h : p \mapsto -s_{\phi}(p, p)$.
   The grouping loss $\GL$ of the classifier $S$ with calibrated scores $C = \espk{Q}{S}$ and scoring rule $\phi$ writes:
   \begin{equation}
    \underbrace{\esp{d_{\phi}(C,Q)}}_{\GL(S)}\;=\;\esp{\vark[h]{Q}{S}}
    \label{eq:gl-var}
   \end{equation}
\end{restatable}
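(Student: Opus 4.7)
The plan is to unfold the definition of the divergence $d_\phi(C,Q) = s_\phi(C,Q) - s_\phi(Q,Q)$, take a conditional expectation given $S$, and then exploit two facts that together recover the $h$-variance structure: (i) the expected scoring rule $s_\phi(\cdot,\cdot)$ is linear in its second (distribution) argument, and (ii) $C$ is by definition $\espk{Q}{S}$, hence a deterministic function of $S$.

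First I would write, using $Y_k = \1{Y=e_k}$,
\begin{equation*}
 s_\phi(p, Q) \;=\; \E_{Y\sim Q}\bra{\phi(p,Y)} \;=\; \sum_{k=1}^K Q_k\, \phi(p, e_k),
\end{equation*}
which shows that $Q \mapsto s_\phi(p,Q)$ is affine for each fixed $p$. Next, conditioning on $S$ and using that $C$ is a function of $S$,
\begin{equation*}
 \espk{s_\phi(C,Q)}{S} \;=\; \sum_{k=1}^K \phi(C,e_k)\, \espk{Q_k}{S} \;=\; \sum_{k=1}^K \phi(C,e_k)\, C_k \;=\; s_\phi(C,C) \;=\; -h(C).
\end{equation*}
Meanwhile, by the tower property,
\begin{equation*}
 \espk{s_\phi(Q,Q)}{S} \;=\; -\espk{h(Q)}{S}.
\end{equation*}

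Subtracting the two identities yields
\begin{equation*}
 \espk{d_\phi(C,Q)}{S} \;=\; \espk{h(Q)}{S} - h(C) \;=\; \espk{h(Q)}{S} - h\!\br{\espk{Q}{S}} \;=\; \vark[h]{Q}{S},
\end{equation*}
and taking expectation over $S$ and applying the tower property once more gives $\E\sqb{d_\phi(C,Q)} = \E\sqb{\vark[h]{Q}{S}}$, which is the claim.

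The only step that requires any care is the conditional linearity identity $\espk{s_\phi(C,Q)}{S} = s_\phi(C,C)$: it needs both that $C$ is $\sigma(S)$-measurable (so it can be pulled out of the conditional expectation as a constant) and that $s_\phi$ is linear in its second argument (so the conditional expectation can be moved inside onto $Q$). Once that is made explicit, the rest is just matching the definitions of $h$ and of the $f$-variance, so there is no genuine obstacle; the result works for any strictly proper scoring rule because the argument only uses properness implicitly through the definition $h = -s_\phi(\cdot,\cdot)$.
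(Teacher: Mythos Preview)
Your proof is correct and follows essentially the same route as the paper: expand $d_\phi(C,Q)=s_\phi(C,Q)-s_\phi(Q,Q)$, use that $s_\phi(p,\cdot)$ is linear in its second argument together with the $S$-measurability of $C=\espk{Q}{S}$ to get $\espk{s_\phi(C,Q)}{S}=-h(C)$, and then match the definition of the $h$-variance. One small remark on your closing comment: properness plays no role here, not even implicitly through the definition of $h$; the identity holds for any scoring rule $\phi$ for which the relevant expectations exist.
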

\vspace{-0.5em}
The proof is given in \autoref{prop:e-variance:proof}. In other words, the grouping loss associated to a scoring rule $\phi$ is an $h$-variance of the true posterior probability $Q$ around the average scores $C$ on groups of same level confidence $S$ (\autoref{eq:gl-var}). In particular for the Brier score, the $h$-variance is a classical variance. It measures discrepancy between $Q$ and $C$ with a squared norm: $\vark[h]{Q}{S} = \espk{\|Q - C\|^2}{S}$. For the log-loss, it is a Kullback-Leibler divergence: $\vark[h]{Q}{S} = \espk{\kl{Q}{C}}{S}$. These expressions highlight two challenges in estimating the grouping
loss. First, it relies on the true posterior probabilities $Q$, which we
do not have access to. Second, it involves a conditioning on the
confidence scores $S$, which are difficult to estimate for continuous scores. 

\subsection{Grouping loss decomposition and lower-bound}


As an $h$-variance of $Q$ given $S$, evaluating the grouping loss
requires access to $Q(X)$ for any point $X$. Unfortunately  $Q(X)$ is difficult to estimate, except in
special settings -- \eg{} multiple labels per sample as in
\citet{Mimori2021Diagnostic}. In fact, the scores $S$ of a classifier are
generally one's best estimate of $Q$, and the whole point of the grouping
loss is to quantify how far this best estimate is from the unknown oracle
$Q$.
We show that it is nevertheless possible to
estimate a lower bound on the grouping loss. On the level set where
a classifier score is $S$, it is indeed possible to estimate the average of $Q$ on regions of the feature space. Since by definition $Q$ is non-constant on the level set of a classifier with non-zero grouping loss, it allows to capture 
part of the grouping loss (\autoref{fig:clusters}). Intra-region variance remains uncaptured but can be reduced by choosing smarter and more numerous regions in the partition of the feature space. \autoref{th:GL_decomp} formalizes this intuition:

\begin{figure}[t]
  \setlength\intextsep{0pt}
  \vspace*{0.4cm}%
  \begin{minipage}{0.55\textwidth}
    \caption{\textbf{Intuition.}
    In the feature space $\mathcal{X}$, the level set of confidence $S=0.7$ displays $\espk{Q}{S} = 0.7$,
    which we expect from a calibrated classifier. However, a partition of the level set into 2 regions $\Ccal_1$ and $\Ccal_2$ reveals that $\espk{Q}{S,\Ccal_1}=0.6$ while $\espk{Q}{S,\Ccal_2}=0.8$, suggesting a high grouping loss. Intra-region variances $\vark[h]{Q}{S,\Ccal_1}$ and $\vark[h]{Q}{S,\Ccal_2}$ remain uncaptured.
    \label{fig:clusters}
    }
  \end{minipage}%
  \hfill%
  \begin{minipage}{0.38\textwidth}
    \vspace*{-0.3cm}
    \includegraphics[width=\linewidth]{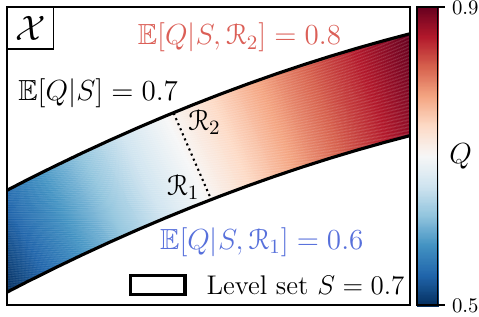}%
  \end{minipage}%
  \vspace*{-5mm}
\end{figure}

\begin{restatable}[Grouping loss decomposition]{theorem}{gldecomp}
    \label{th:GL_decomp}

   Let $\Ccal: \Xc \to \N$ be a partition of the feature space.
   It holds that:
   \begin{equation}
    \GL(S)
    =\;\underbrace{\esp{\vark[h]{\espk{Q}{S,\Ccal}}{S}}}_{\glexp(S)}
    \;+\;\underbrace{\esp{\vark[h]{Q}{S, \Ccal}}}_{\glres(S)}
    \label{eq:gl-exp-res}
    \end{equation}
    \begin{flalign}
      \text{Moreover if the scoring rule is proper, then:} &\qquad\qquad\GL(S) \geq \glexp(S) \geq 0.&
    \end{flalign}
\end{restatable}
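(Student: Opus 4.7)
The plan is to chain two ingredients: Lemma~\ref{prop:e-variance}, which expresses $\GL(S)$ as an $h$-variance $\esp{\vark[h]{Q}{S}}$, and a conditional-variance / tower-property identity applied to the finer conditioning $(S,\Ccal)$. The identity to target is the pointwise (in $S$) decomposition
\begin{equation*}
\vark[h]{Q}{S}\;=\;\vark[h]{\espk{Q}{S,\Ccal}}{S}\;+\;\espk{\vark[h]{Q}{S,\Ccal}}{S},
\end{equation*}
which, after taking expectation in $S$, is exactly the claimed split into $\glexp(S)+\glres(S)$.

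To establish this identity, I would first apply the tower property to write $\espk{h(Q)}{S}=\espk{\espk{h(Q)}{S,\Ccal}}{S}$. Next, unfolding the definition of the $h$-variance inside the inner conditioning gives $\espk{h(Q)}{S,\Ccal}=h\!\pr{\espk{Q}{S,\Ccal}}+\vark[h]{Q}{S,\Ccal}$. Substituting back and subtracting $h(\espk{Q}{S})$ yields
\begin{equation*}
\vark[h]{Q}{S}\;=\;\Bigl(\espk{h\!\pr{\espk{Q}{S,\Ccal}}}{S}-h(\espk{Q}{S})\Bigr)+\espk{\vark[h]{Q}{S,\Ccal}}{S}.
\end{equation*}
The key reconciliation step is to recognize, again via the tower property, that $\espk{Q}{S}=\espk{\espk{Q}{S,\Ccal}}{S}$, so the parenthesized term is precisely $\vark[h]{\espk{Q}{S,\Ccal}}{S}$. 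Taking expectation over $S$ and using Lemma~\ref{prop:e-variance} gives the decomposition \eqref{eq:gl-exp-res}.

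For the inequality $\GL(S)\geq\glexp(S)\geq 0$, I would invoke the classical fact that for any strictly proper scoring rule the generalized entropy $p\mapsto s_\phi(p,p)$ is concave (see e.g.\ \citet{gneiting2007probabilistic}), hence $h=-s_\phi(\cdot,\cdot)$ is convex. Jensen's inequality then makes every $h$-variance non-negative, giving $\glres(S)\geq 0$ and $\glexp(S)\geq 0$; combined with \eqref{eq:gl-exp-res} this yields $\GL(S)\geq\glexp(S)$.

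The main obstacle is not algebraic but notational: the identity $\espk{Q}{S}=\espk{\espk{Q}{S,\Ccal}}{S}$ requires $\Ccal$ to be a deterministic (measurable) function of $X$, so that conditioning on $(S,\Ccal)$ refines the $\sigma$-algebra generated by $S$ alone. Since $\Ccal$ is defined as a partition of $\Xc$, this holds, but it is worth stating explicitly so the tower property applies; the rest is then a straightforward rearrangement.
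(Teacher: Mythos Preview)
Your proof is correct and follows essentially the same approach as the paper: the paper packages the identity $\vark[h]{Q}{S}=\vark[h]{\espk{Q}{S,\Ccal}}{S}+\espk{\vark[h]{Q}{S,\Ccal}}{S}$ as a separate ``law of total $h$-variance'' lemma (proved exactly via the tower-property manipulations you spell out) and then applies it with $(\Ccal,Q,S)$, while you derive the same identity inline. The inequality part is identical, invoking convexity of $h$ for proper scoring rules and Jensen's inequality.
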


\autoref{th:GL_decomp:proof} gives the proof by showing that the law
of total variance is also valid for the $h$-variance, which allows to
decompose the grouping loss into explained and residual terms. $\glexp$
quantifies the $h$-variance captured through the partition $\Ccal$, \ie{}
coarse-grained $h$-variance reflecting between-region variations of $Q$, while $\glres$ captures the remaining intra-region $h$-variance.
Due to the positivity of $\glres$, $\glexp$ is a lower-bound of
the grouping loss that ranges between $0$ and $\GL$ depending on how
much $h$-variance the partition captures. 
%
Importantly, while $\vark[h]{Q}{S, \Ccal}$ cannot be estimated because the oracle $Q$ is unknown, it is possible to estimate $\espk{Q}{S,\Ccal}$ and thus $\glexp$.


\subsection{Controlling the grouping loss induced by binning classifier
scores $S$}%

The grouping loss as well as $\glexp$ involve a conditioning on the confidence scores $S$,
which cannot be estimated by mere counting when the scores are continuous.
To overcome this difficulty, standard practice in calibration approximates the conditional expectation using a binning strategy: the classifier scores are binned into a finite number of values (\autoref{def:binclf}).


\begin{minipage}{\linewidth}
\begin{wrapfigure}[3]{r}{.25\linewidth}%
    \vspace*{2mm}
    \centerline{\includegraphics[width=.9\linewidth]{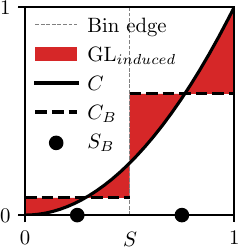}}%
    \vspace*{-2mm}
    \caption{\textbf{Binning inflates the grouping loss.} 
    \label{fig:binning}}%
\end{wrapfigure}
\begin{minipage}[t]{\linewidth}
\begin{definition}[Binned classifier]
   \label{def:binclf}
   Let $S : \Xc \to \Delta_K$ be a classifier. Let $\Bc := \{\Bc_j\}_{1\leq j \leq J}$ be a partition of $\Delta_K$. The binned version of $S$ outputs the average of $S$ on each bin:
   \begin{equation}
       S_B \colon
       \left|
       \begin{array}{l}
           \Xc \to \Sc\\
           x \mapsto \esp{S|S \in \Bc_j} \quad \text{where $\Bc_j$ is the bin $S(x)$ falls into.}
       \end{array}
       \right.
   \end{equation}
   The \textbf{binned calibrated scores} are defined by:\\
   \hspace*{1em} $C_B := \PPG{Y = 1}{S_B} = \espk{Q}{S_B} = \espk{C}{S_B}$.
\end{definition}
\end{minipage}%
\smallskip

This is the approach taken by the popular Expected Calibration Error (ECE) \citep{Naeini2015Obtaining}.
However, the loss estimated for a binned classifier deviates from that of
the original one. In particular, binning biases the calibration loss
downwards \citep{Kumar2019verified}. Here we show that on the contrary it
creates an upwards bias for the grouping loss. Binning a classifier $S$ into $S_B$ boils down to merging the level sets $S$ into a finite number of larger level sets of confidence score $S_B$. For example in \autoref{fig:binning}, all level sets with $S \in [0.5, 1]$ are merged into one level set of confidence $S_B = 0.75$, which artificially inflates the variance of $Q$ in each bin. This intuition is formalized in \autoref{prop:glbin}
\end{minipage}%
\begin{restatable}[Binning-induced grouping loss]{proposition}{bininduced}
      \label{prop:glbin}
      The grouping loss of the binned classifier $\GL(S_B)$ deviates from that of the original classifier $\GL(S)$ by an induced grouping loss $\GLind(S, S_B)$:
      \begin{align}
          \underbrace{\esp{\vark[h]{Q}{S_B}}}_{\GL(S_B)}
          & \; = \; \underbrace{\esp{\vark[h]{Q}{S}}}_{\GL(S)} \; + \; \underbrace{\esp{\vark[h]{C}{S_B}}}_{\GLind(S, S_B)}
      \end{align}
      Moreover, if the scoring rule is proper: $\qquad\qquad\GLind(S, S_B) \geq 0$.
\end{restatable}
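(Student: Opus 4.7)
The plan is to prove the decomposition by applying a conditional-expectation version of the law of total $h$-variance to the pair $(S, S_B)$, exploiting that $S_B$ is a coarsening of $S$.

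First I would record two structural facts. Because $S_B$ is a deterministic (bin-membership) function of $S$, we have the $\sigma$-algebra inclusion $\sigma(S_B) \subseteq \sigma(S)$, hence the tower property $\espk{\,\cdot\,}{S_B} = \espk{\espk{\,\cdot\,}{S}}{S_B}$. From this it follows that the binned calibrated scores can be expressed equivalently in terms of $C$:
\begin{equation*}
C_B \;=\; \espk{Q}{S_B} \;=\; \espk{\espk{Q}{S}}{S_B} \;=\; \espk{C}{S_B}.
\end{equation*}

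Next I would massage $\vark[h]{Q}{S_B}$. Starting from $\espk{h(Q)}{S}=\vark[h]{Q}{S}+h(C)$ by the definition of $h$-variance, I take conditional expectation given $S_B$ (tower) to obtain
\begin{equation*}
\espk{h(Q)}{S_B} \;=\; \espk{\vark[h]{Q}{S}}{S_B} \;+\; \espk{h(C)}{S_B}.
\end{equation*}
Subtracting $h(C_B)$ from both sides and using $C_B=\espk{C}{S_B}$ to identify $\espk{h(C)}{S_B}-h(C_B)=\vark[h]{C}{S_B}$ yields the pointwise decomposition
\begin{equation*}
\vark[h]{Q}{S_B} \;=\; \espk{\vark[h]{Q}{S}}{S_B} \;+\; \vark[h]{C}{S_B}.
\end{equation*}
Taking the outer expectation and applying the tower property once more to the first term gives exactly the stated identity.

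For the nonnegativity claim, I would invoke the standard fact from scoring-rule theory (e.g.\ \cite{gneiting2007probabilistic}) that for a proper scoring rule $\phi$, the generalized entropy $p\mapsto s_\phi(p,p)$ is concave, so $h=-s_\phi(\cdot,\cdot)$ is convex. Conditional Jensen then gives $\vark[h]{C}{S_B}\geq 0$ pointwise, and $\GLind(S,S_B)\geq 0$ follows by taking expectations. The only step requiring care is the measurability argument justifying the tower property; the rest is bookkeeping.
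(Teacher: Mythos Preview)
Your proof is correct and follows essentially the same route as the paper: both establish the pointwise identity $\vark[h]{Q}{S_B}=\espk{\vark[h]{Q}{S}}{S_B}+\vark[h]{C}{S_B}$ via the tower property (the paper packages this as an application of its law of total $h$-variance, \autoref{lemma:law-tot-evar}, followed by the simplification that $S_B$ is $S$-measurable, whereas you derive it directly from the definition), and then take the outer expectation. The nonnegativity argument via convexity of $h$ for proper scoring rules and Jensen is likewise the one the paper uses.
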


\autoref{prop:glbin:proof} gives the proof. \autoref{prop:glbin} shows that the difference between the grouping loss of the binned and original classifier is given by the $h$-variance of the original calibrated scores in a bin. This result provides an expression for $\GLind$ which can then be estimated as shown in \autoref{ss:estimation}.


\begin{remark}
    Interestingly, the binning-induced grouping and calibration losses partly compensate each other (\autoref{cor:bounds} in \autoref{sec:clglinduced}).
\end{remark}



Applying the decomposition of \autoref{th:GL_decomp} to the binned classifier $S_B$ and accounting for binning using \autoref{prop:glbin}, we obtain a new decomposition of the grouping loss:
\begin{restatable}[Explained grouping loss accounting for binning]{proposition}{binaccounting}
  \label{prop:gllb-binning}
  \begin{equation}
    \GL(S)=\glexp(S_B)-\GLind(S, S_B)+\glres(S_B)
  \end{equation}
  \begin{flalign}
    \text{If the scoring rule is proper, then:} &\quad\qquad\GL(S) \geq \underbrace{\glexp(S_B) - \GLind(S, S_B)}_{\gllb(S, S_B)}.&
    \label{eq:lb-binning}
  \end{flalign}
\end{restatable}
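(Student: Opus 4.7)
The plan is to chain together the two previous results, \autoref{th:GL_decomp} and \autoref{prop:glbin}, applied to the binned classifier $S_B$. Since $S_B$ is itself a classifier with well-defined calibrated scores $C_B = \espk{Q}{S_B}$, \autoref{th:GL_decomp} applies verbatim with $S$ replaced by $S_B$, for any partition $\Ccal$ of $\Xc$. This gives the identity
\begin{equation*}
\GL(S_B) \;=\; \glexp(S_B) \;+\; \glres(S_B).
\end{equation*}

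Next, I would invoke \autoref{prop:glbin}, which relates the grouping loss of the binned classifier to that of the original one via $\GL(S_B) = \GL(S) + \GLind(S, S_B)$. Substituting this into the previous identity and isolating $\GL(S)$ yields the announced equality
\begin{equation*}
\GL(S) \;=\; \glexp(S_B) \;-\; \GLind(S, S_B) \;+\; \glres(S_B).
\end{equation*}

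Finally, for the inequality, it suffices to observe that when $\phi$ is a proper scoring rule, $h = -s_\phi(\cdot,\cdot)$ is convex, so Jensen's inequality ensures that the $h$-variance $\vark[h]{Q}{S_B, \Ccal}$ is nonnegative pointwise, hence $\glres(S_B) = \esp{\vark[h]{Q}{S_B, \Ccal}} \geq 0$. Dropping this nonnegative term gives $\GL(S) \geq \glexp(S_B) - \GLind(S, S_B) = \gllb(S, S_B)$, as required.

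I do not anticipate a genuine obstacle here, since both ingredients have already been established: the decomposition step is a direct reapplication of \autoref{th:GL_decomp} to $S_B$, and the binning correction is exactly the content of \autoref{prop:glbin}. The only subtlety worth double-checking in the write-up is that the partition $\Ccal$ used in the residual/explained decomposition is indeed the same one used implicitly in $\glexp(S_B)$ and $\glres(S_B)$, and that the calibrated scores of $S_B$ coincide with $C_B$ as given in \autoref{def:binclf}, so that \autoref{th:GL_decomp} can be invoked without modification.
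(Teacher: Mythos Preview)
Your proposal is correct and follows essentially the same route as the paper: apply \autoref{th:GL_decomp} to $S_B$ to get $\GL(S_B)=\glexp(S_B)+\glres(S_B)$, combine with \autoref{prop:glbin} to isolate $\GL(S)$, and drop the nonnegative residual when the scoring rule is proper. The only cosmetic difference is the order in which the two ingredients are invoked.
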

The proof is given in \autoref{prop:gllb-binning:proof}. Importantly, contrary to the grouping loss, both terms in the lower-bound (\autoref{eq:lb-binning}) can be estimated. In the remainder of this paper, we will be interested in the estimation and optimization of the lower bound $\gllb(S, S_B)$.

\subsection{Grouping loss estimation}
\label{ss:estimation}
We now derive a grouping-loss estimation procedure by focusing on each of its components in turn: $\glexp(S_B)$ and $\GLind(S, S_B)$.

\paragraph{A debiased estimator for the explained grouping loss $\glexp(S_B)$}
The most natural estimator for the explained grouping loss is a plugin estimator, replacing $\espk{Q}{S,\Ccal}$ by the empirical means of $Y$ over each region. It is nonetheless generally biased. We show below that in the case of the Brier scoring rule, a direct empirical estimation of
$\glexp$ on the partition is biased upwards (cf \autoref{ss:proof_bias_upwards}), and propose a debiased estimator.


\begin{restatable}[Debiased estimator for the Brier score]{proposition}{debiasing}

\label{prop:debiasing}
    For all class $k \in \cbr{1, \dots, K}$ and bin $s \in \Scal$, let $n^{(s, k)}$ (resp. $n_j^{(s, k)}$) be the number of samples belonging to level set $\Rcal^{(s)}$ (resp. region $\Rcal_j^{(s)}$). We define the empirical average of $Y$ over these regions as:
    \begin{equation*}
        \hat \mu_j^{(s, k)} := \frac{1}{n_j^{(s, k)}}\sum_{i: X^{(i)} \in \Rcal_j^{(s)}} Y_k^{(i)}
        \quad \text{and} \quad
        \hat c^{(s, k)} = \frac{1}{n^{(s, k)}}\sum_{i: X^{(i)} \in \Rcal^{(s)}} Y_k^{(i)}
    \end{equation*}
    \vspace{-1em}
    \begin{flalign*}
      \text{The debiased estimator of $\glexp$ is:~~~}
      &\;
      \displaystyle \glexphat(S_B) = \sum_{k=1}^K \sum_{s \in \Scal} \frac{n^{(s, k)}}{n} \glexphat^{(s, k)}(S_B)
    \end{flalign*}
        with:
    \begin{flalign*}
        \glexphat^{(s, k)}(S_B) =
          \underbrace{\sum_{j=1}^J \frac{n^{(s, k)}_j}{n^{(s, k)}} \br{\hat \mu^{(s, k)}_j - \hat c^{(s, k)}}^2}_{\text{\small plugin estimator $\glpluginhat$}} - \biggr( \underbrace{\sum_{j=1}^J \frac{n_j^{(s, k)}}{n^{(s, k)}} {\frac{\hat \mu^{(s, k)}_j (1-\hat \mu^{(s, k)}_j)}{n^{(s, k)}_j - 1}} - \frac{\hat c^{(s, k)}(1-\hat c^{(s, k)})}{n^{(s, k)} - 1}}_{\text{\small bias estimation $\glbiashat$}} \biggl)
    \end{flalign*}
\end{restatable}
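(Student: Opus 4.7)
The plan is to show that the bias estimator $\glbiashat$ exactly cancels the bias of the plugin $\glpluginhat$ in expectation, so that $\E[\glexphat(S_B)] = \glexp(S_B)$. Under the Brier score one has $h(p) = \|p\|^2 - 1$, so the $h$-variance factors across classes as $\vark[h]{U}{V} = \sum_k \vark{U_k}{V}$. Consequently $\glexp(S_B)$ decomposes into a weighted sum of per-bin, per-class quantities $V_{s,k} := \vark{\espk{Q_k}{S_B=s,\Ccal}}{S_B=s}$, and it suffices to prove unbiasedness for each inner estimator $\glexphat^{(s,k)}$ of $V_{s,k}$; the empirical weights $n^{(s,k)}/n$ plug in for $\P(S_B=s)$.

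Next I fix $(s,k)$, drop superscripts, and model the $n$ samples on the level set as iid draws from $P(X,Y\mid S_B=s)$. The labels $Y_k^{(i)}$ are then iid $\mathrm{Bernoulli}(c)$ marginally, and conditional on $X^{(i)} \in \Rcal_j^{(s)}$, iid $\mathrm{Bernoulli}(\mu_j)$. Using the identity $\hat c = \sum_j (N_j/n)\hat\mu_j$, the plugin simplifies to $\glpluginhat = \sum_j (N_j/n)\hat\mu_j^2 - \hat c^2$. Applying the classical identity $\E[\hat p^2] = p^2 + p(1-p)/m$ for the sample mean of $m$ iid $\mathrm{Bernoulli}(p)$ variables, conditionally on $N_j$ for the region terms and directly for $\hat c$, yields $\E[\glpluginhat] = V_{s,k} + \tfrac{1}{n}\bigl(\sum_j \mu_j(1-\mu_j) - c(1-c)\bigr)$, which identifies the bias of the plugin.

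To cancel this bias I invoke the unbiased Bernoulli-variance identity $\E[\hat p(1-\hat p)/(m-1)] = p(1-p)/m$. The crucial algebraic point is that the factor $N_j/n$ outside the first term of $\glbiashat$ cancels with the $1/N_j$ emerging from the identity: conditionally on $N_j = m \geq 2$, $\E[(N_j/n)\hat\mu_j(1-\hat\mu_j)/(N_j-1) \mid N_j = m] = \mu_j(1-\mu_j)/n$, which is \emph{free of $m$}. Marginalizing over $N_j$ gives $\E[\sum_j (N_j/n)\hat\mu_j(1-\hat\mu_j)/(N_j-1)] = \tfrac{1}{n}\sum_j \mu_j(1-\mu_j)$, and applied to $\hat c$ on the full level set, $\E[\hat c(1-\hat c)/(n-1)] = c(1-c)/n$. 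Thus $\E[\glbiashat]$ exactly matches the plugin bias, and $\E[\glexphat^{(s,k)}] = V_{s,k}$.

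The main subtlety is to carry out the computation \emph{unconditional} on the region counts $N_j$. If one conditioned on the $N_j$'s, the target would be replaced by its random-$N_j$ version with $c = \sum_j (N_j/n)\mu_j$, and the cancellation with $\hat c(1-\hat c)/(n-1)$ would only be approximate. Working marginally, the $Y_k^{(i)}$'s are genuinely iid $\mathrm{Bernoulli}(c)$ on the level set, so $\hat c(1-\hat c)/(n-1)$ is exactly unbiased for $\var{\hat c} = c(1-c)/n$; combined with the $m$-free expression above, the random $N_j$'s cancel out and the composite estimator is exactly unbiased, modulo a mild regularity condition $\P(N_j^{(s,k)} \geq 2) = 1$ that ensures the estimators are well-defined.
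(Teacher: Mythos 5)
Your proof is correct and follows essentially the same route as the paper's: you expand $\glpluginhat = \sum_j (N_j/n)\hat\mu_j^2 - \hat c^2$, invoke the Bernoulli-moment identity of Lemma~\ref{lem:exp-var} to identify the plugin's bias, and use the Bessel-corrected variance estimators (with the neat observation that $(N_j/n)\cdot\hat\mu_j(1-\hat\mu_j)/(N_j-1)$ has conditional expectation $\mu_j(1-\mu_j)/n$, free of $N_j$) to cancel it. The only divergence is in the treatment of empty or singleton regions: the paper carries the factor $1-\nu_j$ with $\nu_j = (1-p_j)^n = \P(N_j = 0)$ through the bias computation and then approximates $\nu_j \approx 0$, whereas you postulate $\P(N_j\geq 2)=1$. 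Under the actual multinomial sampling model this probability is never exactly $1$, so your claim of exact unbiasedness is slightly too strong (the true statement is unbiasedness up to an $O(\nu_j)$ correction), but this matches the level of approximation the paper also accepts and does not affect the substance of the argument.
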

\autoref{ss:proof_debiasing} gives the proof, with a debiasing logic
similar to \cite{Brocker2012Estimating}. The leftmost
term corresponds to the plugin estimate: the estimator of the
explained grouping loss (\autoref{th:GL_decomp}) with sample estimators for
the quantities of interest. The two rightmost terms represent the finite-sample
variance in estimating expectations over regions. They correct the upwards bias of the plugin estimate. 


\paragraph{Estimation of the grouping loss induced by binning classifier
scores.}
$\GLind(S, S_B)$ involves the $h$-variance of the calibrated scores $C$ inside each bin, thus its estimation requires $C$.
A solution is to estimate a continous calibration curve $\hat{C}$,
which amounts to a one-dimensional problem for which various methods are available. In our experiments, we use 
a kernel-based method (\eg{} LOWESS). It is then easy to compute the $h$-variance of $\hat{C}$ inside each bin by evaluating $\hat{C}$ for all available samples. The resulting expression of the estimator $\GLindhat$ is given in \autoref{sec:glind-est}.

\paragraph{A partition to minimize $\glres$}
In order to achieve the best possible lower-bound, we choose partitions in \autoref{th:GL_decomp} to minimize $\glres$.
We use a decision tree with a loss corresponding to the scoring
rule --squared loss for Brier score-- on the labels $Y$ to
define regions that minimize the loss on a given level set of $S$. As
this approach relies on $Y$, a train-test split is used to
control for overfitting: a partitioning of the
feature space is defined using the leaves of the tree fitted on one part, then the empirical means used in $\glexphat$ are estimated on the other part given this partitioning.
In the experiments of \autoref{sec:exp_nn}, we work in the output space of the penultimate layer of the networks.

\section{Experimental study}
\label{sec:exp}

\subsection{Simulations: finer partitions give a tight grouping-loss lower bound}
\label{sec:exp_simu}

Here we investigate the behavior of our estimation procedure with respect
to the number of bins and number of regions on simulated data with known
grouping loss. The importance of both corrections - the binning-induced
grouping loss (\autoref{prop:glind-est}) and the debiasing
(\autoref{prop:debiasing}) - is also evaluated. For this, data $Y \in
\cbr{0, 1}$ is drawn according to a known true posterior probability $Q$
and we consider a calibrated logistic regression classifier for the
scores $S$ (details in \autoref{sec:app:examples:realistic} and
\autoref{fig:app:real:cal_noacc}). The estimation procedures are then
applied according to two different scenarios. First we vary the number of samples per region (\eg{} region ratio) while the number of bins is fixed (\autoref{fig:simu}a.). Then we vary the number of
bins while the region ratio is
fixed (\autoref{fig:simu}b.).

\begin{figure}[h!]
  \begin{minipage}{.64\linewidth}
    \includegraphics[width=\linewidth]{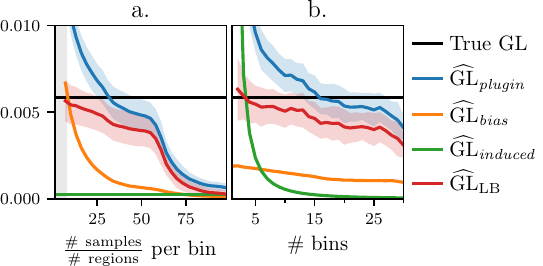}
  \end{minipage}%
  \hfill%
  \begin{minipage}{.35\linewidth}
    \vspace{3mm}
    \caption{
      \label{fig:simu}
      \textbf{Simulation:} estimating the grouping loss lower bound
$\widehat{\mathrm{GL}}_{\mathrm{LB}}$ on a simulated problem
(Appendix~\ref{sec:app:examples:realistic},
\autoref{fig:app:real:cal_noacc}). Right has a fixed ratio $\frac{\text{\# samples}}{\text{\# regions}} = 30$ per bin. Bins are equal-width. Averaged curves are plotted with a $\pm1$ standard deviation envelop.
    }
  \end{minipage}%
\end{figure}


For a fine-enough partition (a large number of regions, and hence a small
region ratio), $\gllbhat$ provides a tight lower bound to the true
grouping loss $\GL$. If the average number of samples per region becomes too
small, some regions have less than two samples, which breaks the estimate
(grayed out area in \autoref{fig:simu}a.).
Conversely, the naive plugin estimate $\glpluginhat$
substantially overestimates the true grouping loss as it does not
include the corrections $\GLindhat$ and $\glbiashat$.
\autoref{fig:simu}b. shows that to control the $\GLind$ due to binning, a reasonably large
number of bins is needed, \eg{} 15 as typical to compute ECE.
Given these bins, we suggest to use a tree to divide them in as many regions as possible while controlling the probability of regions ending up with less than two samples, typically targeting a region ratio of a dozen, to obtain the best possible lower bound $\gllbhat$.

\subsection{Modern neural networks display grouping loss}
\label{sec:exp_nn}

\paragraph{The grouping diagram: visualization of the grouping loss}
In a binary setting, calibration curves display the calibrated scores $C$ versus the confidence scores $S$ of the positive class. To visualize the heterogeneity among region scores in a level set, 
we add to this representation the estimated region scores $\hat \mu_j$, \ie{} the fraction of positives in each region obtained from the partitioning of level sets (\autoref{fig:showcase}). The further apart the region scores are, the greater the grouping loss.
\begin{figure}[h]
  \begin{minipage}{.35\linewidth}
    \vspace{3mm}
    \centering
    \includegraphics[width=.75\linewidth,trim={0 0 0.385cm 0},clip]{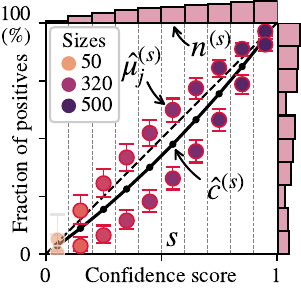}
  \end{minipage}%
  \hfill
  \begin{minipage}{.65\linewidth}
    \vspace{3mm}
    \caption{\textbf{Grouping diagram.}
    Calibration curve of a binned binary classifier augmented with the estimated region scores $\hat \mu_j$ for a partitioning of each level set into 2 regions. Region sample sizes are plotted as a gradient color. The classifier is binned into 10 equal-width bins whose sample sizes $n^{(s)}$ are given as an histogram. A Clopper-Pearson 95\% confidence interval is plotted on the region scores. Regions for which the calibrated score $\hat c^{(s)}$ lie within this interval are grayed out.
    \label{fig:showcase}
    }
  \end{minipage}%
\end{figure}

\begin{figure}[b!]
  \centering
  \makebox[\textwidth]{
  \makebox[0.9cm]{}
  \makebox[(\textwidth-1.5cm)/4]{a. ConvNeXt}
  \makebox[(\textwidth-1.5cm)/4]{b. ViT}
  \makebox[0.4cm]{}
  \makebox[(\textwidth-1.5cm)/4]{c. ConvNeXt}
  \makebox[(\textwidth-1.2cm)/4]{d. ViT}
  }%
  \vspace*{-0.2cm}
  \\
  \includegraphics[height=4.36cm,trim={0.26cm 0cm 0.635cm 0cm},clip]{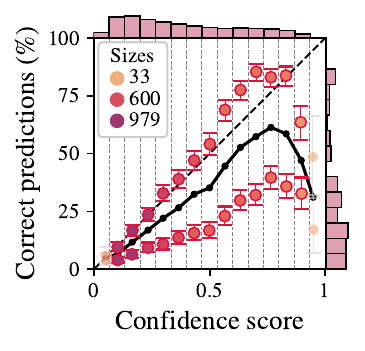}%
  \hspace*{-0.32cm}\hfill%
  \includegraphics[height=4.36cm,trim={0.2cm 0cm 0.635cm 0cm},clip]{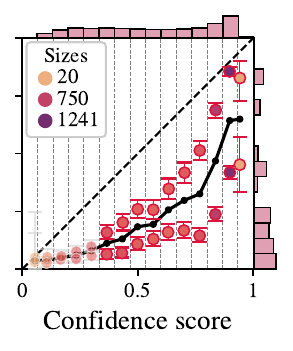}%
  \hspace*{-0.2cm}\hfill%
  \includegraphics[height=4.36cm,trim={0.2cm 0cm 0.635cm 0cm},clip]{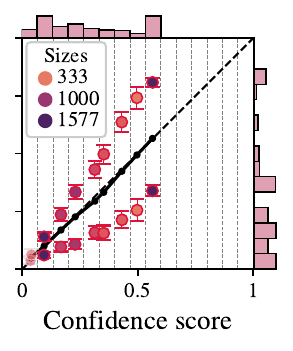}%
    \llap{\raisebox{1.55cm}{\parbox{1.7cm}{\flushright\small with\\isotonic recalibration}}}%
  \hspace*{-0.32cm}\hfill%
  \includegraphics[height=4.36cm,trim={0.2cm 0cm 0.635cm 0cm},clip]{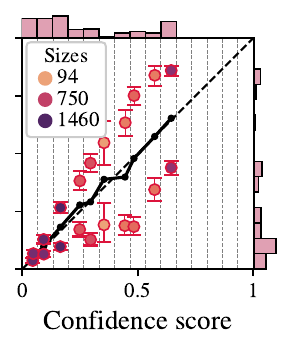}%
    \llap{\raisebox{1.55cm}{\parbox{1.7cm}{\flushright\small with\\isotonic recalibration}}}%

  \caption{\textbf{Vision}: Fraction of correct predictions versus
confidence score of predicted class ($\max_k S_k$) for ConvNeXt Tiny and
ViT L-16 on ImageNet-R, without post-hoc recalibration
(a. and b.) and with isotonic recalibration (c. and d.). In each bin on confidence scores, the level set is partitioned into 2 regions with a decision stump constrained to one balanced split, with a 50-50 train-test split strategy.\looseness=-1}
  \label{fig:vision:conv_vit}
\end{figure}

\begin{figure}[b!]
  \begin{minipage}{.44\linewidth}
    \caption{\textbf{Evaluating vision models}: a debiased estimate of the
grouping loss lower bound $\widehat{\mathrm{GL}}_{\mathrm{LB}}$
(\autoref{eq:lb-binning}) and an estimate of the calibration loss
$\widehat{\mathrm{CL}}$, both accounting for binning, evaluated on
ImageNet-R and sorted by model accuracy. Partitions $\Ccal$ are obtained
from a decision tree partitioning constrained to create at most
$\nicefrac{\text{\# samples in bin}}{30}$ regions in each bin. Isotonic
regression is used for post-hoc recalibration of the models (right). \autoref{tab:vision-nlp:comparison} gives the raw values.%
    \label{fig:vision:comparison}%
}%
  \end{minipage}%
    \hfill
  \begin{minipage}{.53\linewidth}
    \includegraphics[width=\linewidth]{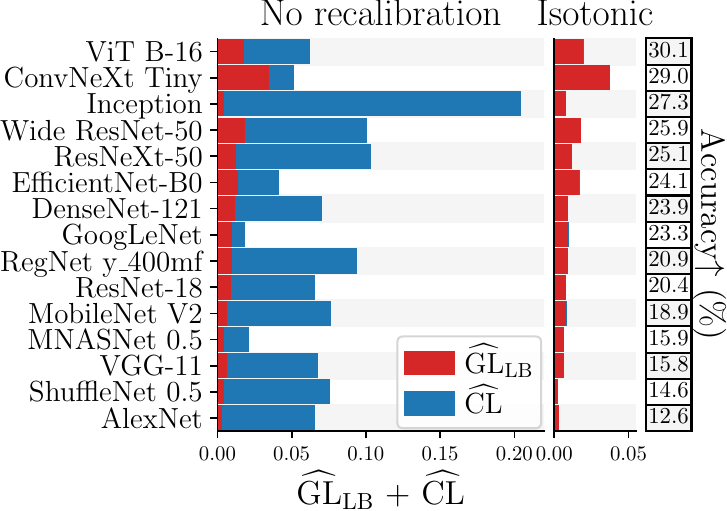}
  \end{minipage}%
\end{figure}

\paragraph{Vision}

We evaluate 15 vision models (listed on \autoref{fig:vision:comparison}) from PyTorch \emph{pre-trained} on ImageNet-1K \citep{deng2009imagenet}. Here we report evaluation on
ImageNet-R \citep{hendrycks2021many}, an ImageNet variant with 15
different renditions: paintings, toys, tattoos, origami... The dataset
contains 30\,000 images and 200 ImageNet classes. \autoref{sec:app:imagenet} reports
evaluation on the validation set of ImageNet-1K and ImageNet-C
\citep{hendrycks2019robustness}, an ImageNet variant with
corrupted versions of ImageNet images. As often with many
classes, the small number of samples per class (50) does not allow to
study the classwise calibration and grouping loss. Hence, following common
practice, we consider top-label versions (\autoref{def3}).
\autoref{sec:app:imagenet} gives experimental details.

We find substantial grouping loss inside
level sets for most networks on ImageNet-R (ConvNeXt Tiny and ViT L-16 in \autoref{fig:vision:conv_vit}, others in \autoref{sec:app:imagenet-r}), even after post-hoc recalibration (\autoref{fig:vision:conv_vit} right).
For instance, while ConvNext + Isotonic is calibrated (third graph), it is strongly over-confident in one part of the feature space and under-confident in the other, creating a high grouping loss.

\begin{figure}[t!]
  \centering
  \makebox[\textwidth]{%
  \makebox[0.9cm]{}%
  \makebox[(\textwidth-1.5cm)/4]{a. In-distribution}%
  \makebox[(\textwidth-1.5cm)/4]{b. Out-of-distribution}%
  \makebox[0.4cm]{}
  \makebox[(\textwidth-0.5cm)/4]{c. In-distribution}%
  \makebox[(\textwidth-1.5cm)/4]{d. Out-of-distribution}%
  }
  \\
  \vspace*{-0.12cm}%
  \hspace*{-0.05cm}%
  \includegraphics[height=4.36cm,trim={0.2cm 0cm 0.635cm 0cm},clip]{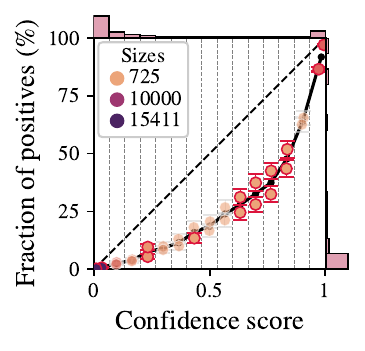}%
  \hspace*{2.5mm}%
  \includegraphics[height=4.36cm,trim={0.26cm 0cm 0.635cm 0cm},clip]{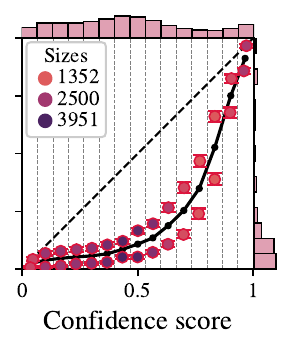}
  \hspace*{0.02cm}%
    \hfill%
  \includegraphics[height=4.36cm,trim={0.2cm 0cm 0.635cm 0cm},clip]{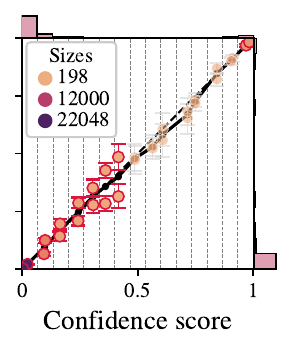}%
    \llap{\raisebox{1.3cm}{\parbox{1.7cm}{\small with isotonic recalibration}}}%
  \hspace*{2.5mm}%
  \includegraphics[height=4.36cm,trim={0.2cm 0cm 0.635cm 0cm},clip]{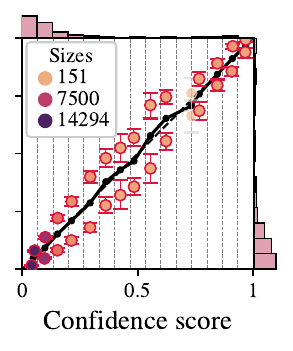}%
    \llap{\raisebox{1.3cm}{\parbox{1.7cm}{\small with isotonic recalibration}}}%
  \hspace*{-0.04cm}%

  \caption{\textbf{NLP}:
Fraction of positives versus confidence score of the positive class of
fine-tuned BART for zero-shot classification on the test set of Yahoo
Answers Topics without post-hoc recalibration (a. and
b.) and with isotonic recalibration (c. and d.). The test set is either restricted to the 5 topics on which the network was trained (in-distribution) or to 5 unseen topics (out-of-distribution). In each level set clusters are built with a balanced decision stump and a 50-50 train-test split strategy.}
\label{fig:nlp:bart}
\end{figure}

\autoref{fig:vision:comparison} shows
that the grouping loss varies across architectures, even with comparable
accuracy. For example, ViT has a slightly better accuracy than ConvNeXt,
but a lower estimated grouping loss. Post-hoc recalibration does not affect the grouping loss (\autoref{fig:vision:conv_vit} right and \autoref{fig:vision:comparison} right), leading to the same conclusions (see \autoref{sec:app:recal-impact} for the analytical impact of recalibration on the grouping loss). We observe the same effects on ImageNet-C (\autoref{sec:app:imagenet-c}), but little or none on ImageNet-1K (\autoref{sec:app:imagenet-1k}). This suggests that stronger grouping loss arises in out-of-distribution settings.
%
Visual inspection of the images suggests that the partitions
capture heterogeneity coming from how realistic an image is, or the
different rendition types (\autoref{sec:app:imagenet-r}).


\paragraph{NLP}
We evaluate the grouping loss on BART Large \citep{Lewis2019} from HuggingFace \emph{pre\nobreakdash-trained}
on the Multi-Genre Natural Language Inference dataset \citep{MNLI}. We
consider zero-shot topic classification on the Yahoo Answers Topics
dataset, composed of questions and topic labels. There are 60\,000 test
samples and 10 topics. The model is fine-tuned on 5 out of the 10 topics
of the training set, totaling 700\,000 samples. Given a question title
and a hypothesis (\eg{} ``This text is about Science \& Mathematics''),
the model outputs its confidence in the hypothesis to be true. The
classification being zero-shot, the hypothesis can be about an unseen
topic. We evaluate the model separately on the 5 unseen topics and the 5
seen topics of the test set. Both results in a binary classification task
on whether the hypothesis is correct or not.
\autoref{sec:app:nlp} gives experimental details.

The partitioning reveals grouping loss in the out-of-distribution setting both before and after recalibration (\autoref{fig:nlp:bart}b. and d.). However, we found no evidence of grouping loss in the in-distribution setting.
As in vision, this suggests that out-of-distribution settings lead to stronger grouping loss.

\section{Discussion and conclusion}
\label{sec:discussion}

\paragraph{A working estimator of grouping loss}
While calibrated scores can be estimated by solving a one-dimensional problem, the grouping loss is much harder to estimate: it measures the discrepancy to the true posterior probabilities, which are unknown.
We show that combining debiased partition-based estimators with an optimized partition
captures the grouping loss well.
This procedure allows us to characterize the grouping loss of popular neural networks for the first time.
We find that in vision and NLP, models can be calibrated --if needed via post-hoc recalibration-- but significant heterogeneity of errors remains, \eg{} ConvNeXt has larger grouping loss than calibration loss.

Several avenues could be explored to better capture the grouping loss. Complex level
sets may not be approximated well with the partitioning defined by a
tree, leaving a large residual in th.\,\ref{th:GL_decomp}. In this case,
the estimated grouping loss may only be a rather loose lower bound. Such
a lower bound is nevertheless useful to reject models with high grouping
loss.
In addition, we apply the tree on the penultimate layer of neural networks,
where class boundaries are simplified.
Finally, complementing the proposed lower bound with an upper bound would also allow to identify models without grouping loss.\looseness=-1

\paragraph{We need to talk about grouping loss}
Model should be evaluated not only on aggregate measures, but also on their individual predictions, using grouping loss.
The presence of grouping loss means that the model is systematically
under-confident for certain groups of individuals and over-confident for
others, questioning the use of such models for individual decision
making. The presence of grouping loss also means that downstream tasks
relying on confidence scores can be hindered, such as causal inference
with propensity scores or simulation-based inference. Finally, this
heterogeneity raises fairness concerns. In fact, the grouping loss and
our lower bound are fundamentally related to fairness --see sufficiency and group calibration \citep[chap 3]{barocas-hardt-narayanan}, and multicalibration \citep{Kleinberg2016}. We hope that
our measure of grouping loss will spur new research in this area.\looseness=-1

\clearpage
\section*{Reproducibility statement}
All datasets are publicly available (ImageNet-R, ImageNet-C, ImageNet-1K, Yahoo Answers Topics) and all models involved are pre\nobreakdash-trained and publicly available on PyTorch and HuggingFace. Simulated examples are described in \autoref{sec:app:examples}. Detailed experimental methods are given in \autoref{sec:app:imagenet}~and~\ref{sec:app:nlp}. Proofs of all theoretical results are listed in \autoref{sec:proofs}.
The source code for the implementation of the algorithm, experiments, simulations and figures is available on GitHub: \url{https://github.com/aperezlebel/beyond_calibration}.

\section*{Acknowledgments}
We acknowledge support in part by the French Agence Nationale de la Recherche under Grant ANR-20-CHIA-0026 (LearnI).

\bibliography{beyond_calibration}
\bibliographystyle{beyond_calibration}

\clearpage
\appendix
%
%
\renewcommand \thepart{}
\renewcommand \partname{}
\vspace*{-10mm}

\part{\texorpdfstring{}{Supplementary materials}} 

\vspace*{-15mm}

{\center\LARGE\sc {Supplementary materials}\par}
\vspace{20mm}

{\hypersetup{hidelinks}
\parttoc 
}

\clearpage

\section{Examples of confusing statements on calibration}

\label{sec:confusing_statements}

Here we detail specific examples of confusing statements on calibration
in the literature. We choose most of these examples in well-cited and well
regarded works.

\begin{itemize}
  \item \cite{Kuhn2013}: \inlinequote{We desire that the estimated class probabilities are reflective of the true underlying probability of the sample. That is, the predicted class probability (or probability-like value) needs to be well-calibrated. To be well-calibrated, the probabilities must effectively reflect the true likelihood of the event of interest.}

  The authors write that it is desirable to have confidence scores $S$
reflective of the true posterior probabilities $Q$, which is indeed
desirable as discussed in \autoref{sec:introduction}. However, they write
this is obtained through calibration. Although post-hoc recalibration makes the confidence scores closer to $Q$ in some sense, there is an implicit shortcut. As pointed out in \autoref{sec:cal_not_enough} and \autoref{sec:app:examples}, calibration, even with optimal accuracy, does not guarantee confidence scores $S$ to be close to the true posterior probabilities $Q$.


  \item \citep{Gupta2020b}: \inlinequote{A classifier is said to be calibrated if the probability values it associates with the class labels match the true probabilities of correct class assignments.}

  The authors write that calibration is matching the confidence scores $S$ of a classifier to the true posterior probabilities $Q$. In fact, calibration is matching the confidence scores $S$ to the calibrated scores $C$, which can be far from the true posterior probabilities $Q$ as pointed out in \autoref{sec:cal_not_enough} and \autoref{sec:app:examples}.

  \item \cite{Garcin2021}: \inlinequote{Ideally, we would like machine learning models to output accurate probabilities in the sense that they reflect the real unobserved probabilities. This is exactly the purpose of calibration techniques, which aim to map the predicted probabilities to the true ones in order to reduce the probability distribution error of the model.}

  The authors write that calibration is outputting confidence scores $S$ that are true posterior probabilities $Q$. As in the previous citations, calibration is outputting calibrated scores $C$, which can be far from  $Q$ (\autoref{sec:cal_not_enough} and \autoref{sec:app:examples}).



  \item \cite{Flach2016a}: \inlinequote{A probabilistic classifier is well calibrated if, among the instances receiving a predicted probability vector p, the class distribution is approximately distributed as p. Hence, the classifier approximates, in some sense, the class posterior.} \inlinequote{The main point is that knowing the true class posterior allows the classifier to make optimal decisions. It therefore makes sense for a classifier to (approximately) learn the true class posterior.}

  Here, calibration is rightly defined as outputting confidence scores $S$ that are equal to the calibrated scores $C$. However, by writing that confidence scores $S$ of a calibrated classifier approximate the true class posterior $Q$, the author makes an implicit assumption that the calibrated scores $C$ are close to the true posterior probabilities $Q$, which is not guaranteed in theory as pointed out in \autoref{sec:cal_not_enough} and \autoref{sec:app:examples}.




\end{itemize}

\section{Examples of accurate and calibrated classifiers with high grouping loss}
\label{sec:app:examples}

Here we build simple binary classification examples of calibrated classifiers with optimal accuracy having their confidence scores far from the true posterior probabilities.
In \autoref{sec:app:examples:1d} we build examples with an arbitrary link between true posterior probabilities $Q$ and confidence scores $S$ (up to a limit to keep the classifier's accuracy optimal).
In \autoref{sec:app:examples:realistic} we build a more realistic example based on the output of a neural network.

\subsection{Arbitrary link between true posterior probabilities Q and confidence scores S}
\label{sec:app:examples:1d}
To show that calibration, even combined with optimal accuracy, does not impose strong constraints on how close the true posterior probabilities $Q$ should be from the classifiers' confidence scores $S$, we build examples in which $Q$ and $S$ have an arbitrary link.
For simplicity we consider binary examples with a one-dimensional feature space $\mathcal{X}$. These can be extended to multiple dimensions by projecting onto a vector $\omega$ (via $x \mapsto \omega^Tx$).

The idea is to build a classifier that outputs confidence scores having at most two antecedents each. One antecedent should have its true posterior probability $Q$ at an arbitrary distance $+\Delta$ from the associated confidence score $S$, while the other has a distance $-\Delta$. Scores with only one antecedent should have $Q = S$. This combined with an equal density weight of $\mathcal{X}$ onto the two antecedents guarantees calibration: $\espk{Q}{S} = S$. To maintain the classifier's accuracy optimal, the offset $\Delta$ is constrained to keep $Q$ and $S$ on the same side of the decision threshold.

To achieve this, we cut the one-dimensional feature space $\mathcal{X}$ into three parts: $\R^{\star}_+$, $\R^{\star}_-$ and $\{0\}$. As a classifier, we take an even function $S(X)$ with $S^{-1}(\{0\})$ reduced to a singleton so that each confidence score has either two antecedents (one in $\R^{\star}_+$ and one in $\R^{\star}_-$) or one antecedent in $\{0\}$. To assign an equal weight to each antecedent, we choose a symmetric distribution for $\mathcal{X}$, \eg{} a standard normal distribution centered on $0$. We build the true posterior probabilities $Q$ from deviations $h : [0, 1] \to [0, 1]$ and $g : [0, 1] \to [0, 1]$ of the confidence scores $S$ in $\R^{\star}_+$ and $\R^{\star}_-$:
\begin{equation}
  Q : x \mapsto \1{x > 0}h(S(x)) + \1{x < 0}g(S(x)) + \1{x = 0}S(0)
\end{equation}
For $S$ to be calibrated, deviations must average to identity, \ie{} $\forall s \in S(\R), \frac{1}{2}(h(s) + g(s)) = s$. A proof of this statement is given below:

\begin{proof}
  \begin{align*}
    \espk{Q(X)}{S(X)}
    & = \espk{\1{X > 0}}{S(X)}h(S(X)) + \espk{\1{X < 0}}{S(X)}g(S(X)) + \espk{\1{X = 0}}{S(X)}S(0)\\
    & = \PPG{X\!>\!0}{S(X)}h(S(X)) \!+\! \PPG{X\!<\!0}{S(X)}g(S(X)) \!+\! \PPG{X\!=\!0}{S(X)}S(0)\\
    & = \nicefrac{1}{2}\1{S(X) \neq S(0)}(h(S(X)) + g(S(X))) + \1{S(X) = S(0)}S(0)\\
  \end{align*}
  since $\PPG{X > 0}{S(X)} = \PPG{X < 0}{S(X)} = \frac{1}{2}\1{S(X) \neq S(0)}$.

  Hence, $S(X)~calibrated \Leftrightarrow \espk{Q(X)}{S(X)} = S(X) \Leftrightarrow \frac{1}{2}(h(S(X)) + g(S(X))) = S(X)$.
\end{proof}

From here, we choose $h : [0, 1] \to [0, 1]$ and define $g : s \mapsto 2s - h(s)$. Note that to keep $g(s) \in [0, 1]$, $h$ is constrained by: $\forall s \in S(\R), 2s -1 \leq h(s) \leq 2s$. At this point of the procedure, classifiers $S$ may not have an optimal accuracy. \autoref{fig:app:cal_noacc} shows two examples of links $h$, one of which saturates the constraint $h(s) <= \min(2s, 1)$.

To make the classifiers accurate, the deviations $h(s) - s$ should be small enough to keep $S$ and $Q$ on the same side of the decision threshold. This adds two constraints on $h$: $\forall s \in S(\R)\cap[0, \nicefrac{1}{2}[, h(s) < \frac{1}{2}$ and $\forall s \in S(\R)\cap[\nicefrac{1}{2}, 1], h(s) \geq \frac{1}{2}$ (with the convention that a score of exactly $\frac{1}{2}$ predicts the positive class). \autoref{fig:app:cal_acc} (left) shows a classifier built following the above procedure. \autoref{fig:app:cal_acc} (right) shows that we can release the constraint $\nicefrac{1}{2}(h(s) + g(s)) = s$ if we tweak the distribution of $\mathcal{X}$ to adapt the weights between the two antecedents accordingly (and take \eg{} $g(s) = \1{h(s) < s})$.

\setlength{\fboxsep}{0pt}
\begin{figure}
  \centering
  \makebox[\textwidth-0.5mm]{
    \makebox[(\textwidth-5mm)/2]{$h : s \mapsto - s^2 + 2s$}
    \makebox[(\textwidth-5mm)/2]{$h : s \mapsto \min(2s, 1)$}
  }%
  \\
  \includegraphics[height=3.65cm]{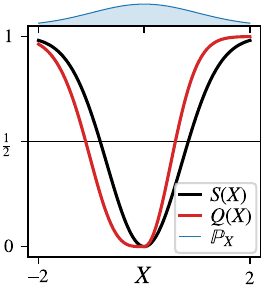}%
  \hspace*{0.5mm}%
  \includegraphics[height=3.65cm]{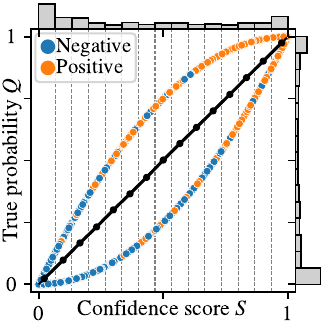}%
  \hspace*{0.65mm}%
  \includegraphics[height=3.65cm]{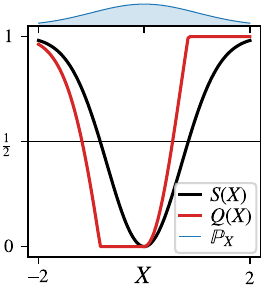}%
  \hspace*{0.5mm}%
  \includegraphics[height=3.65cm]{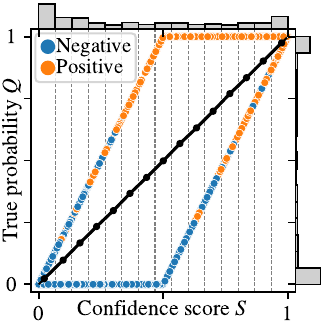}%

  \caption{\textbf{Calibrated but not accurate.} Example of calibrated classifiers $S$ constructed from links $h$ following the procedure described in \autoref{sec:app:examples:1d}. The accuracy of these two classifiers is not optimal as $Q$ and $S$ are not on the same side of the decision threshold ($\frac{1}{2}$) wherever $Q \neq \tfrac{1}{2}$. Refer to \autoref{fig:app:cal_acc} for an example with optimal accuracy. Calibration curves (in black on 2\textsuperscript{nd} and 4\textsuperscript{th} plot) are obtained from 1 million samples.}
  \label{fig:app:cal_noacc}
\end{figure}

\clearpage

\begin{figure}
  \centering
  \makebox[\textwidth]{
    \makebox[(\textwidth-5mm)/2]{$h : s \mapsto \max(\min(2s, \frac{1}{2})), 2s-1)$}
    \makebox[(\textwidth-5mm)/2]{$h : s \mapsto \frac{1}{2}\1{0<s<1} + \1{s=1}$}
  }%
  \\
  \includegraphics[height=3.65cm]{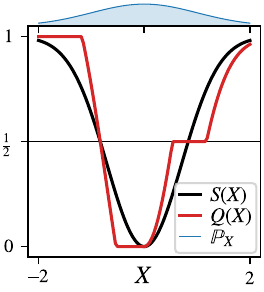}%
  \hspace*{0.5mm}%
  \includegraphics[height=3.65cm]{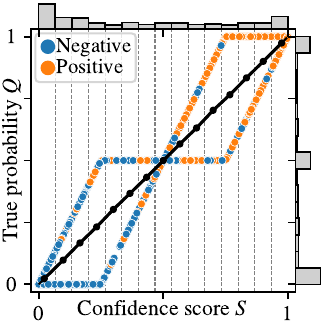}%
  \hspace*{0.65mm}%
  \includegraphics[height=3.65cm]{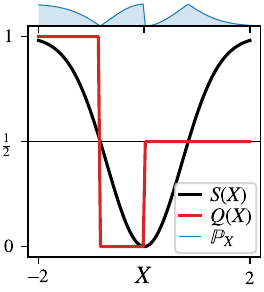}%
  \hspace*{0.5mm}%
  \includegraphics[height=3.65cm]{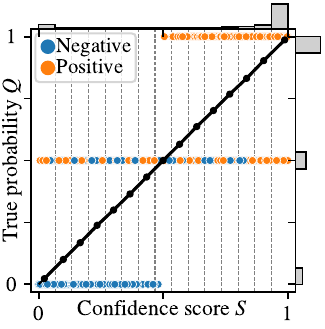}%

  \caption{\textbf{Calibrated and optimal accuracy.} Example of calibrated classifiers $S$ constructed from links $h$ following the procedure described in \autoref{sec:app:examples:1d}. The accuracy of these two classifiers is optimal as $Q$ and $S$ are on the same side of the decision threshold ($\frac{1}{2}$) wherever $Q \neq \tfrac{1}{2}$. However, confidence scores $S$ are almost everywhere different from the true posterior probabilities $Q$. Calibration curves (in black on 2\textsuperscript{nd} and 4\textsuperscript{th} plot) are obtained from 1 million samples.}
  \label{fig:app:cal_acc}
\end{figure}

\subsection{Realistic example based on neural network's output}
\label{sec:app:examples:realistic}
The examples of \autoref{sec:app:examples:1d}, while proving our point, are quite unusual in practice especially in the choice of classifier $S$. In this section we build a more realistic example based on the output of a neural network. We focus on a binary classification setting with a feature space $\mathcal{X}$ being at least two-dimensional. The classifier is taken as a sigmoid of $\omega^TX$ (akin to the last layer of a neural network predicting the confidence score of the positive class). Based on this choice of model, we build a class of calibrated and accurate classifiers with confidence scores $S$ far from the true posterior probabilities $Q$.

The idea is to create heterogeneity in the blind spot of calibration, \ie{} orthogonally to $\omega$. The perturbations creating heterogeneity must balance each other out to keep the classifier calibrated.

To achieve this, we define:
\begin{itemize}[topsep=0pt,itemsep=0ex,partopsep=0ex,parsep=0.5ex,leftmargin=5ex]
  \item $d \geq 2$ the dimension of the feature space $\mathcal{X}$.
  \item $\omega \in \R^d$, the last layer's weights.
  \item $\varphi : \R \to [0, 1]$ the link function mapping $\omega^Tx$ to confidence scores, \eg{} a sigmoid.
  \item $S : x \in \R^d \mapsto \varphi(\omega^Tx) \in [0, 1]$ the classifier's confidence scores of the positive class.
  \item $\omega_{\perp} \in \R^d$ such that $\omega^T\omega_{\perp} = 0$, the direction in which heterogeneity will be introduced.
  \item $\psi : \R \to [-1, 1]$ an odd perturbation introducing balanced heterogeneity along $\omega_{\perp}$.
  \item $\Delta_{max} : x \mapsto \min(1\!-\!S(x), S(x))$ modulating the range of the perturbation to keep $Q \in [0, 1]$.
  \item $Q : x \in \R^d \mapsto S(x) + \psi(\omega_{\perp}^Tx)\Delta_{max}(x) \in [0, 1]$ the constructed true posterior probabilities.
  \item $X \sim \mathcal{N}(0, \Sigma)$ the data distribution, with $\Sigma \in \R^{d \times d}$ having $\omega$ and $\omega_{\perp}$ among its eigenvectors.
\end{itemize}

With the above construction, the classifier $S$ is calibrated. Indeed,
\begin{align}
  \espk{Q(X)}{S(X)}
  & = S(X) + \espk{\psi(\omega_{\perp}^TX)\Delta_{max}(X)}{S(X)}\\
  & = S(X) + \espk{\psi(\omega_{\perp}^TX)}{S(X)}\Delta_{max}(X)
\end{align}
since $\Delta_{max}(X)$ is a function of $S(X)$. We have $\espk{\psi(\omega_{\perp}^TX)}{S(X)} = 0$ by construction: $\psi$ is odd and the distribution of $X$ has a symmetric weight along $\omega_{\perp}$ since $\Sigma$ is aligned on $\omega$ and $\omega_{\perp}$.
Hence $\espk{Q(X)}{S(X)} = S(X)$.
\autoref{fig:app:real:cal_noacc} shows two examples generated with this procedure. However, it is not yet accurate.
As in \autoref{sec:app:examples:1d}, the perturbation should be constrained to keep $Q$ and $S$ on the same side of the decision threshold to keep the accuracy optimal. This is simply achieved by defining $\Delta_{max} : x \mapsto \min(1\!-\!S(x), S(x), |\frac{1}{2} - S(x)|)$.

\begin{figure}[p]
  \makebox[\linewidth]{\textbf{a. Classifier $S(X) = (1 + \exp(-\omega^TX))^{-1}$}}\\
  \vspace*{3mm}%
  \centering{
  \includegraphics[height=4.3cm]{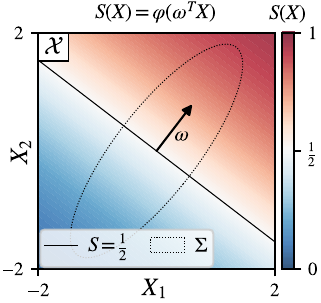}}\\
  \vspace*{4mm}%
  \makebox[\linewidth]{\textbf{b. Perturbation $\psi(z) = 2(1 + \exp(-z))^{-1} - 1$}}\\
  \vspace*{3mm}%
  \includegraphics[height=4.3cm]{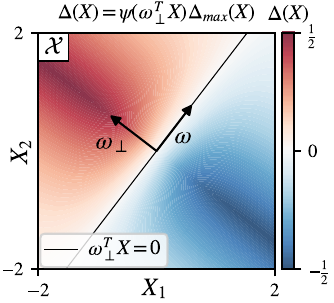}%
  \hspace*{3.8mm}%
  \includegraphics[height=4.3cm]{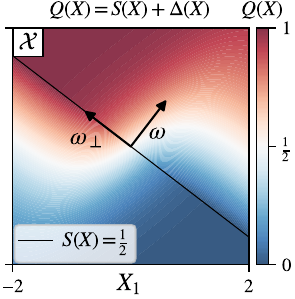}%
  \hspace*{3.6mm}%
  \includegraphics[height=4.3cm]{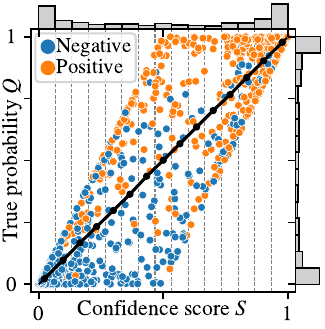}\\
  \vspace*{4mm}%
  \makebox[\linewidth]{\textbf{c. Perturbation $\psi(z) = \1{z > 0} - \1{z < 0}$}}\\
  \vspace*{3mm}%
  \includegraphics[height=4.3cm]{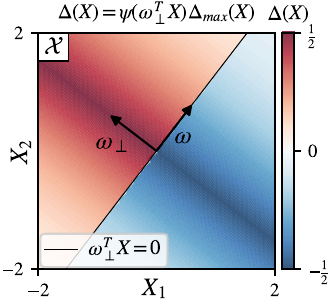}%
  \hspace*{3.8mm}%
  \includegraphics[height=4.3cm]{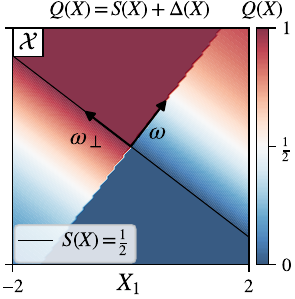}%
  \hspace*{3.6mm}%
  \includegraphics[height=4.3cm]{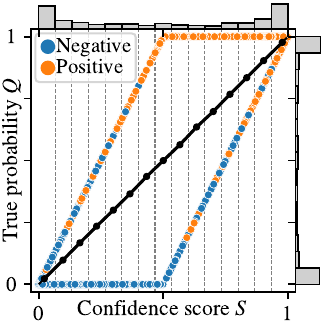}
  \caption{\textbf{Calibrated but not accurate.} Example of a calibrated classifier $S$ constructed following the procedure described in \autoref{sec:app:examples:realistic}. Its accuracy is not optimal as $Q$ and $S$ are not on the same side of the decision threshold ($\frac{1}{2}$) wherever $Q \neq \tfrac{1}{2}$. Refer to \autoref{fig:app:real:cal_acc} for an example with optimal accuracy. Calibration curves (in black on last column) are obtained from 1 million samples.}
  \label{fig:app:real:cal_noacc}
\end{figure}


\begin{figure}[p]
  \makebox[\linewidth]{\textbf{a. Classifier $S(X) = (1 + \exp(-\omega^TX))^{-1}$}}\\
  \vspace*{3mm}%
  \centering{
  \includegraphics[height=4.3cm]{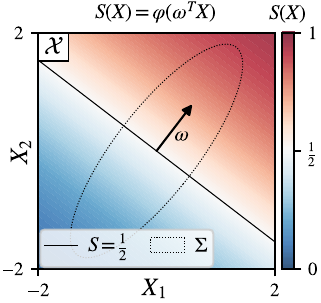}}\\
  \vspace*{4mm}%
  \makebox[\linewidth]{\textbf{b. Perturbation $\psi(z) = 2(1 + \exp(-z))^{-1} - 1$}}\\
  \vspace*{3mm}%
  \includegraphics[height=4.3cm]{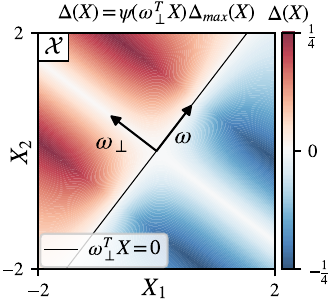}%
  \hspace*{3.8mm}%
  \includegraphics[height=4.3cm]{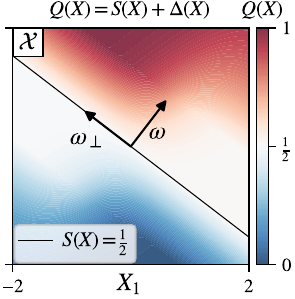}%
  \hspace*{3.6mm}%
  \includegraphics[height=4.3cm]{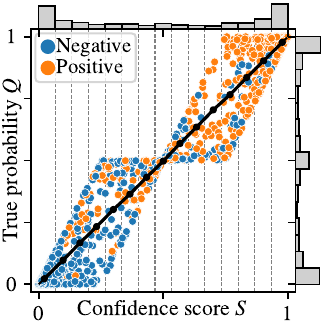}\\
  \vspace*{4mm}%
  \makebox[\linewidth]{\textbf{c. Perturbation $\psi(z) = \1{z > 0} - \1{z < 0}$}}\\
  \vspace*{3mm}%
  \includegraphics[height=4.3cm]{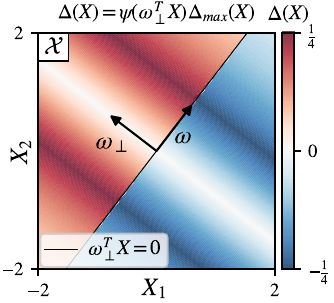}%
  \hspace*{3.8mm}%
  \includegraphics[height=4.3cm]{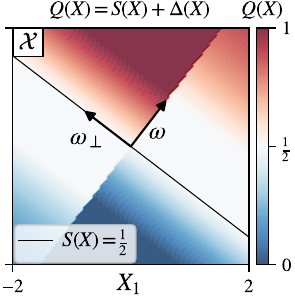}%
  \hspace*{3.6mm}%
  \includegraphics[height=4.3cm]{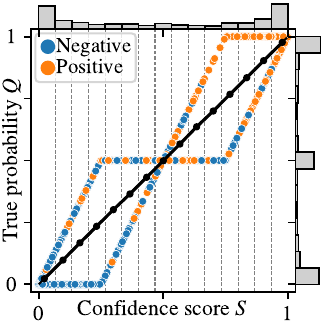}
  \caption{\textbf{Calibrated and optimal accuracy.} Example of a calibrated classifier $S$ constructed following the procedure described in \autoref{sec:app:examples:realistic}. Its accuracy is optimal as $Q$ and $S$ are on the same side of the decision threshold wherever $Q \neq \tfrac{1}{2}$. However, confidence scores $S$ are almost everywhere different from the true posterior probabilities $Q$. Calibration curves (in black on last column) is obtained from 1 million samples.}
  \label{fig:app:real:cal_acc}
\end{figure}


\section{Proofs}
\label{sec:proofs}

\subsection{Lemma~\ref{prop:e-variance}: The grouping loss as an h-variance}
\label{prop:e-variance:proof}

\evariance*
\begin{proof}[Proof of \autoref{prop:e-variance}]
  Let $\phi$ be a scoring rule, $h : p \mapsto -s_{\phi}(p,p)$ and $C = \espk{Q}{S}$.
  \begin{align}
      \esp{d_{\phi}(C, Q)}
      & = \esp{s_{\phi}(C, Q) - s_{\phi}(Q, Q)}
      & \text{Definition of divergence}\\
      & = \esp{s_{\phi}(C, Q) + h(Q)}
      & \text{Definition of $h$}\\
      & = \esp{\sum_{k=1}^K \phi(C, e_k)Q_k + h(Q)}
      & \text{Definition of expected score}\\
      & = \esp{\espk{\sum_{k=1}^K \phi(C, e_k)Q_k + h(Q)}{S}}
      & \text{Law of total expectation}\\
      & = \esp{\sum_{k=1}^K \espk{\phi(C, e_k)Q_k}{S} + \espk{h(Q)}{S}}
      & \text{Linearity of expectation}\\
      & = \esp{\sum_{k=1}^K \phi(C, e_k)\espk{Q_k}{S} + \espk{h(Q)}{S}}
      & \text{$\phi(C, e_k)$ is a function of $S$}\label{eq:smeasurable}\\
      & = \esp{\sum_{k=1}^K \phi(C, e_k)C_k + \espk{h(Q)}{S}}
      & C_k = \espk{Q_k}{S}\\
      & = \esp{-h(C) + \espk{h(Q)}{S}}
      & \text{Definition of $h$}\\
      & = \esp{\espk{h(Q)}{S} - h(\espk{Q}{S})}
      & C = \espk{Q}{S}\\
      & = \esp{\vark[h]{Q}{S}}
      & \text{Definition of $\vark[h]{Q}{S}$}
  \end{align}
\end{proof}

\subsection{Theorem~\ref{th:GL_decomp}: Grouping loss decomposition}
\label{th:GL_decomp:proof}

\begin{lemma}[Law of total $h$-variance]
  \label{lemma:law-tot-evar}
  Let $X, Y, Z : \Omega \to \R^d$ be random variables defined on the same probability space and a function $f : \R^d \to \R$. The law of total variance holds for the $f$-variance:\looseness=-1
  \begin{equation}
      \vark[f]{Y}{Z} = \espk{\vark[f]{Y}{X,Z}}{Z} + \vark[f]{\espk{Y}{X,Z}}{Z}
  \end{equation}
\end{lemma}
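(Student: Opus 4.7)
The plan is to mimic the standard proof of the classical law of total variance, where the only property of the variance that is actually used is the identity $\mathrm{Var}(U) = \E[g(U)] - g(\E[U])$ with $g(\cdot) = (\cdot)^2$. Here $g$ is replaced by the arbitrary function $f$, and all expectations are conditional on $Z$, but the algebraic structure is identical. So I expect a short, direct computation rather than any deep argument.

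Concretely, I would start by expanding both terms on the right-hand side using the definition $\vark[f]{U}{V} := \espk{f(U)}{V} - f(\espk{U}{V})$. The first term becomes
\begin{equation*}
\espk{\vark[f]{Y}{X,Z}}{Z} \;=\; \espk{\espk{f(Y)}{X,Z}}{Z} \;-\; \espk{f(\espk{Y}{X,Z})}{Z},
\end{equation*}
and the second term becomes
\begin{equation*}
\vark[f]{\espk{Y}{X,Z}}{Z} \;=\; \espk{f(\espk{Y}{X,Z})}{Z} \;-\; f\!\left(\espk{\espk{Y}{X,Z}}{Z}\right).
\end{equation*}
Adding these two lines, the middle terms $\pm\,\espk{f(\espk{Y}{X,Z})}{Z}$ cancel.

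What remains is $\espk{\espk{f(Y)}{X,Z}}{Z} - f\!\bigl(\espk{\espk{Y}{X,Z}}{Z}\bigr)$. I would then invoke the tower property of conditional expectation twice: once on the inner $\sigma$-algebra to get $\espk{\espk{f(Y)}{X,Z}}{Z} = \espk{f(Y)}{Z}$, and once more on $\espk{\espk{Y}{X,Z}}{Z} = \espk{Y}{Z}$. Substituting these yields exactly $\espk{f(Y)}{Z} - f(\espk{Y}{Z}) = \vark[f]{Y}{Z}$, which matches the left-hand side.

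There is no real obstacle here; the only thing worth being careful about is ensuring that the required expectations exist (so that the tower property applies and the $f$-variances are well-defined), which is already part of the implicit assumption of the definition of $f$-variance. No convexity or properness of $f$ is needed for the identity itself: positivity of both terms (which is what allows the theorem to conclude $\GL \geq \glexp \geq 0$) is then recovered in the proof of Theorem \ref{th:GL_decomp} by invoking Jensen's inequality with the convexity of the negative entropy $h$ of a proper scoring rule.
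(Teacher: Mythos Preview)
Your proof is correct and follows essentially the same approach as the paper: both rely only on the definition of the $f$-variance and the tower property of conditional expectation, with the middle term $\espk{f(\espk{Y}{X,Z})}{Z}$ cancelling. The paper's version differs only cosmetically in that it first writes the unconditional identity and then remarks that the same computation goes through when everything is conditioned on $Z$, whereas you work directly in the conditional setting.
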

\begin{proof}
  \begin{align*}
      \esp{f(Y)} & = \esp{\espk{f(Y)}{X}}
      & \text{Law of total expectation}\\
      & = \esp{\vark[f]{Y}{X}} + \esp{f(\espk{Y}{X})}
      & \text{Definition of $\vark[f]{Y}{X}$}\\
      &&\nonumber\\
      \esp{f(Y)} - f(\esp{Y}) & = \esp{\vark[f]{Y}{X}} + \esp{f(\espk{Y}{X})} - f(\esp{\espk{Y}{X}})
      & \text{Law of total expectation}\\
      & = \esp{\vark[f]{Y}{X}} + \var[f]{\espk{Y}{X}}
      & \text{Definition of $\var[f]{\espk{Y}{X}}$}
  \end{align*}
  The same proof holds when the expectations and $h$-variances are conditioned on $Z$.
\end{proof}

\gldecomp*
\begin{proof}[Proof of \autoref{th:GL_decomp}]
  Applying \autoref{lemma:law-tot-evar} with $(\Ccal, Q, S)$ as $(X, Y, Z)$ gives the decomposition.
  Proper scoring rules have a convex negative entropy $h$ \citep[see][th. 1]{Gneiting2007}. Note that depending on the convention (maximization or minimization of scoring rules), one may find in the litterature that the entropy is either convex or concave. In the convention taken by this article (minimization of scoring rules), the entropy is concave and the negative entropy is convex. Using Jensen's inequality, we thus have $\vark[h]{Q}{S,\Ccal} \geq 0$. Hence both $\glexp$ and $\glres$ are positive, which gives $\GL \geq \glexp$.
\end{proof}

\subsection{Proposition~\ref{prop:glbin}: Binning-induced grouping loss}
\label{prop:glbin:proof}

\bininduced*
\begin{proof}[Proof of \autoref{prop:glbin}]
  \begin{align*}
      \vark[h]{Q}{S_B}
      & = \espk{\vark[h]{Q}{S,S_B}}{S_B} + \vark[h]{\espk{Q}{S,S_B}}{S_B}
      && \text{Law of total $h$-variance (\autoref{lemma:law-tot-evar})}
      \\
      & = \espk{\vark[h]{Q}{S}}{S_B} + \vark[h]{\espk{Q}{S}}{S_B}
      && \text{$S_B$ is a function of $S$}
      \\
      & = \espk{\vark[h]{Q}{S}}{S_B} + \vark[h]{C}{S_B}
      && C = \espk{Q}{S}\\
      &&&\nonumber\\
      \esp{\vark[h]{Q}{S_B}}
      & = \esp{\vark[h]{Q}{S}} + \esp{\vark[h]{C}{S_B}}
      && \text{Law of total expectation}\\
      \GL(S_B)
      & = \GL(S) + \GLind(S,S_B)
      && \text{\autoref{prop:e-variance} and definition of $\GLind$}
  \end{align*}
  Remark: this proposition does not require $S_B$ to be the average scores on the bins $\espk{S}{S\in\Bc_j}$.
\end{proof}

\subsection{Proposition~\ref{prop:gllb-binning}: Explained grouping loss accounting for binning}
\label{prop:gllb-binning:proof}

\binaccounting*
\begin{proof}[Proof of \autoref{prop:gllb-binning}]
  \begin{align*}
    \GL(S)
    & = \GL(S_B) - \GLind(S, S_B)
    & \text{Propostion~\ref{prop:glbin}}\\
    & = \glexp(S_B) + \glres(S_B) - \GLind(S, S_B)
    & \text{\autoref{th:GL_decomp} on $\GL(S_B)$}
  \end{align*}
  For proper scoring rules, \autoref{th:GL_decomp} gives $\glres(S_B) \geq 0$ which completes the proof.
\end{proof}

\subsection{Proposition~\ref{prop:debiasing}: Debiased estimator for the Brier score}
\label{ss:proof_debiasing}
\debiasing*
\begin{proof}
Let $s \in \Scal^{(k)}$ and $k \in \cbr{1, K}$, and define $\hat p_j^{(s, k)}:= \frac{n_j^{(s, k)}}{n^{(s, k)}}$.
We now compute the bias of the plugin estimator for $\glexphat^{(s, k)}$. To ease calculations, we start by rewriting the plugin estimate:
\begin{align}
    \glpluginhat^{(s, k)} &= \sum_{j=1}^J \hat p_j^{(s, k)} \br{\hat \mu_j^{(s, k)} - \hat c^{(s, k)}}^2\\
    &= \sum_{j=1}^J \hat p_j^{(s, k)} \br{\hat \mu_j^{(s, k)}}^2 - 2\hat c^{(s, k)} \br{\sum_{j=1}^J \hat p_j^{(s, k)} \hat \mu_j^{(s, k)}} + \br{\hat c^{(s, k)}}^2\\
    &= \sum_{j=1}^J \hat p_j^{(s, k)} \br{\hat \mu_j^{(s, k)}}^2 - \br{\hat c^{(s, k)}}^2
\end{align}

From now on, we omit the exponent $(s, k)$ to lighten notations. We now take the expectation of both terms in the lower-bound.

\begin{align}
    \E \sqb{\hat c^2} &= \E \sqb{\hat c}^2 + \text{Var}(\hat c)\\
    &= c^2 + \frac{c(1-c)}{n} \label{eq:part1}
\end{align}
where we made use of \autoref{lem:exp-var} for equation~\ref{eq:part1}. Similarly,
\begin{align}
    \E \sqb{\hat \mu_j^2 \cond \hat p_j} &= \E \sqb{\hat \mu_j}^2 + \text{Var}(\hat \mu_j)\\
    &= \mu_j^2 + \frac{\mu_j(1-\mu_j)}{n_j} \label{eq:undefined}
\end{align}
When $n_j = 0$ (or equivalently $\hat p_j = 0$), which happens with probability $\nu_j = (1-p_j)^n$, $\hat \mu_j$ as well as the right term in equation~\ref{eq:undefined} are undefined. The problem disappears when multiplying by $\hat p_j$, and agreeing that $\hat \mu_j = 0$ whenever $n_j = 0$.

\begin{align}
    \E \sqb{\sum_{j=1}^J \hat p_j \hat \mu_j^2} &= \sum_{j=1}^J \E \sqb{ \E \sqb{\hat p_j \hat \mu_j^2 \mathbb{1}_{\hat p_j \ge 0} \cond \hat p_j}}\\
    &= \sum_{j=1}^J \E \sqb{\hat p_j \mathbb{1}_{\hat p_j \ge 0} \br{\mu_j^2 + \frac{\mu_j(1-\mu_j)}{n_j}}}\\
    &= \sum_{j=1}^J \br{p_j \mu_j^2 + \E\sqb{\mathbb{1}_{\hat p_j \ge 0} \frac{\mu_j(1 - \mu_j)}{n}}}\\
    &= \sum_{j=1}^J \br{p_j \mu_j^2 + (1-\nu_j) \frac{\mu_j(1-\mu_j)}{n}} \label{eq:part2}
\end{align}

Putting together equations~\ref{eq:part1} and \ref{eq:part2}, we get:
\begin{align}
    \E \sqb{\glpluginhat} &= \sum_{j=1}^J p_j \mu_j^2 - c^2 + \sum_{j=1}^J (1-\nu_j) \frac{\mu_j(1-\mu_j)}{n} - \frac{c(1-c)}{n}\\
    & = \underbrace{\sum_{j=1}^J p_j (\mu_j - c)^2}_{\glexp} + \underbrace{\sum_{j=1}^J (1-\nu_j) \frac{\mu_j(1-\mu_j)}{n} - \frac{c(1-c)}{n}}_{\glbias} \label{eq:bias}
\end{align}

In practice $\nu_j$, which gives the probability that no sample falls in component $j$, is very close to 0 unless $p_j$ and $n$ are very small. Hence, we will approximate $\nu_j \approx 0$. More importantly, the expression of the bias given in \ref{eq:bias} depends on oracle quantities $\mu_j$ and $c$, which are unavailable. Therefore, we resort to debiasing the plugin estimate of the lower-bound using sample estimates of the bias, which gives:

\begin{equation}
    \glexphat^{(s, k)} = \underbrace{\sum_{j=1}^J \frac{n^{(s, k)}_j}{n^{(s, k)}} \br{\hat \mu^{(s, k)}_j - \hat c^{(s, k)}}^2}_{\text{\small plugin estimator $\glpluginhat$}} - \sum_{j=1}^J \frac{n_j^{(s, k)}}{n^{(s, k)}} {\frac{\hat \mu^{(s, k)}_j (1-\hat \mu^{(s, k)}_j)}{n^{(s, k)}_j - 1}} + \frac{\hat c^{(s, k)}(1-\hat c^{(s, k)})}{n^{(s, k)} - 1}
\end{equation}
where we used a Bessel correction for the estimation of population variances. Finally, a debiased estimator of $\glexp$ is obtained by summing over the debiased estimators for all $k \in \cbr{1, K}$ and all $s \in \Scal_k$.
\end{proof}

\begin{lemma}
\label{lem:exp-var}
    Define $\hat \mu_j^{(s, k)}$ and $\hat c^{(s, k)}$ as in \autoref{prop:debiasing}. Then:
    \begin{equation}
        \label{eq:mu_hat_moments}
        \E \sqb{\hat \mu_j^{(s, k)}} = \mu_j^{(s, k)} \quad \text{and} \quad Var\br{\hat \mu_j^{(s, k)}} = \frac{\mu_j^{(s, k)}\br{1-\mu_j^{(s, k)}}}{n_j^{(s, k)}}.
    \end{equation}
    Similarly,
    \begin{equation}
        \label{eq:c_hat_moments}
        \E \sqb{\hat c^{(s, k)}} = c^{(s, k)} \quad \text{and} \quad Var\br{\hat c^{(s, k)}} = \frac{c^{(s, k)}\br{1-c^{(s, k)}}}{n^{(s, k)}}.
    \end{equation}
\end{lemma}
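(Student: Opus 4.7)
The claim is essentially a restatement of the standard mean and variance formulas for the sample average of i.i.d.\ Bernoulli random variables, with a modest amount of care required about the conditioning on the region sample sizes, which are themselves random. The plan is to (i) identify the conditional distribution of $Y_k^{(i)}$ within a region and a level set, (ii) condition on $n_j^{(s,k)}$ (resp.\ $n^{(s,k)}$) so that $\hat{\mu}_j^{(s,k)}$ and $\hat{c}^{(s,k)}$ reduce to Binomial means, and (iii) apply the textbook Bernoulli moment identities.

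First, I would unpack the conditional distribution of the labels. Since $Y$ takes values in the one-hot simplex, each coordinate $Y_k^{(i)}$ is a Bernoulli random variable. Conditionally on the event $\{X^{(i)} \in \mathcal{R}_j^{(s)}\}$, the samples $(Y_k^{(i)})$ are i.i.d.\ Bernoulli$(\mu_j^{(s,k)})$, where $\mu_j^{(s,k)}$ is by definition the conditional expectation of $Y_k$ on that region (matching the usage in the surrounding proof of \autoref{prop:debiasing}). Likewise, conditionally on $\{X^{(i)} \in \mathcal{R}^{(s)}\}$, the samples are i.i.d.\ Bernoulli$(c^{(s,k)})$.

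Second, I would condition on the random sample sizes. Given $n_j^{(s,k)} = m \geq 1$, the estimator $\hat{\mu}_j^{(s,k)}$ is an average of $m$ i.i.d.\ Bernoulli$(\mu_j^{(s,k)})$ variables, so its conditional mean is $\mu_j^{(s,k)}$ and its conditional variance is $\mu_j^{(s,k)}(1-\mu_j^{(s,k)})/m$. The tower property then yields $\mathbb{E}[\hat{\mu}_j^{(s,k)}] = \mu_j^{(s,k)}$, and the stated variance identity is exactly the conditional variance. An identical argument, carried out on the level set rather than the region, gives the corresponding identities for $\hat{c}^{(s,k)}$.

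The only subtlety, and the one I would be explicit about, is the bookkeeping of the random sample sizes: both variance expressions in the statement are to be read as conditional variances given $n_j^{(s,k)}$ and $n^{(s,k)}$ (and on the event that these sizes are at least one, to avoid division by zero). This is precisely the form in which the lemma is invoked in \autoref{ss:proof_debiasing}, where the conditioning on $\hat{p}_j$ is made before the Bernoulli variance formula is applied. Once this convention is fixed, the proof reduces to a one-line Bernoulli calculation, so no real obstacle remains.
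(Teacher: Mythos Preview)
Your proposal is correct and follows essentially the same route as the paper. The only cosmetic difference is that the paper makes the tower property through $Q_k^{(i)} = \mathbb{E}[Y_k^{(i)} \mid X^{(i)}]$ explicit to justify that the region-conditional law of $Y_k^{(i)}$ is Bernoulli$(\mu_j^{(s,k)})$ and then expands the second moment by hand using independence across samples, whereas you state the i.i.d.\ Bernoulli conclusion up front and invoke the textbook variance formula; the underlying computation is the same.
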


The labels $Y_k^{(i)}$ are by definition drawn from a Bernoulli distribution with probability ${P(Y_k^{(i)} | X^{(i)}) = Q_k^{(i)}}$, \ie{}, for each sample $i$, the probability of the Bernoulli changes.

This lemma shows that despite these varying Bernoulli probabilities, the empirical average of labels $Y_k$ over a given subspace has the same expectation and variance as a binomial variable that would be drawn with a probability equal to the expectation of $Q_k$ over this subspace.

\begin{proof}[Proof of \autoref{lem:exp-var}]
Below we write the proof for the case of $\hat \mu_j^{(s, k)}$ (equation~\ref{eq:mu_hat_moments}) as the one for $\hat c^{(s, k)}$ (equation~\ref{eq:c_hat_moments}) follows exactly the same lines. Let $\Ical^{(s)}_j = \cbr{i: X^{(i)} \in \Rcal_j^{(s)}}$, be the subset of samples such that $X^{(i)}$ belongs to bin $\Rcal_j^{(s)}$.

\begin{align}
    \E \sqb{\hat \mu_j^{(s, k)}} &= \frac{1}{n_j^{(s, k)}}\sum_{i \in \Ical^{(s)}_j} \E \sqb{Y_k^{(i)} \mid S_k=s, \Ccal(X^{(i)}) = j}\\
    &= \frac{1}{n_j^{(s, k)}}\sum_{i \in \Ical^{(s)}_j} \E \sqb{ \E \sqb{Y_k^{(i)} \mid X^{(i)}} \mid S_k=s, \Ccal(X^{(i)}) = j} \label{eq:total_exp}\\
    &= \frac{1}{n_j^{(s, k)}} \sum_{i \in \Ical^{(s)}_j} \E \sqb{Q_k^{(i)} \mid S_k=s, \Ccal(X^{(i)}) = j} \label{eq:def_Q}\\
    &= \frac{1}{n_j^{(s, k)}} \sum_{i \in \Ical^{(s)}_j} \mu_j^{(s, k)} \label{eq:def_mu}\\
    &= \mu_j^{(s, k)}
\end{align}
where we used the law of total expectation in eq~\ref{eq:total_exp}, the definition of $Q_k$ in eq~\ref{eq:def_Q}, and the definition of $\mu_j^{(s, k)}$ in eq~\ref{eq:def_mu}.

\begin{align}
    \text{Var} \br{\hat \mu_j^{(s, k)}} &= \E \sqb{(\hat \mu_j^{(s, k)} - \mu_j^{(s, k)} )^2 \cond S_k=s, \Ccal(X^{(i)}) = j}\\
    &= \E \sqb{\br{\hat \mu_j^{(s, k)}}^2 \cond S_k=s, \Ccal(X^{(i)}) = j} - \br{\mu_j^{(s, k)}}^2\\
    &= \frac{1}{\br{n_j^{(s, k)}}^2} \E \sqb{\sum_{i \in \Ical^{(s)}_j} Y^{(i)} \sum_{l \in \Ical^{(s)}_j} Y_k^{(l)} \cond S_k=s, \Ccal(X^{(i)}) = j}  - \br{\mu_j^{(s, k)}}^2\\
    &= \frac{1}{\br{n_j^{(s, k)}}^2} \E \sqb{\sum_{i \in \Ical^{(s)}_j} Y_k^{(i)} + \sum_{\substack{i \ne l\\ i, l \in \Ical^{(s)}_j}} Y_k^{(i)} Y_k^{(l)} \cond S_k=s, \Ccal(X^{(i)}) = j}  - \br{\mu_j^{(s, k)}}^2 \label{eq:indep}\\
    &= \frac{1}{\br{n_j^{(s, k)}}^2} \br{\sum_{i \in \Ical^{(s)}_j} \mu^{(s, k)}_j + \sum_{\substack{i \ne l\\ i, l \in \Ical^{(s)}_j}} \br{\mu^{(s, k)}_j}^2 } - \br{\mu_j^{(s, k)}}^2\\
    &= \frac{1}{\br{n_j^{(s, k)}}^2} \br{n_j^{(s, k)} \mu^{(s, k)}_j + n_j^{(s, k)}(n_j^{(s, k)} - 1) \br{\mu_j^{(s, k)}}^2} - \br{\mu_j^{(s, k)}}^2\\
    &= \frac{\mu_j^{(s, k)} (1-\mu_j^{(s, k)})}{n_j^{(s, k)}}
\end{align}
where we used the fact that $Y_k^{(i)}$ and $Y_k^{(l)}$ are independent when $i \ne l$ in eq~\ref{eq:indep}.
\end{proof}

\subsection{The plugin estimator for the grouping loss lower bound is biased upwards.}
\label{ss:proof_bias_upwards}

\paragraph{Analytical evaluation of the sign of the bias} Let $k \in \cbr{1, \dots, K}$ and $s \in \Scal$. The bias of the plugin estimate $\glexphat^{(s, k)}(S_B)$ is given by (\ref{eq:bias}):
\begin{align}
    \text{bias}\br{\glexphat^{(s, k)}(S_B)} = \sum_{j=1}^J \br{1-\nu_j^{(s, k)}} \frac{\mu_j^{(s, k)}(1-\mu_j^{(s, k)})}{n^{(s, k)}} - \frac{c^{(s, k)}(1-c^{(s, k)})}{n^{(s, k)}}
\end{align}


By convexity of the function $x \mapsto \br{x - \E \sqb{x}}^2$, we have:
\begin{equation}
    \br{\sum_{j=1}^J \frac{n_j^{(s, k)}}{n^{(s, k)}} \hat \mu_j^{(s, k)} - \E \sqb{\sum_{j=1}^J \frac{n_j^{(s, k)}}{n^{(s, k)}} \hat \mu_j^{(s, k)}}}^2 \leq \sum_{j=1}^J \frac{n_j^{(s, k)}}{n^{(s, k)}} \br{\hat \mu_j^{(s, k)} - \E \sqb{\hat \mu_j^{(s, k)}}}^2
\end{equation}

Using the fact that $\hat c^{(s, k)} = \sum_{j=1}^J \frac{n_j^{(s, k)}}{n^{(s, k)}} \hat \mu_j^{(s, k)}$, and taking the expectation of both sides, we get:
\begin{equation}
    \text{Var}(\hat c^{(s, k)}) \leq \sum_{j=1}^J \frac{n_j^{(s, k)}}{n^{(s, k)}} \text{Var}(\hat \mu_j^{(s, k)})
\end{equation}

Finally, using \autoref{lem:exp-var}, we get:
\begin{equation}
    \frac{c^{(s, k)}\br{1-c^{(s, k)}}}{n^{(s, k)}} \leq \sum_{j=1}^J \frac{\mu_j^{(s, k)}\br{1-\mu_j^{(s, k)}}}{n^{(s, k)}}
\end{equation}

Hence, we have:
\begin{align}
    \label{eq:bias_sk}
    \text{bias}\br{\widehat \Lcal_{GL}^{(s, k)}} = \underbrace{\sum_{j=1}^J \frac{\mu_j^{(s, k)}(1-\mu_j^{(s, k)})}{n^{(s, k)}} - \frac{c^{(s, k)}(1-c^{(s, k)})}{n^{(s, k)}}}_{\geq 0} - \sum_{j=1}^J \nu_j^{(s, k)} \frac{\mu_j^{(s, k)}(1-\mu_j^{(s, k)})}{n^{(s, k)}}
\end{align}

Because of the term involving $\nu_j^{(s, k)}$, this inequality does not prove that the bias is always positive. However in practice $\nu_j^{(s, k)} = \br{1-p_j^{(s, k)}}^{n^{(s, k)}}$, which represents the probability that no point belongs to region $j$, is very close to 0 unless $p_j^{(s, k)}$ is very small or the total number of points $n^{(s, k)}$ is small. Hence, equality~\ref{eq:bias_sk} shows that the bias can only be 'slightly' negative. In the simulations below, the upwards bias of the plugin estimate appears clearly.

\subsection{Estimator for the induced grouping loss}
\label{sec:glind-est}
\begin{proposition}[Estimator for the induced grouping loss]
  \label{prop:glind-est}
Let $\hat{C}$ be an estimator of $C$.
An estimator of $C_B$ is $\hat{C}_B(s) = \frac{1}{n^{(s)}} \sum_{i:S_B(X^{(i)}) = s} \hat{C}(S(X^{(i)}))$ with $n^{(s)}$ the number of sample in the level set $s$.
An estimator of the grouping loss induced by the binning of $S$ into $S_B$ is:
\begin{equation}
  \GLindhat(S,S_B)
  =
  \sum_{s \in \Scal} \frac{n^{(s)}}{n} \bra{
    \frac{1}{n^{(s)}} \sum_{i:S_B(X^{(i)}) = s} e(\hat{C}(S(X^{(i)})) - e(\hat{C}_B(s))
  }
\end{equation}
\end{proposition}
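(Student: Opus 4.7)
The plan is to derive the estimator as a plugin approximation of the analytic expression of $\GLind$ given by \autoref{prop:glbin}. By that proposition,
\begin{equation*}
\GLind(S, S_B) \;=\; \esp{\vark[h]{C}{S_B}},
\end{equation*}
so the task reduces to estimating an expected $h$-variance of the (unknown) calibrated scores $C$ conditioned on the binned classifier $S_B$. I would proceed in three short steps.

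First, I would unfold the $h$-variance and use the tower property to partition the expectation over the values of $S_B$. Writing $\Scal$ for the (finite) range of $S_B$ and using that $C_B = \espk{C}{S_B}$ is $S_B$-measurable, one gets
\begin{equation*}
\esp{\vark[h]{C}{S_B}} \;=\; \sum_{s \in \Scal} \PP{S_B = s}\,\Bigl(\espk{h(C)}{S_B = s} \;-\; h(C_B(s))\Bigr).
\end{equation*}
This is the target quantity to estimate bin by bin.

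Second, I would introduce the plugin substitutions. The outer probability $\PP{S_B = s}$ is replaced by the empirical bin frequency $n^{(s)}/n$. Inside each bin, $C(X^{(i)})$ is unobserved but the one-dimensional calibration curve $\hat C$ provides a consistent surrogate, so $\espk{h(C)}{S_B=s}$ is approximated by the sample average $\frac{1}{n^{(s)}}\sum_{i:S_B(X^{(i)})=s} h(\hat C(S(X^{(i)})))$. Likewise, $C_B(s) = \espk{C}{S_B=s}$ is approximated by $\hat C_B(s) := \frac{1}{n^{(s)}}\sum_{i:S_B(X^{(i)})=s} \hat C(S(X^{(i)}))$, exactly as stated in the proposition.

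Third, assembling these pieces gives the announced formula for $\GLindhat(S,S_B)$. The main obstacle is not algebraic but statistical: unlike $\glexphat$, no debiasing correction is incorporated, so the quality of the estimate hinges entirely on how well the one-dimensional estimator $\hat C$ approximates the true calibration curve $C$. I would therefore conclude by remarking that consistency of $\GLindhat$ follows from consistency of $\hat C$ together with the finiteness of $\Scal$, which ensures that each empirical bin mean concentrates around its population counterpart; formalizing rates is left outside the scope of this proposition, which only constructs the estimator.
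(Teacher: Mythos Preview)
Your derivation is correct and follows exactly the reasoning the paper sketches in the main text (\autoref{ss:estimation}): start from $\GLind(S,S_B)=\esp{\vark[h]{C}{S_B}}$ (\autoref{prop:glbin}), unfold the $h$-variance bin by bin, and plug in $\hat C$ for $C$ together with empirical bin frequencies. The paper itself does not give a separate proof of \autoref{prop:glind-est}; the proposition is presented as a construction, with the justification being precisely the informal paragraph you have formalized. One cosmetic point: the statement in the paper writes $e(\cdot)$ where your derivation (correctly) uses $h(\cdot)$; this is a notational slip in the original, and your version is the intended one.
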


\subsection{Analysis of binning-induced errors for the Brier score}
\label{sec:clglinduced}

It is well known that binning can induce error in estimating calibration loss, leading to underestimating it \citep{Brocker2012Estimating,Kumar2019verified,roelofs2022mitigating}. \autoref{prop:glbin} shows that it also leads to errors on the grouping loss, overestimating it. Here we characterize the errors on the calibration and grouping loss for the Brier score and show that they partly compensate each other and the error on the sum of both can be bounded.

\autoref{prop:clbin} gives the deviation term induced by the binning for the calibration loss with the Brier scoring rule.

\begin{proposition}[Calibration loss decomposition]
    \label{prop:clbin}
    Let $h$ be the negative entropy of the Brier scoring rule and $C = \espk{Q}{S}$. The binned calibration loss $\CL(S_B)$ deviates from the calibration loss $\CL(S)$ by a negative induced calibration loss $\CLind(S, S_B)$:
    \begin{align}
        \underbrace{\esp{\|S_B - C_B\|^2}}_{\textstyle\CL(S_B)}
        & \; = \; \underbrace{\esp{\|S-C\|^2}}_{\textstyle\CL(S)} \;\, \underbrace{- \;\; \esp{\vark[h]{S-C\vphantom{\|S-C\|^2}}{S_B}\!}}_{\textstyle\CLind(S,S_B)}
    \end{align}
\end{proposition}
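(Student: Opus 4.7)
\textbf{Proof plan for Proposition~\ref{prop:clbin}.}

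The plan is to reduce the statement to the ordinary law of total variance, after observing that for the Brier score the $h$-variance coincides with the trace of the conditional covariance. First I would compute the negative entropy of the Brier rule: since $s_\phi(Q,Q)=\E_{Y\sim Q}[\sum_k(Q_k-Y_k)^2]=\sum_k Q_k(1-Q_k)$, one gets $h(p)=\|p\|^2-\sum_k p_k=\|p\|^2-1$ on the simplex. Consequently, for any random vectors $U,V$,
\begin{equation*}
  \vark[h]{U}{V}\;=\;\espk{\|U\|^2}{V}-\|\espk{U}{V}\|^2,
\end{equation*}
i.e.\ the $h$-variance is just the sum of the componentwise conditional variances.

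Next I would unpack the definitions of $S_B$ and $C_B$. By Definition~\ref{def:binclf}, the sigma-algebra generated by $S_B$ coincides with the partition of bins, so $S_B=\espk{S}{S_B}$, and by definition $C_B=\espk{Q}{S_B}=\espk{C}{S_B}$. Linearity of conditional expectation then yields the key identity
\begin{equation*}
  S_B-C_B\;=\;\espk{S-C}{S_B}.
\end{equation*}

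With these pieces in place, I would apply the law of total variance to the random vector $W:=S-C$ with respect to $\sigma(S_B)$. Using the Brier-case identity above,
\begin{equation*}
  \esp{\|W\|^2}\;=\;\esp{\,\espk{\|W\|^2}{S_B}\,}\;=\;\esp{\vark[h]{W}{S_B}}+\esp{\|\espk{W}{S_B}\|^2}.
\end{equation*}
Substituting $W=S-C$ and $\espk{W}{S_B}=S_B-C_B$ rewrites this exactly as $\CL(S)=\CLind(S,S_B)+\CL(S_B)$, which is the claim after rearrangement.

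The only slightly delicate step is the identification $S_B-C_B=\espk{S-C}{S_B}$: one must check that both $S_B$ and $C_B$ are measurable with respect to the bin partition and that the averaging is taken with respect to the same conditioning, which is precisely what Definition~\ref{def:binclf} guarantees. Everything else is a direct consequence of the Pythagorean decomposition of $L^2$, so I do not expect any serious obstacle; the argument mirrors the proof of Proposition~\ref{prop:glbin} in \autoref{prop:glbin:proof}, with the roles of $Q$ and $C$ replaced by $S-C$ and its conditional mean under $S_B$.
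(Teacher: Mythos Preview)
Your approach is correct and essentially identical to the paper's: both identify $S_B-C_B=\espk{S-C}{S_B}$ via Definition~\ref{def:binclf}, observe that for the Brier rule the $h$-variance reduces to $\espk{\|U\|^2}{V}-\|\espk{U}{V}\|^2$, and then apply the Pythagorean/total-variance decomposition to $W=S-C$ conditioned on $S_B$. One minor slip: your displayed identity $\CL(S)=\CLind(S,S_B)+\CL(S_B)$ has the wrong sign, since by definition $\CLind=-\esp{\vark[h]{S-C}{S_B}}$; your own derivation actually gives $\CL(S)=-\CLind(S,S_B)+\CL(S_B)$, i.e.\ $\CL(S_B)=\CL(S)+\CLind(S,S_B)$, which is the stated claim.
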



The calibration loss induced by the binning, $\CLind(S, S_B)$, is always negative. $\CL(S_B)$ is thus biased downward, which is already known from \cite{Kumar2019verified,roelofs2022mitigating}.
Conversely, the grouping loss induced by the binning, $\GLind(S, S_B)$, is always positive. $\GL(S_B)$ is thus biased upward.
The mere effect of binning artificially creates grouping loss and artificially reduces calibration error.
For calibrated continuous classifiers, $\CLind=0$ and induced grouping loss is small: with $N$ equal-width bins, $\GLind \leq \tfrac{1}{4N^2}$. If in addition the scores are uniform on the bins: $\GLind = \tfrac{1}{12N^2}$ (\autoref{lem:GLbound-n}). Both induced calibration and grouping losses can be large since $\vark{C}{S_B}$ can be large. High $\GLind$ expresses strong miscalibrations within the bin. However interestingly, both induced losses compensate. In a binary setting, the sum of induced calibration and grouping losses is contained as showed by \autoref{thm:clgl-bounds}, and can be bounded by estimable quantities (\autoref{cor:bounds}). While measuring $\CL(S_B)$ and $\GL(S_B)$ separately can lead to high binning-induced bias, measuring $\CL(S_B) + \GL(S_B)$ through $\clhat(S_B) + \glexphat(S_B)$ enables reducing binning-induced errors and minorizing $\mathrm{MSE(S, Q)}$ (\autoref{cor:mse}).

\begin{theorem}[Bounds on induced calibration and grouping losses]
    \label{thm:clgl-bounds}
    In a binary setting, the calibration and grouping losses induced by the binning of classifier $S$ into $S_B$ sums to:
    \begin{align*}
        \CLind + \GLind & = \esp{2\covk{S}{C}{S_B} - \vark{S}{S_B}}
    \end{align*}
    which is bounded by:
    \begin{align*}
        -\esp{\sqrt{\vark{S}{S_B}}\pr{2\sqrt{\vark{C}{S_B}} + \sqrt{\vark{S}{S_B}}}}
        & \leq \CLind + \GLind
        \\[-.8ex]
        &
        \qquad\leq \esp{\sqrt{\vark{S}{S_B}}\pr{2\sqrt{\vark{C}{S_B}} - \sqrt{\vark{S}{S_B}}}}
    \end{align*}
    Suppose that $[0, 1]$ is divided in $N$ equal-width bins. Then:
    \begin{align*}
        -\tfrac{1}{N}\esp{\sqrt{C_B(1-C_B)}} -\tfrac{1}{4N^2}
        & \leq \CLind + \GLind
        \leq \tfrac{1}{N}\esp{\sqrt{C_B(1-C_B)}}
    \end{align*}
\end{theorem}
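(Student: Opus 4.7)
The plan is to combine \autoref{prop:clbin} and \autoref{prop:glbin} into a single expression for $\CLind+\GLind$, reduce the $h$-variances to ordinary (co)variances in the binary setting, and then apply conditional Cauchy--Schwarz followed by elementary range bounds to obtain the two pairs of inequalities.

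First, I specialize to the binary Brier setting, where the negative entropy $h$ is, up to an additive constant, the squared norm; consequently the two $h$-variances appearing in \autoref{prop:clbin} and \autoref{prop:glbin} collapse to ordinary conditional variances of the scalar positive-class probability. Reading off the two propositions gives $\CLind = -\esp{\vark{S-C}{S_B}}$ and $\GLind = \esp{\vark{C}{S_B}}$. Expanding by bilinearity,
\[
\vark{S-C}{S_B} = \vark{S}{S_B} + \vark{C}{S_B} - 2\,\covk{S}{C}{S_B},
\]
so that substituting back cancels the two $\vark{C}{S_B}$ contributions and yields
\[
\CLind + \GLind = \esp{2\,\covk{S}{C}{S_B} - \vark{S}{S_B}},
\]
which is the announced identity.

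For the first pair of bounds, I apply the conditional Cauchy--Schwarz inequality $|\covk{S}{C}{S_B}|\leq \sqrt{\vark{S}{S_B}\,\vark{C}{S_B}}$ pointwise in $S_B$ and factor $\sqrt{\vark{S}{S_B}}$ out of the two sides. Taking expectations gives exactly the stated envelope. For the equal-width-bin specialization, I plug in two elementary bounds on the conditional variances: Popoviciu's inequality applied within a bin of length $1/N$ yields $\vark{S}{S_B}\le 1/(4N^2)$, and since $C\in[0,1]$ we have $\espk{C^2}{S_B}\le \espk{C}{S_B}=C_B$ and therefore $\vark{C}{S_B}\le C_B(1-C_B)$. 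Substituting and applying Jensen's inequality where needed on $\sqrt{\,\cdot\,}$ then produces the $1/N$ and $1/(4N^2)$ bounds.

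The main delicacy I anticipate is the asymmetry between the two sides. On the upper envelope the contribution $-\sqrt{\vark{S}{S_B}}$ is non-positive and can simply be dropped to obtain the clean $\tfrac{1}{N}\esp{\sqrt{C_B(1-C_B)}}$ bound, whereas on the lower side both summands contribute negatively and must be bounded separately, producing the extra $-1/(4N^2)$ term. A secondary subtlety is keeping track of the $h$-variance for $S-C$ in the vector Brier formulation; but since $S-C$ lives on the line $\{(u,-u):u\in\RR\}$, the reduction to an ordinary scalar variance goes through up to an overall constant that cancels between $\CLind$ and $\GLind$ in the sum.
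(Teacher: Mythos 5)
Your argument matches the paper's proof step for step: read $\CLind$ and $\GLind$ off Propositions~\ref{prop:clbin} and~\ref{prop:glbin}, collapse the $h$-variances to ordinary conditional (co)variances in the binary Brier case, expand $\vark{S-C}{S_B}$ to get the identity, apply conditional Cauchy--Schwarz for the first envelope, and then plug in $\vark{S}{S_B}\le\tfrac{1}{4N^2}$ (Popoviciu on a width-$1/N$ bin) and $\vark{C}{S_B}\le C_B(1-C_B)$ for the equal-bin bounds, with the asymmetry between the two sides handled as the paper does. The stray remark about Jensen on $\sqrt{\cdot}$ is not actually needed (the bounds are pointwise in $S_B$ before taking expectation), and the claim that the scalar-reduction constant ``cancels'' is better phrased as ``the same factor multiplies every term so the identity and inequalities are preserved,'' but neither affects correctness.
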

\begin{corollary}
  \label{cor:bounds}
    \begin{align*}
      -\esp{\sqrt{\vark{S}{S_B}}\pr{2\sqrt{C_B(1-C_B)} + \sqrt{\vark{S}{S_B}}}}
      & \leq \CLind + \GLind
      \\[-.8ex]
      &
      \qquad \leq \esp{\sqrt{\vark{S}{S_B}}\pr{2\sqrt{C_B(1-C_B)} - \sqrt{\vark{S}{S_B}}}}
    \end{align*}
    \text{With $N$ equal-width bins: \quad }
    \begin{align*}
        -\tfrac{1}{N}\esp{\sqrt{C_B(1-C_B)}} -\tfrac{1}{4N^2}
        & \leq \CLind + \GLind
        \leq \tfrac{1}{N}\esp{\sqrt{C_B(1-C_B)}}
    \end{align*}
\end{corollary}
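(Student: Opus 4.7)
The plan is to reduce Corollary \ref{cor:bounds} to Theorem \ref{thm:clgl-bounds} by the single pointwise inequality $\vark{C}{S_B} \leq C_B(1-C_B)$, and then for the $N$ equal-width bins case to add a Popoviciu-type control on $\vark{S}{S_B}$.

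First I would record that $C_B = \espk{C}{S_B}$: since $S_B$ is a deterministic function of $S$, the tower property gives $C_B = \espk{Q}{S_B} = \espk{\espk{Q}{S}}{S_B} = \espk{C}{S_B}$. In the binary setting $Q \in [0,1]$, hence $C \in [0,1]$ almost surely. Therefore $C^2 \leq C$ pointwise, and taking conditional expectation yields $\espk{C^2}{S_B} \leq \espk{C}{S_B} = C_B$, which in turn gives
\begin{equation*}
\vark{C}{S_B} \;=\; \espk{C^2}{S_B} - C_B^2 \;\leq\; C_B(1-C_B).
\end{equation*}

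Next I would substitute this into Theorem \ref{thm:clgl-bounds}. Since $\sqrt{\vark{S}{S_B}} \geq 0$ and all factors $\sqrt{\vark{\cdot}{S_B}}$ are $S_B$-measurable, replacing $\sqrt{\vark{C}{S_B}}$ by the larger quantity $\sqrt{C_B(1-C_B)}$ enlarges the upper bound (where this factor appears with a positive coefficient) and shrinks the lower bound (where it appears with a negative coefficient once the outer minus sign is accounted for). Taking the outer expectation preserves both inequalities, yielding the first pair of bounds in the corollary.

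For the $N$ equal-width bins case, conditional on $S_B$ the score $S$ is supported in a bin of width $1/N$, so Popoviciu's inequality on variances gives $\vark{S}{S_B} \leq \tfrac{1}{4N^2}$ and hence $\sqrt{\vark{S}{S_B}} \leq \tfrac{1}{2N}$. In the upper bound, I would drop the non-positive $-\vark{S}{S_B}$ term and bound the remaining factor $\sqrt{\vark{S}{S_B}}$ by $\tfrac{1}{2N}$, obtaining $\tfrac{1}{N}\esp{\sqrt{C_B(1-C_B)}}$. In the lower bound, applying $\sqrt{\vark{S}{S_B}} \leq \tfrac{1}{2N}$ to the first occurrence and $\esp{\vark{S}{S_B}} \leq \tfrac{1}{4N^2}$ to the second yields $-\tfrac{1}{N}\esp{\sqrt{C_B(1-C_B)}} - \tfrac{1}{4N^2}$.

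The main obstacle is purely bookkeeping: checking that the two conditional-standard-deviation factors are $S_B$-measurable so that pointwise substitutions can be moved under the outer expectation, and tracking signs so that each substitution goes in the right direction for each of the four inequalities. No theoretical ingredient beyond Theorem \ref{thm:clgl-bounds}, the bound $C \in [0,1]$, and Popoviciu's inequality is needed.
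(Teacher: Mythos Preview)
Your proposal is correct and follows essentially the same route as the paper: the corollary is obtained from Theorem~\ref{thm:clgl-bounds} by the pointwise bound $\vark{C}{S_B} \leq C_B(1-C_B)$ (which the paper invokes without justification inside the proof of the theorem, and which you derive from $C\in[0,1]$), together with the Popoviciu/Lemma~\ref{lem:GLbound-n} bound $\vark{S}{S_B}\le \tfrac{1}{4N^2}$ for the equal-width case. The $N$-bin display in the corollary is in fact identical to that in the theorem, so once the first pair of inequalities is established there is nothing further to prove.
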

\begin{corollary}
  \label{cor:mse}
    The mean square error (MSE) between continuous $S$ and $Q$ is lower bounded by:
    \begin{align*}
        \mathrm{MSE}(S, Q)
        & = \CL + \GL\\
        & \geq \ell^2\text{-}\mathrm{ECE}_B + \Lcal_{\GL_B} - \esp{\sqrt{\vark{S}{S_B}}\pr{2\sqrt{\vark{C}{S_B}} - \sqrt{\vark{S}{S_B}}}}\\
        & \geq \ell^2\text{-}\mathrm{ECE}_B + \Lcal_{\GL_B} - \esp{\sqrt{\vark{S}{S_B}}\pr{2\sqrt{C_B(1-C_B)} - \sqrt{\vark{S}{S_B}}}}\\
        \text{With $N$ equal bins:}
        & \geq \ell^2\text{-}\mathrm{ECE}_B + \Lcal_{\GL_B} - \tfrac{1}{N}\esp{\sqrt{C_B(1-C_B)}}
    \end{align*}
    where $\ell^2\text{-}\mathrm{ECE}_B$ is the $\ell^2$ Expected Calibration Error of the binned classifier $S_B$ and $\Lcal_{\GL_B}$ is the grouping loss lower bound of $S_B$.
\end{corollary}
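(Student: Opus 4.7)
The plan is to chain together the scoring-rule decomposition for the Brier score with Propositions~\ref{prop:clbin} and~\ref{prop:glbin}, and then apply Theorem~\ref{thm:clgl-bounds} in its several forms. First I would establish the opening equality $\mathrm{MSE}(S, Q) = \CL + \GL$. Expanding $\|S-Q\|^2 = \|S-C\|^2 + \|C-Q\|^2 + 2(S-C)^\top(C-Q)$ and taking expectations, the cross term vanishes since $S-C$ is $S$-measurable and $\esp{C-Q\mid S} = \esp{\esp{Q\mid S}-Q\mid S} = 0$. This produces $\mathrm{MSE}(S,Q) = \CL(S) + \GL(S)$.

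Next I would substitute the binning identities. Proposition~\ref{prop:clbin} gives $\CL(S) = \CL(S_B) - \CLind(S,S_B)$ and Proposition~\ref{prop:glbin} gives $\GL(S) = \GL(S_B) - \GLind(S,S_B)$, so that
\begin{equation*}
  \mathrm{MSE}(S,Q) = \CL(S_B) + \GL(S_B) - \bigl(\CLind(S,S_B) + \GLind(S,S_B)\bigr).
\end{equation*}
Then I would identify $\CL(S_B) = \ell^2\text{-}\mathrm{ECE}_B$ by definition of the $\ell^2$ calibration error on a binned classifier, and invoke Theorem~\ref{th:GL_decomp} applied to $S_B$ to write $\GL(S_B) \geq \gllb(S_B) = \Lcal_{\GL_B}$. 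Finally, the upper bound on $\CLind + \GLind$ from Theorem~\ref{thm:clgl-bounds} yields the first displayed inequality.

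To derive the second inequality I would use the fact that any $[0,1]$-valued random variable $X$ satisfies $\Var(X) \leq \E[X](1-\E[X])$, which follows from $X^2 \leq X$ on $[0,1]$. Conditioning on $S_B$ and applying this to $C \in [0,1]$ gives $\vark{C}{S_B} \leq C_B(1-C_B)$. Since $\sqrt{\vark{S}{S_B}}\geq 0$, replacing $\sqrt{\vark{C}{S_B}}$ by the larger $\sqrt{C_B(1-C_B)}$ only enlarges the subtracted upper bound, which weakens the lower bound on $\mathrm{MSE}$ in the desired direction.

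For the final inequality (equal-width bins), I would observe that within a bin, $S$ ranges over an interval of length $1/N$, so $\vark{S}{S_B} \leq 1/(4N^2)$ by the Popoviciu-type variance bound for bounded random variables, hence $\sqrt{\vark{S}{S_B}} \leq 1/(2N)$. Dropping the negative term $-\sqrt{\vark{S}{S_B}}$ from the bracket and bounding the remaining factor gives $\sqrt{\vark{S}{S_B}}\bigl(2\sqrt{C_B(1-C_B)} - \sqrt{\vark{S}{S_B}}\bigr) \leq \tfrac{1}{N}\sqrt{C_B(1-C_B)}$. Taking expectations and negating yields the claimed bound. The main obstacle, aside from consistently tracking sign conventions when an upper bound on $\CLind+\GLind$ is converted into a lower bound on $\mathrm{MSE}$, is really encapsulated in Theorem~\ref{thm:clgl-bounds}; granted that theorem, the chain of inequalities here is straightforward.
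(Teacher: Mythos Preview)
Your proposal is correct and follows precisely the route the paper intends: the corollary is presented as an immediate consequence of Propositions~\ref{prop:glbin} and~\ref{prop:clbin} together with Theorem~\ref{thm:clgl-bounds} (and Lemma~\ref{lem:GLbound-n} for the equal-width case), and your chain of substitutions, the identification $\CL(S_B)=\ell^2\text{-}\mathrm{ECE}_B$, the bound $\GL(S_B)\geq \glexp(S_B)=\Lcal_{\GL_B}$ from Theorem~\ref{th:GL_decomp}, and the successive relaxations via $\vark{C}{S_B}\leq C_B(1-C_B)$ and $\vark{S}{S_B}\leq 1/(4N^2)$ are exactly the intended steps. The only cosmetic point is that $\gllb$ in the paper is a two-argument object, so writing ``$\gllb(S_B)$'' is a slight abuse; what you mean (and what the corollary means by $\Lcal_{\GL_B}$) is $\glexp(S_B)$.
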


\subsection*{Proofs}
\begin{proof}[Proof of \autoref{prop:clbin}]
  Let $h$ the negative entropy of the Brier scoring rule.
    \begin{flalign*}
        \|S_B - C_B\|^2
        & = \|\espk{S}{S_B} - \espk{C}{S_B}\|^2
        && S_B = \espk{S}{S_B}, C_B = \espk{C}{S_B}\\
        & = \|\espk{S-C}{S_B}\|^2
        && \text{Linearity of expectation}
        \\
        & = \espk{\|S-C\|^2}{S_B} - \vark[h]{S-C}{S_B}
        && \text{Definition of $\vark[h]{S-C}{S_B}$}\\
        &&&\nonumber\\
        \esp{(S_B - C_B)^2}
        & = \esp{\|S-C\|^2} - \esp{\vark[h]{S-C}{S_B}}
        && \text{Law of total expectation}
    \end{flalign*}
\end{proof}

\begin{lemma}
  \label{lem:GLbound-n}
  In a binary setting, suppose that $[0, 1]$ is divided in $N$ equal-width bins. Then:
  \begin{align}
    \vark{S}{S_B} & \leq \tfrac{1}{4N^2}\\
    \intertext{If in addition, scores $S$ are uniform:}
    \vark{S}{S_B} & = \tfrac{1}{12N^2}
  \end{align}

\end{lemma}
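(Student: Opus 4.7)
The plan is to reduce both claims to standard facts about the variance of a real random variable constrained to a small interval. In the binary setting $S$ is a scalar taking values in $[0,1]$, and the equal-width binning $\mathcal{B}_1, \dots, \mathcal{B}_N$ partitions $[0,1]$ into intervals of length $1/N$. Conditioning on $S_B$ is equivalent to conditioning on the bin index $j$ such that $S \in \mathcal{B}_j$, because $S_B$ is by construction a deterministic (injective on the support) function of that index.

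For the first inequality, I would invoke Popoviciu's inequality on variances: for any random variable $Y$ almost surely contained in an interval of length $L$, one has $\mathrm{Var}(Y) \leq L^2/4$. Applied to the conditional distribution of $S$ given $S \in \mathcal{B}_j$, which is supported on an interval of length $1/N$, this yields $\vark{S}{S_B = s_j} \leq \frac{1}{4N^2}$ for every realized value $s_j$, hence $\vark{S}{S_B} \leq \frac{1}{4N^2}$ pointwise. If Popoviciu feels heavier than necessary, the same bound follows from a one-line argument: for any $c$, $\mathrm{Var}(Y) \leq \mathbb{E}[(Y-c)^2]$, and choosing $c$ to be the midpoint of $\mathcal{B}_j$ gives $(Y-c)^2 \leq (1/(2N))^2$ almost surely.

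For the equality under uniformity of $S$ on $[0,1]$, the conditional distribution of $S$ given $S \in \mathcal{B}_j$ is uniform on $\mathcal{B}_j$ (an interval of length $1/N$), because restricting a uniform density to a sub-interval and renormalising returns a uniform density. The variance of a uniform distribution on an interval of length $L$ is $L^2/12$, so $\vark{S}{S_B = s_j} = \frac{1}{12N^2}$ for every $j$, and averaging over $S_B$ preserves the constant value.

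No step is really an obstacle here; the only thing to check carefully is the reduction of conditioning on $S_B$ to conditioning on the bin index, which is immediate from \autoref{def:binclf} since $S_B$ is a deterministic function of the bin, and the standard-facts citations (Popoviciu, variance of the uniform law) do all the work.
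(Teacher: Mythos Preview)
Your proof is correct and follows essentially the same strategy as the paper: bound the conditional variance of $S$ inside a single bin of width $1/N$ and then conclude uniformly over bins. The only methodological difference is that you invoke Popoviciu's inequality (or the equivalent midpoint bound $\mathrm{Var}(Y)\le\mathbb{E}[(Y-c)^2]\le (L/2)^2$), whereas the paper derives the same $\tfrac{1}{4N^2}$ bound from scratch via $S^2\le S/N$ on $[0,1/N]$ and then maximizes $t\mapsto t(1-t)$; both arguments are equivalent in content, and the uniform case is handled identically.
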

\begin{proof}[Proof of Lemma~\ref{lem:GLbound-n}]
    Without loss of generality, consider the first bin $[0, \tfrac{1}{N}]$ with binned score $s_1$.
    \begin{flalign*}
      \vark{S}{S_B = s_1}
      & = \espk{S^2}{S_B = s_1} - \espk{S}{S_B = s_1}^2
      && \text{Definition of the variance}
      \\
      & \leq \tfrac{1}{N}\espk{S}{S_B = s_1} - \espk{S}{S_B = s_1}^2
      && 0 \leq S \leq \tfrac{1}{N} \Rightarrow S^2 \leq \tfrac{1}{N}S\\
      & = \tfrac{1}{N^2}(1 - N\espk{S}{S_B = s_1})N\espk{S}{S_B = s_1}\\
      & \leq \tfrac{1}{4N^2}
      && \text{Max when $N\espk{S}{S_B = s_1} = \tfrac{1}{2}$}\nonumber
    \end{flalign*}
    For uniform scores: $S |S_B = s_1 \sim \mathcal{U}([0, \tfrac{1}{N}])$. Hence $\vark{S}{S_B = s_1} = \tfrac{1}{12}(\tfrac{1}{N} - 0)^2 = \tfrac{1}{12N^2}$.

    Other bins have same variance as $\vark{S}{S_B = s_1}$ (variance is translation-invariant).\\
    Remark: this proves that $\GLind \leq \tfrac{1}{4N^2}$ for $S$ calibrated ($S = C \Rightarrow \vark{C}{S_B} = \vark{S}{S_B}$).
\end{proof}

\begin{proof}[Proof of \autoref{thm:clgl-bounds}]
  In a binary setting for the Brier scoring rule, we have $\Var[h] = \Var$. Hence:
    \begin{flalign*}
        \CLind + \GLind
        & = - \esp{\vark[h]{S-C}{S_B}} + \esp{\vark[h]{C}{S_B}}
        && \text{Propositons~\ref{prop:glbin} and \ref{prop:clbin}}\\
        & = - \esp{\vark{S-C}{S_B}} + \esp{\vark{C}{S_B}}
        && \text{$\Var[h] = \Var$}\\
        & = \esp{2\covk{S}{C}{S_B} - \vark{S}{S_B}}
        && \text{Expansion of $\vark{S-C}{S_B}$}\\
        \\
        2\covk{S}{C}{S_B} - \vark{S}{S_B}
        & \leq 2\sqrt{\vark{S}{S_B}}\sqrt{\vark{C}{S_B}} - \vark{S}{S_B}
        && \text{Cauchy-Schwarz}\\
        \\
        2\covk{S}{C}{S_B} - \vark{S}{S_B}
        & \geq -2\abs{\covk{S}{C}{S_B}} - \vark{S}{S_B}\\
        & \geq -2\sqrt{\vark{S}{S_B}}\sqrt{\vark{C}{S_B}} - \vark{S}{S_B}
        && \text{Cauchy-Schwarz}
    \end{flalign*}
    \begin{flalign*}
        \intertext{With $N$ equal-width bins:}
        2\covk{S}{C}{S_B} - \vark{S}{S_B}
        & \leq 2\sqrt{\vark{S}{S_B}}\sqrt{\vark{C}{S_B}}
        && \text{Positivity of the variance}\\
        & \leq \tfrac{1}{N}\sqrt{\vark{C}{S_B}}
        && \vark{S}{S_B} \leq \tfrac{1}{4N^2}
        \\
        & \leq \tfrac{1}{N}\sqrt{C_B(1-C_B)}
        && \vark{C}{S_B} \leq C_B(1-C_B)\\
        \\
        2\covk{S}{C}{S_B} - \vark{S}{S_B}
        & \geq -\tfrac{1}{N}\sqrt{\vark{C}{S_B}} - \tfrac{1}{4N^2}
        && \vark{S}{S_B} \leq \tfrac{1}{4N^2}
        \\
        & \geq -\tfrac{1}{N}\sqrt{C_B(1-C_B)} - \tfrac{1}{4N^2}
        && \vark{C}{S_B} \leq C_B(1-C_B)
    \end{flalign*}
\end{proof}

\subsection{Extension to classwise calibration}
\label{sec:classwise-extension}
\subsubsection{Proper scoring rules decomposition}
\label{sec:decomposition}

We show below that the proper scoring rules decomposition of \cite{Kull2015Novel} holds for classwise-calibration (\autoref{def2}) for the Brier score and the log-loss.

\begin{restatable}[Brier and log-loss classwise decomposition]{proposition}{decomposition}
\label{prop:decomposition}
    For the Brier score as well as the log-loss, the decomposition into calibration, grouping, and irreducible losses (\autoref{eq:decomp}) holds when replacing the calibrated scores by the classwise-calibrated scores (\autoref{def2}).
\end{restatable}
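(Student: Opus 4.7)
The plan is to exploit the key structural property that both the Brier score and the Log-loss are \emph{additively separable across classes}: both $\phi^{\mathrm{BS}}(S,Y) = \sum_k (S_k - Y_k)^2$ and $\phi^{\mathrm{LL}}(S,Y) = -\sum_k Y_k \log S_k$ decompose as sums of per-class terms depending only on the pair $(S_k, Y_k)$. This lets the classwise calibration property $C_k = \E[Q_k \mid S_k]$ be applied separately to each class $k$, without requiring a joint conditioning on the full score vector $S$. The idea is therefore to redo, class by class, the standard variance-style expansion used in \cite{Kull2015Novel} and then sum over $k$.

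For the Brier score, on each class $k$ I would write
\begin{equation*}
S_k - Y_k = (S_k - C_k) + (C_k - Q_k) + (Q_k - Y_k)
\end{equation*}
and expand the square. Summing over $k$ and taking expectations, the three squared terms yield the calibration loss $\E[\sum_k (S_k - C_k)^2]$, the grouping loss $\E[\sum_k (C_k - Q_k)^2]$ and the irreducible loss $\E[\sum_k (Q_k - Y_k)^2]$. The three cross terms must vanish: conditioning on $S_k$ together with $\E[Q_k \mid S_k] = C_k$ kills $\E[(S_k - C_k)(C_k - Q_k)]$, while conditioning on $X$ together with $\E[Y_k \mid X] = Q_k$ kills both $\E[(S_k - C_k)(Q_k - Y_k)]$ and $\E[(C_k - Q_k)(Q_k - Y_k)]$.

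For the Log-loss, I would use the telescoping identity $\log S_k = \log(S_k/C_k) + \log(C_k/Q_k) + \log Q_k$, so that $-Y_k \log S_k$ splits into three pieces. Taking expectation over $Y$ conditional on $X$ and using $\E[Y_k \mid X] = Q_k$ replaces $Y_k$ by $Q_k$ in the coefficient of each log term. Then, because $\log(C_k/S_k)$ is $S_k$-measurable, the tower property together with $\E[Q_k \mid S_k] = C_k$ gives $\E[Q_k \log(C_k/S_k)] = \E[C_k \log(C_k/S_k)]$, which is the per-class calibration divergence. The remaining pieces give the per-class grouping and irreducible divergences; summing over $k$ yields the decomposition.

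The main obstacle is precisely what restricts the statement to these two rules: additive separability across classes. For a general proper scoring rule $\phi$, the divergence $d_\phi$ can involve genuine interactions between classes that do not decouple into sums of per-class terms, so classwise calibration alone (which only controls each marginal conditional expectation $\E[Q_k \mid S_k]$) is not enough to make the cross terms vanish; one would need the joint property $\E[Q \mid S] = C$. For Brier and Log-loss, separability ensures that each class-wise cross term can be annihilated using only the classwise property, which is the content of the proposition.
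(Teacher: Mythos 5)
Your proposal is correct and takes essentially the same route as the paper: the paper likewise expands the Brier squared difference with the telescoping split $S_k-Y_k=(S_k-C_k)+(C_k-Q_k)+(Q_k-Y_k)$ and kills the cross terms by conditioning first on $X$ and then on $S_k$, and for the Log-loss it uses the same telescoping $\log(Y_k/S_k)=\log(Y_k/Q_k)+\log(Q_k/C_k)+\log(C_k/S_k)$ with the same two-stage conditioning, exploiting exactly the per-class additive separability you identify as the crux.
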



\begin{proof}[Proof of \autoref{prop:decomposition}]
For all $k \in \cbr{1, \dots, K}$, let $C_k=\E \sqb{Y_k|S_k}$ be the classwise-calibrated scores (\autoref{def2}).
\paragraph{Brier Score} Given any two probability vectors $P$ and $Q$, the divergence associated to the Brier score reads:
\begin{equation}
    d(P, Q) = \sum_{k=1}^K (P_k - Q_k)^2
\end{equation}
For all $k \in \cbr{1, \dots, K}$, let $d_k: P_k, Q_k \mapsto (P_k - Q_k)^2$.
\begin{align}
    d_k(S_k, Y_k) & =(S_k - Y_k)^2\\
    &= (S_k - C_k + C_k - Q_k + Q_k - Y_k)^2\\
    \begin{split}
    &= (S_k - C_k)^2 + (C_k - Q_k)^2 + (Q_k - Y_k)^2 + 2(S_k - C_k)(C_k - Q_k)\\
    & \quad \quad  + 2(S_k - C_k)(Q_k - Y_k) + 2(C_k - Q_k)(Q_k - Y_k)
    \end{split}
\end{align}

Taking the expectation on both sides conditional on $X$:
\begin{equation}
    \begin{split}
        \E \sqb{d_k(S_k, Y_k) \cond X} = (S_k - C_k)^2 + (C_k - Q_k)^2 + \E \sqb{(Q_k - Y_k)^2 \cond X}\\
        + 2(S_k - C_k)(C_k - Q_k)
    \end{split}
\end{equation}

since $S_k$ and $Q_k$ are function of $X$, $C_k$ is a function of $S_k$ and thus of $X$, and $\E \sqb{Y_k\cond X} = Q_k$. Then taking the expectation conditional on $S_k$:
\begin{equation}
    \E \sqb{d_k(S_k, Y_k) \cond S_k} = (S_k - C_k)^2 + \E \sqb{(C_k - Q_k)^2 \cond S_k} + \E \sqb{(Q_k - Y_k)^2 \cond S_k}
\end{equation}
where we use the fact that $C_k$ is a function of $S_k$, that $\E \sqb{Q_k \cond S_k} = C_k$, and the property according to which for two random variables $U$ and $V$ and a function $h$, $\E \sqb{\E \sqb{V \cond U} \cond h(U)} = \E \sqb{V \cond h(U)}$.
Finally, taking the expectation over $S_k$ we get:
\begin{equation}
    \E \sqb{d_k(S_k, Y_k)} = \E \sqb{(S_k - C_k)^2} + \E \sqb{(C_k - Q_k)^2} + \E \sqb{(Q_k - Y_k)^2}
\end{equation}
The desired decomposition is then obtained by summing over the K classes on both sides.

\paragraph{log-loss} Given any two probability vectors $P$ and $Q$, the divergence associated to the log loss reads:
\begin{equation}
    d(P, Q) = \sum_{k=1}^K Q_k \log \br{\frac{Q_k}{P_k}}
\end{equation}
For all $k \in \cbr{1, \dots, K}$, let $d_k: P_k, Q_k \mapsto Q_k \log \br{\frac{Q_k}{P_k}}$.
\begin{align}
    d_k(S_k, Y_k) &= Y_k \log \br{\frac{Y_k}{S_k}}\\
    &= Y_k \log \br{\frac{Y_k}{Q_k}} + Y_k \log \br{\frac{Q_k}{C_k}} + Y_k\log \br{\frac{C_k}{S_k}}\\
    \E \sqb{d_k(S_k, Y_k) \cond X} &= \E \sqb{Y_k \log \br{\frac{Y_k}{Q_k}} \cond X} + Q_k \log \br{\frac{Q_k}{C_k}} + Q_k\log \br{\frac{C_k}{S_k}}\\
    \E \sqb{d_k(S_k, Y_k) \cond S_k} &= \E \sqb{Y_k \log \br{\frac{Y_k}{Q_k}} \cond S_k} + \E \sqb{Q_k \log \br{\frac{Q_k}{C_k}} \cond S_k} + C_k \log \br{\frac{C_k}{S_k}}\\
    &= \E\sqb{d_k(Q_k, Y_k) \cond S_k} + \E\sqb{d_k(C_k, Q_k) \cond S_k} + d_k(S_k, C_k)
\end{align}
where we have used the same properties as those described for the proof of the Brier score classwise decomposition above. The desired decomposition is then obtained by taking the expectation over $S_k$ and summing over the K classes.

\end{proof}

\paragraph{The proper scoring rule decomposition holds for top-label calibration.}

\looseness=-1
Unlike classwise calibration, top-label calibration does not define a vector $C \in \RR^K$ of calibrated probabilities. Instead, it defines a notion of calibration for a simpler binary problem in which labels indicate whether the classifier predicts the correct class for a given $X$. More precisely, the labels for this binary problem are given by $Y^\prime:=1_{Y = e_{\arg \max(S)}}$. Since $S$ is a function of $X$, the random variable $Y^\prime$ is a function of $Y$ and $X$. Define now the scores associated to this binary problem as $S^\prime := \max(S) \in \R$. Reformulated in terms of these notations, top-label calibration states that $S^\prime$ is well calibrated if for all $s, P(Y^\prime=1|S^\prime=s) = s$. Thus, as for a classical binary problem, we can define $C^\prime := \E[Y^\prime|S^\prime]$ and $Q^\prime = \E \sqb{Y^\prime|X}$. $C^\prime$ (resp $Q^\prime$) gives the probability that the classifier predicts the correct class for a given score $S^\prime$ (resp. a given input $X$). As the quantities $S^\prime$, $C^\prime$, $Q^\prime$ and $Y^\prime$ define a classical binary problem, the decomposition~(\ref{eq:decomp}) into calibration, grouping, and irreducible loss holds for this problem. Compared to the classwise definition of calibration and grouping, here the calibration loss measures whether on average over all points scored $S$ \textit{across all classes}, the proportion of correctly predicted points in actually $S$. In this setting, the grouping loss also measures to what extent there exist over-confident scores for certain classes that compensate under-confident scores for other classes.

\subsubsection{Results hold for Brier and log-loss in classwise setting}
\autoref{sec:decomposition} proves the scoring rule decomposition~(\ref{eq:decomp}) in a classwise setting for Brier and log-loss scoring rules, which is necessary for the other results to hold. However, the proof of \autoref{prop:e-variance} does not readily apply to classwise calibration. \autoref{eq:smeasurable} uses a conditioning on the full vector of joint confiences $S$ to move $\phi(C, e_k)$ outside of the conditional expectation on $S$ and turn $Q_k$ into $C_k$ in expectation. In classwise calibration the conditioning is on each marginal $S_k$ instead of the joint $S$. As a result, in the general case, $\phi(C, e_k)$ cannot be moved outside of the conditional expectation given $S_k$ since $C$ depends on all marginals of $C$, not just $C_k$. However for some scoring rules, $\phi(p, e_k)$ depends only on $p_k$ and the proof can be adapted. This is the case of the log-loss for which $\phi^{LL}(p, e_k) = -\log(p_k)$.

\begin{restatable}[Adaptation of \autoref{prop:e-variance} for classwise calibration]{lemma}{evarianceclasswise}
      \label{prop:e-variance:classwise}
      Suppose there exists $g : \R^K \to \R$ such that for all $k$ in $\{1, \dots, K\}$ and $x$ in $\R^K$, $\phi(x, e_k) = g(x_k)$.
     Define $h_k : p \mapsto -\phi(p, e_k)p_k$, the $k$\textsuperscript{th} component of the negative entropy of the scoring rule $\phi$.
     The grouping loss $\GL$ of the classifier $S$ with calibrated scores $C_k = \espk{Q_k}{S_k}$ and scoring rule $\phi$ writes:
     \begin{equation}
      \underbrace{\esp{d_{\phi}(C,Q)}}_{\GL(S)}\;=\;\textstyle\sum_{k=1}^K\esp{\vark[h_k]{Q_k}{S_k}}
      \label{eq:gl-var:classwise}
     \end{equation}
  \end{restatable}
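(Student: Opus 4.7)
The plan is to adapt the proof of \autoref{prop:e-variance} (see \autoref{prop:e-variance:proof}) almost verbatim, replacing the conditioning on the joint score $S$ by conditioning on the marginal score $S_k$ and summing over $k$. The separability hypothesis $\phi(x,e_k)=g(x_k)$ is exactly what is needed: it makes $h_k(p)=-g(p_k)p_k$ depend on $p$ only through its $k$-th coordinate, so $h_k(C)$ is a function of $C_k=\espk{Q_k}{S_k}$, which is $S_k$-measurable. This is precisely the ingredient missing in the general classwise case, where $\phi(C,e_k)$ could depend on all coordinates of $C$ and thus could not be pulled out of a conditional expectation given only $S_k$.

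First I would expand, using $s_\phi(p,q)=\sum_k\phi(p,e_k)q_k=\sum_k g(p_k)q_k$,
\begin{equation*}
\esp{d_\phi(C,Q)} \;=\; \esp{s_\phi(C,Q)} - \esp{s_\phi(Q,Q)} \;=\; \sum_{k=1}^K\esp{g(C_k)Q_k - g(Q_k)Q_k}.
\end{equation*}
For each $k$, I would then apply the tower property while exploiting the $S_k$-measurability of $g(C_k)$:
\begin{equation*}
\esp{g(C_k)Q_k} \;=\; \esp{\espk{g(C_k)Q_k}{S_k}} \;=\; \esp{g(C_k)\,\espk{Q_k}{S_k}} \;=\; \esp{g(C_k)C_k}.
\end{equation*}
Recognising $-g(C_k)C_k = h_k(C)$ and $-g(Q_k)Q_k = h_k(Q)$, each summand becomes $\esp{h_k(Q)-h_k(C)}$.

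Finally, applying the tower property once more to $\esp{h_k(Q)}$ and using that under the separability assumption $h_k(Q)$ depends on $Q$ only through $Q_k$, the summand rewrites as
\begin{equation*}
\esp{\espk{h_k(Q_k)}{S_k} - h_k\!\pr{\espk{Q_k}{S_k}}} \;=\; \esp{\vark[h_k]{Q_k}{S_k}},
\end{equation*}
by the definition of the $h_k$-variance (with $h_k$ viewed as a scalar function, consistent with the previous display). Summing over $k$ yields \autoref{eq:gl-var:classwise}. The only non-routine step is the measurability argument coupled with the observation that $h_k$ can be coherently regarded as a scalar function; once the separability assumption is invoked, the proof amounts to $K$ parallel applications of the binary identity underlying \autoref{prop:e-variance}.
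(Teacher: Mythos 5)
Your proposal is correct and follows essentially the same route as the paper's proof: expand $d_\phi(C,Q)$ using the separability hypothesis, use the tower property together with the $S_k$-measurability of $g(C_k)$ to replace $\esp{g(C_k)Q_k}$ by $\esp{g(C_k)C_k}$, and then identify the result as a sum of $h_k$-variances. Your explicit remark that $h_k$ can be regarded as a scalar function under the separability assumption is a helpful clarification of the mild abuse of notation in the statement, but the underlying argument is the same.
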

\begin{proof}[Proof of \autoref{prop:e-variance:classwise}]
Define the vector $C$ with $C_k = \espk{Q_k}{S_k}$ for all $k$ in $\{1, \dots, K\}$. Let $\phi$ be a scoring rule, $h : p \mapsto -s_{\phi}(p,p)$. Suppose for all $k$ in $\{1, \dots, K\}$, $\phi(x, e_k) = g(x_k)$ with $g : \R^K \to \R$. Then:
  \begin{align}
      \esp{d_{\phi}(C, Q)}
      & = \esp{s_{\phi}(C, Q) - s_{\phi}(Q, Q)}
      & \text{Definition of $d_{\phi}$}\\
      & = \textstyle\esp{\sum_{k=1}^K \phi(C, e_k)Q_k + \sum_{k=1}^K \phi(Q, e_k)Q_k}
      & \text{Definition of $s_{\phi}$}\\
      & = \textstyle \sum_{k=1}^K\esp{\phi(C, e_k)Q_k - \phi(Q, e_k)Q_k}
      & \text{Linearity of expectation}\\
      & = \textstyle \sum_{k=1}^K\esp{g(C_k)Q_k - g(Q_k)Q_k}
      & \text{Hypothesis on $\phi$}\\
      & = \textstyle \sum_{k=1}^K\esp{\espk{g(C_k)Q_k}{S_k} - \espk{g(Q_k)Q_k}{S_k}}
      & \text{Law of total expectation}\\
      & = \textstyle \sum_{k=1}^K\esp{g(C_k)\espk{Q_k}{S_k} - \espk{g(Q_k)Q_k}{S_k}}
      & \text{$C_k$ is a function of $S_k$}\\
      & = \textstyle \sum_{k=1}^K\esp{g(C_k)C_k - \espk{g(Q_k)Q_k}{S_k}}
      & \text{Definition of $C_k$}\\
      & = \textstyle \sum_{k=1}^K\esp{\vark[h_k]{Q_k}{S_k}}
      & \text{Definition of $\Var[h_k]$}
  \end{align}
\end{proof}

\begin{restatable}[Results in classwise setting]{theorem}{resultsclasswise}
      \label{th:results:classwise}
    Suppose \autoref{eq:gl-var:classwise} is satisfied for the scoring rule $\phi$.
     For all $k \in \{1, \dots, K\}$, let $\Ccal_k: \Xc \to \N$ be a partition of the feature space.
     It holds that:
     \begin{align}
      \GL(S)
      & =\;\underbrace{\textstyle\sum_{k=1}^K\esp{\vark[h_k]{\espk{Q_k}{S_k,\Ccal_k}}{S_k}}}_{\glexp(S)}
      \;+\;\underbrace{\textstyle\sum_{k=1}^K\esp{\vark[h_k]{Q_k}{S_k, \Ccal_k}}}_{\glres(S)}
      \label{eq:classwise1}
      \\
      \underbrace{\textstyle\sum_{k=1}^K\esp{\vark[h_k]{Q_k}{S_{B_k}}}}_{\GL(S_B)}
      \; & = \; \underbrace{\textstyle\sum_{k=1}^K\esp{\vark[h_k]{Q_k}{S_k}}}_{\GL(S)} \; + \; \underbrace{\textstyle\sum_{k=1}^K\esp{\vark[h_k]{C_k}{S_{B_k}}}}_{\GLind(S, S_B)}
      \label{eq:classwise2}
      \\
      \GL(S)
      & =\glexp(S_B)-\GLind(S, S_B)+\glres(S_B)
      \label{eq:classwise3}
    \end{align}
    Moreover, if $h_k$ is convex, then:
      \begin{align}
        \GL(S) \geq \glexp(S) \geq 0
        \label{eq:classwise4}
        \\
        \GLind(S, S_B) \geq 0
        \label{eq:classwise5}
        \\
        \GL(S) \geq \underbrace{\glexp(S_B) - \GLind(S, S_B)}_{\gllb(S, S_B)}
        \label{eq:classwise6}
      \end{align}
\end{restatable}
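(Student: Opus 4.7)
The plan is to reduce everything to per-class statements and then re-apply, for each class separately, the two results already established in the excerpt: the law of total $h$-variance (\autoref{lemma:law-tot-evar}) and its immediate consequences. By \autoref{prop:e-variance:classwise}, the hypothesis of the theorem yields the classwise expression $\GL(S)=\sum_{k=1}^K\esp{\vark[h_k]{Q_k}{S_k}}$, so each of \autoref{eq:classwise1}--\autoref{eq:classwise3} will follow from summing over $k$ a single-class identity already proved in the binary case.

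First, to establish \autoref{eq:classwise1}, I would apply \autoref{lemma:law-tot-evar} for each fixed $k$ with $f=h_k$ and with $(\Ccal_k,Q_k,S_k)$ in place of $(X,Y,Z)$. This gives
\begin{equation*}
\vark[h_k]{Q_k}{S_k}=\espk{\vark[h_k]{Q_k}{S_k,\Ccal_k}}{S_k}+\vark[h_k]{\espk{Q_k}{S_k,\Ccal_k}}{S_k}.
\end{equation*}
Taking outer expectation, invoking the tower property on the first term, and summing over $k$ produces exactly the $\glexp(S)+\glres(S)$ decomposition claimed, once we combine with \autoref{prop:e-variance:classwise}. For \autoref{eq:classwise2}, the argument mirrors \autoref{prop:glbin:proof} per class: by \autoref{lemma:law-tot-evar} applied with $f=h_k$ and with $(S_k,Q_k,S_{B_k})$ in place of $(X,Y,Z)$, and using that $S_{B_k}$ is $\sigma(S_k)$-measurable (so $\vark[h_k]{Q_k}{S_k,S_{B_k}}=\vark[h_k]{Q_k}{S_k}$ and $\espk{Q_k}{S_k,S_{B_k}}=C_k$), one obtains the single-class analogue of \autoref{prop:glbin}. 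Summing over $k$ and again invoking \autoref{prop:e-variance:classwise} (for both $S$ and $S_B$) yields \autoref{eq:classwise2}. Equation \autoref{eq:classwise3} follows immediately by substituting \autoref{eq:classwise1} applied to $S_B$ into \autoref{eq:classwise2}.

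For the positivity claims, I would note that if each $h_k$ is convex, then Jensen's inequality applied to any conditional expectation gives $\vark[h_k]{\cdot}{\cdot}\ge 0$. Inequality \autoref{eq:classwise4} then follows termwise from \autoref{eq:classwise1} (both $\glexp$ and $\glres$ are sums of non-negative $h_k$-variances), and \autoref{eq:classwise5} follows termwise from the $\glind$ expression in \autoref{eq:classwise2}. Finally, \autoref{eq:classwise6} follows from \autoref{eq:classwise3} together with the non-negativity of $\glres(S_B)$ guaranteed again by convexity of each $h_k$.

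The main potential obstacle is purely bookkeeping rather than conceptual: one must be careful that the hypothesis $\phi(x,e_k)=g(x_k)$ from \autoref{prop:e-variance:classwise} is what legitimately defines a classwise $C_k=\espk{Q_k}{S_k}$ (rather than a joint $C$), so that the per-class law of total $h$-variance can be applied without having to condition on the joint $S$. Once this is fixed, the argument is mechanical: every step reuses \autoref{lemma:law-tot-evar}, the $S_k$-measurability of $S_{B_k}$, and Jensen's inequality, exactly as in the binary case, with the sum over $k$ carried through at the very end.
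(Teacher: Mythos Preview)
Your proposal is correct and follows essentially the same route as the paper's proof: apply \autoref{lemma:law-tot-evar} per class (with $\Ccal_k$ for \autoref{eq:classwise1} and with $S_k$ for \autoref{eq:classwise2}), combine via the same substitution as in \autoref{prop:gllb-binning} for \autoref{eq:classwise3}, and invoke Jensen's inequality termwise for the convexity-based inequalities. One small clarification: the theorem's hypothesis is directly that \autoref{eq:gl-var:classwise} holds, so you need not invoke the stronger structural assumption $\phi(x,e_k)=g(x_k)$ from \autoref{prop:e-variance:classwise}; the per-class $h_k$-variance expression is already given, and the rest is, as you say, mechanical.
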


\begin{proof}[Proof of \autoref{th:results:classwise}]
  Applying the law of total variance (\autoref{lemma:law-tot-evar}) on each of the $\vark[h_k]{Q_k}{S_k}$ with $\Ccal_k$ as conditioning variable proves \autoref{eq:classwise1}.
  Similarly, applying the law of total variance on each of the $\vark[h_k]{Q_k}{S_{B_k}}$ with $S_k$ as conditioning variable proves \autoref{eq:classwise2}.
  The proof for \autoref{eq:classwise3} is the same as \autoref{prop:gllb-binning}.

  Using Jensen's inequality, if $h_k$ is convex, then $\Var[h_k] \geq 0$, which proves \autoref{eq:classwise4},~\ref{eq:classwise5} and \ref{eq:classwise6}.
\end{proof}

For the log-loss scoring rule, we have $\phi^{LL}(p, e_k) = -\log(p_k)$ and $h_k(p) = \log(p_k)p_k$ wich is convex. Thus, \autoref{th:results:classwise} holds for the log-loss. Unfortunately the Brier score does not satisfy the assumptions of \autoref{prop:e-variance:classwise} since $\phi^{BS}(p, e_k)$ is not a function of $p_k$. But a forumlation similar to \autoref{eq:gl-var:classwise} holds for the Brier score:

\begin{align}
  \esp{d_{\phi^{BS}}(C, Q)}
  & = \esp{s_{\phi^{BS}}(C, Q) - s_{\phi^{BS}}(Q, Q)}
  & \text{Definition of $d_{\phi^{BS}}$}\\
  & = \esp{(C-Q)\cdot(C-Q)}
  & \text{Definition of $s_{\phi^{BS}}$}\\
  & = \esp{(C\cdot C - 2C\cdot Q + Q\cdot Q)}
  & \\
  & = \esp{(Q\cdot Q - C\cdot C)}
  & \esp{C\cdot Q} = C\cdot C\\
  & = \textstyle\sum_{k=1}^K\esp{(Q_k^2 - C_k^2)}
  & \text{Linearity of expectation}\\
  & = \textstyle\sum_{k=1}^K\esp{(\espk{Q_k^2}{S_k} - C_k^2)}
  & \text{Law of total expectation}\\
  & = \textstyle\sum_{k=1}^K\esp{\vark{Q_k}{S_k}}
  & \text{Definition of the variance}\\
  \textit{with:}&&\\
  \esp{C\cdot Q}
  & = \textstyle\sum_{k=1}^K\esp{C_kQ_k}
  & \text{}\\
  & = \textstyle\sum_{k=1}^K\esp{\espk{C_kQ_k}{S_k}}
  & \text{Law of total expectation}\\
  & = \textstyle\sum_{k=1}^K\esp{C_k\espk{Q_k}{S_k}}
  & \text{$C_k$ is a function of $S_k$}\\
  & = \textstyle\sum_{k=1}^K\esp{C_k^2}
  & \text{Definition of $C_k$}\\
  & = \esp{C\cdot C}
\end{align}

Since $\Var = \Var[f]$ with $f: x \mapsto x^2$ \autoref{eq:gl-var:classwise} is satisfied for the Brier score. Since $f$ is convex, \autoref{th:results:classwise} holds for the Brier score.

To conclude, \autoref{th:results:classwise} holds for the Brier score and the log-loss in a classwise setting. It is likely that some other proper scoring rules satisfy \autoref{eq:gl-var:classwise} and \autoref{th:results:classwise}.

\subsection{Impact of recalibration on the grouping loss}
\label{sec:app:recal-impact}
\begin{lemma}
  \label{th:recal-impact}
Let $\hat{c}$ be a recalibration mapping and $S' = \hat{c}(S)$ the classifier recalibrated with that mapping. The grouping loss of the recalibrated classifier $\GL(S')$ deviates from that of the original classifier $\GL(S)$ as follows:
\begin{equation*}
  \GL(S') = \GL(S) + \esp{\vark[h]{C}{S'}}
\end{equation*}
If the mapping is perfect (\ie{} $S' = C$) or invertible, then $\GL(S') = \GL(S)$.
\end{lemma}
\begin{proof}[Proof of \autoref{th:recal-impact}]
\begin{align*}
  \GL(S')
  &
  = \esp{\vark[h]{Q}{S'}}
  & \text{Definition of $\GL$}\\
  & = \esp{\vark[h]{Q}{S',S}} + \esp{\vark[h]{\espk{Q}{S',S}}{S'}}
  & \text{Law of total $h$-variance on $S$ (\autoref{lemma:law-tot-evar}})\\
  & = \esp{\vark[h]{Q}{S}} + \esp{\vark[h]{\espk{Q}{S}}{S'}}
  & \text{$S'$ is a function of $S$}\\
  & = \GL(S) + \esp{\vark[h]{C}{S'}}
  & \text{Definition of $\GL(S)$ and $C$}
\end{align*}
If $S' = C$, then $\vark[h]{C}{S'} = 0$, hence $\GL(S') = \GL(S)$.
If the mapping $\hat{c}$ is invertible, then knowing $S'$ is knowing $S$. Hence $\vark[h]{C}{S'} = \vark[h]{C}{S} = 0$ since $C$ a function of $S$. Hence $\GL(S') = \GL(S)$.
\end{proof}



\section{ImageNet}
\label{sec:app:imagenet}

ImageNet-1K (ILSVRC2012) \citep{deng2009imagenet} is a classification dataset for computer vision with 1\,000 classes. Networks studied in this article are pre-trained on the training set of ImageNet-1K, comprising 1.2 million samples. Models' architectures and weights are available on PyTorch v0.12 \citep{Paszke2019}. We evaluated the networks on ImageNet variants' ImageNet-R and ImageNet-C (\autoref{sec:app:imagenet-r},~\ref{sec:app:imagenet-c}) as well as the validation set of ImageNet-1K (\autoref{sec:app:imagenet-1k}).
We work in the high-level feature space of the networks, \ie{} the output space of the penultimate layer (embedding space).\looseness=-1

For each of ImageNet-R, ImageNet-C and the validation set of ImageNet, we
plot the grouping diagrams of each network with and without post-hoc recalibration, obtained with a balanced decision stump. For each network, if several versions are available, we study both the smallest and the best performing one on the validation set of ImageNet-1K (usually the largest version). For ImageNet-R we also provide the grouping diagrams obtained with a 2-cluster $k$-means.
Each experiment is detailed in \autoref{sec:app:imagenet-r},~\ref{sec:app:imagenet-c} and \ref{sec:app:imagenet-1k}.

\paragraph{Detailed experimental method}\label{par:app:exp} First, we
forward each sample of the evaluation dataset (ImageNet-R, ImageNet-C or
the validation set of ImageNet-1K) through the studied network. We build
confidence scores by applying a softmax to the output logits. We extract
a representation of the input images in the high-level feature space of
the network (\ie{} the input space of the last linear layer). Since there
is not enough samples per class (50), we restrict our study to the
top-label problem (\autoref{def3}). For each sample, the class with the
highest confidence is predicted. The label is 1 if the network predicted
a correct class (0 otherwise) and the associated confidence score is the
one of the predicted class. We divide the samples of the evaluation set
in half making sure that the confidence score distribution is the same in
both resulting subsets. On one set, we train the isotonic regression for
calibration and calibrate the confidence scores of both sets. If no
post-hoc recalibration is used, we skip this step. Then, we create groups of same-level confidences by binning the confidence scores with 15 equal-width bins in $[0, 1]$. We partition each of the 15 level sets independently. For each of them, we create the partition by training the partitioning method on the training samples of the isotonic regression. We then evaluate region scores on the remaining samples to avoid overfitting. For the grouping diagrams, we mainly use a balanced decision stump with 2 clusters (\eg{} using scikit-learn's \texttt{DecisionTreeRegressor} with \texttt{min\_samples\_leaf} taken as half the samples in the bin), resulting in one split along one of the axis of the high-level feature space. For comparison, we also used $k$-means with 2 clusters. Constraining the partitioning methods to 2 regions is a choice to provide visually informative grouping diagrams rather than to maximize the lower bound $\glexphat$. When optimizing the lower bound, (\autoref{fig:vision:comparison} and \autoref{fig:app:imagenet:comparison}), we increase the number of allowed regions in the partition by setting a region ratio: the number of training samples in the bin over the number of allowed regions in the bin. Fixing a region ratio prevents from having regions with too few samples. In our experiments, we fix the region ratio to 30.

\subsection{ImageNet-R}
\label{sec:app:imagenet-r}
ImageNet-R \citep{hendrycks2021many} is a variant of ImageNet containing
renditions of the ImageNet classes. Example of renditions are: paintings,
toys, tattoos and origami. There are 15 rendition types in total listed
in \autoref{fig:app:imagenet-r:renditions}. The dataset contains 30\,000
images and is limited to 200 of the 1\,000 ImageNet classes.
\autoref{fig:app:imagenet:comparison}a. compares estimated grouping loss
lower bound and calibration errors of all networks (small and best
versions) on ImageNet-R. Overall, we observe a strong grouping loss in
most of the networks, especially those with highest accuracy. The
estimated debiased lower bound remains high after post-hoc recalibration.
Grouping diagrams of all networks are available at:
\begin{itemize}
  \itemsep0em
  \item Section \ref{sec:app:imagenet-r:nocal}: \nameref{sec:app:imagenet-r:nocal}.
  \item Section \ref{sec:app:imagenet-r:nocal:kmeans}: \nameref{sec:app:imagenet-r:nocal:kmeans}.
  \item Section \ref{sec:app:imagenet-r:nocal:best}: \nameref{sec:app:imagenet-r:nocal:best}.
  \item Section \ref{sec:app:imagenet-r:cal}: \nameref{sec:app:imagenet-r:cal}.
  \item Section \ref{sec:app:imagenet-r:cal:best}: \nameref{sec:app:imagenet-r:cal:best}.
\end{itemize}

We also investigate whether there is heterogeneity among renditions. In \autoref{fig:app:imagenet-r:renditions} we observe that some renditions are better predicted than average (\eg{} deviant art, sketch or art) while some others are predicted worse than average (\eg{} embroidery, cartoon, tattoo). These considerations would be useful in a fairness setting. Also, \autoref{fig:app:imagenet-r:renditions} highlights that if we could build regions out of renditions (\ie{} renditions are well separated in the feature space), this would result in a high grouping loss lower bound.\looseness=-1

\begin{figure}[t]
  \begin{minipage}{.58\linewidth}
    \includegraphics[width=\linewidth]{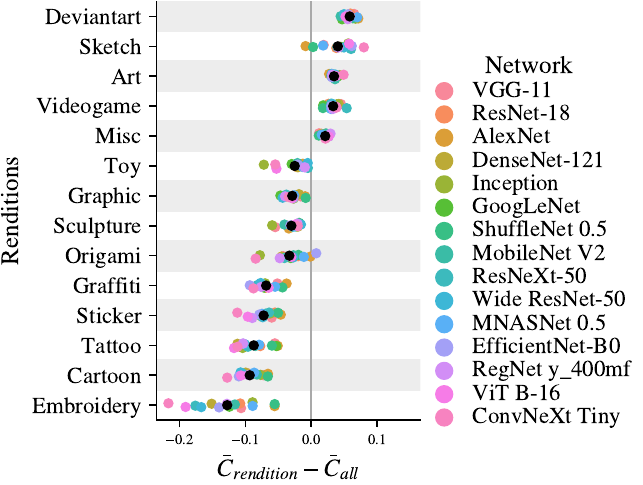}
  \end{minipage}%
  \hfill
  \begin{minipage}{.38\linewidth}
    \caption{\textbf{Comparison of renditions on ImageNet-R.} Differences between the calibrated scores of samples of one rendition $\bar{C}_{rendition}$ and the calibrated scores of all samples $\bar{C}_{all}$, weighted by the number of samples of this rendition in the level set and summed over the 15 bins on confidence scores. Interpretation: should renditions define regions of the feature space, they would exhibit a high grouping loss lower bound.}
    \label{fig:app:imagenet-r:renditions}
  \end{minipage}%
\end{figure}

\subsection{ImageNet-C}
\label{sec:app:imagenet-c}
ImageNet-C is a variant of ImageNet containing corrupted versions of ImageNet images. Examples of corruptions are: blur, noise, saturate, contrast, brightness and compression. There are 19 corruption types in total. Each corruption has a severity ranging from 1 to 5. The dataset contains the 50\,000 images of the validation set of ImageNet, each of them being applied 19 corruptions with 5 severity levels each. We built a merged version of ImageNet-C by randomly sampling one corruption for each image. We also study one corruption only (snow). For both the merged version and the snow version, we study the maximum severity of the corruption (5). \autoref{fig:app:imagenet:comparison}b. and c. compare estimated grouping loss lower bound and calibration errors of all networks (small and best versions) on ImageNet-C merged and snow. Overall, we observe similar effect than on ImageNet-R. However, when all samples have the same corruption (snow), we exhibit more grouping loss among the networks than when the 19 corruptions are randomly applied on the dataset (merged) (\autoref{fig:app:imagenet:comparison}c.). An intuition is that heterogeneity created by one corruption is canceled out by another one having heterogeneity in the opposite direction, leading to region scores closer to the average.
Grouping diagrams of all networks are available at:

\begin{itemize}
  \itemsep0em
  \item Section \ref{sec:app:imagenet-c:nocal}: \nameref{sec:app:imagenet-c:nocal}.
  \item Section \ref{sec:app:imagenet-c:nocal:best}: \nameref{sec:app:imagenet-c:nocal:best}.
  \item Section \ref{sec:app:imagenet-c:cal}: \nameref{sec:app:imagenet-c:cal}.
  \item Section \ref{sec:app:imagenet-c:cal:best}: \nameref{sec:app:imagenet-c:cal:best}.
\end{itemize}

\subsection{ImageNet-1K validation set}
\label{sec:app:imagenet-1k}

The validation set of ImageNet-1K comprises 50\,000 samples for 1\,000 classes. \autoref{fig:app:imagenet:comparison}d. compares estimated grouping loss lower bound and calibration errors of all networks (small and best versions) on the validation set of ImageNet-1K. Conversely to ImageNet-R and ImageNet-C, we cannot exhibit substantial grouping loss on any of the networks. The grouping diagrams (\autoref{fig:app:imagenet-1k:nocal:stump}) show however that ConvNeXt Tiny displays more heterogeneity than the other networks on this dataset.
Grouping diagrams are available in:
\begin{itemize}
  \itemsep0em
  \item Section \ref{sec:app:imagenet-1k:nocal}: \nameref{sec:app:imagenet-1k:nocal}.
  \item Section \ref{sec:app:imagenet-1k:nocal:best}: \nameref{sec:app:imagenet-1k:nocal:best}.
  \item Section \ref{sec:app:imagenet-1k:cal}: \nameref{sec:app:imagenet-1k:cal}.
  \item Section \ref{sec:app:imagenet-1k:cal:best}: \nameref{sec:app:imagenet-1k:cal:best}.
\end{itemize}


\begin{table}[b]
  \centering
  \vspace*{-4mm}
  \makebox[\textwidth]{
    \makebox[\textwidth]{\textbf{a. Vision}}
  }\vspace*{1mm}
  \\
  \begin{subtable}{\textwidth}
    \centering
    \begin{tabular}{lccccc}
\toprule
        Network & $\widehat{\mathrm{CL}}$ & $\widehat{\mathrm{CL}}'$ & $\widehat{\mathrm{GL}}_{\mathrm{LB}}$ & $\widehat{\mathrm{GL}}_{\mathrm{LB}}'$ &  Accuracy$\uparrow$ (\%) \\
\midrule
       ViT B-16 &                   0.044 &                    0.000 &                                 0.019 &                                  0.021 &                     30.1 \\
  ConvNeXt Tiny &                   0.017 &                    0.000 &                                 0.036 &                                  0.039 &                     29.0 \\
      Inception &                   0.200 &                    0.000 &                                 0.006 &                                  0.009 &                     27.3 \\
 Wide ResNet-50 &                   0.082 &                    0.000 &                                 0.020 &                                  0.019 &                     25.9 \\
     ResNeXt-50 &                   0.091 &                    0.000 &                                 0.013 &                                  0.013 &                     25.1 \\
EfficientNet-B0 &                   0.028 &                    0.000 &                                 0.015 &                                  0.018 &                     24.1 \\
   DenseNet-121 &                   0.059 &                    0.000 &                                 0.013 &                                  0.010 &                     23.9 \\
      GoogLeNet &                   0.009 &                    0.001 &                                 0.011 &                                  0.011 &                     23.3 \\
RegNet y\_400mf &                   0.084 &                    0.000 &                                 0.011 &                                  0.010 &                     21.0 \\
      ResNet-18 &                   0.057 &                    0.000 &                                 0.010 &                                  0.009 &                     20.4 \\
   MobileNet V2 &                   0.070 &                    0.000 &                                 0.007 &                                  0.009 &                     18.9 \\
    MNASNet 0.5 &                   0.017 &                    0.000 &                                 0.005 &                                  0.007 &                     15.9 \\
         VGG-11 &                   0.061 &                    0.000 &                                 0.007 &                                  0.007 &                     15.8 \\
 ShuffleNet 0.5 &                   0.072 &                    0.000 &                                 0.004 &                                  0.003 &                     14.6 \\
        AlexNet &                   0.063 &                    0.000 &                                 0.003 &                                  0.003 &                     12.6 \\
\bottomrule
\end{tabular}

  \end{subtable}
  \vspace*{4mm}\\
  \makebox[\textwidth]{
    \makebox[\textwidth]{\textbf{b. NLP}}
  }\vspace*{1mm}
  \\
  \begin{subtable}{\textwidth}
    \centering
    \begin{tabular}{lccccc}
\toprule
            Setting & $\widehat{\mathrm{CL}}$ & $\widehat{\mathrm{CL}}'$ & $\widehat{\mathrm{GL}}_{\mathrm{LB}}$ & $\widehat{\mathrm{GL}}_{\mathrm{LB}}'$ &  Accuracy $\uparrow$ (\%) \\
\midrule
    in-distribution &                   0.026 &                    0.000 &                                 0.000 &                                  0.000 &                      88.2 \\
out-of-distribution &                   0.091 &                    0.000 &                                 0.015 &                                  0.015 &                      71.1 \\
\bottomrule
\end{tabular}

  \end{subtable}
  \caption{Raw values of the estimators in the vision (\autoref{fig:vision:comparison}) and NLP experiments of \autoref{sec:exp_nn}, before ($\clhat$ and $\gllbhat$) and after ($\clhat'$ and $\gllbhat'$) isotonic recalibration.}
  \label{tab:vision-nlp:comparison}
\end{table}

\begin{figure}

  \centering
  \makebox[\textwidth]{
    \makebox[(\textwidth-9mm)/2]{\textbf{a. ImageNet-R}}
    \makebox[(\textwidth-9mm)/2]{\textbf{b. ImageNet-C (merged)}}
  }\vspace*{5mm}
  \\
  \includegraphics[width=0.5\linewidth]{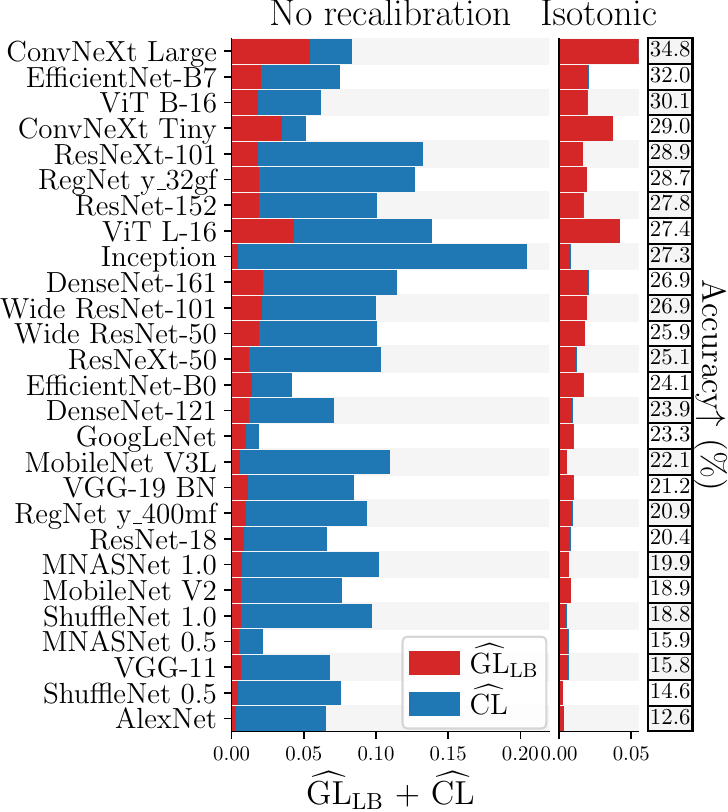}%
  \includegraphics[width=0.5\linewidth]{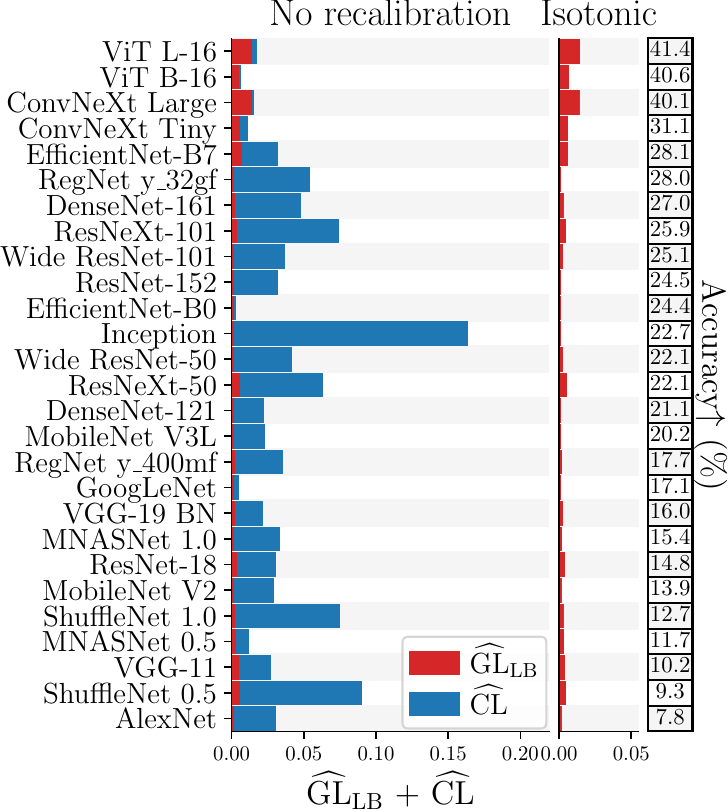}\\
  \vspace*{15mm}
  \makebox[\textwidth]{
    \makebox[(\textwidth-9mm)/2]{\textbf{c. ImageNet-C (snow)}}
    \makebox[(\textwidth-9mm)/2]{\textbf{d. ImageNet-1K (validation set)}}
  }\vspace*{5mm}
  \\
  \includegraphics[width=0.5\linewidth]{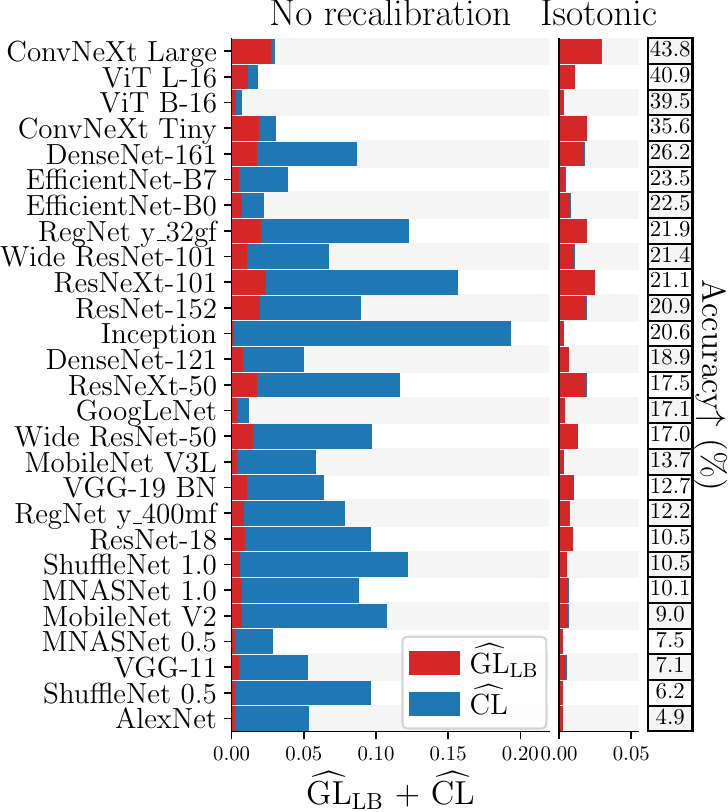}%
  \includegraphics[width=0.5\linewidth]{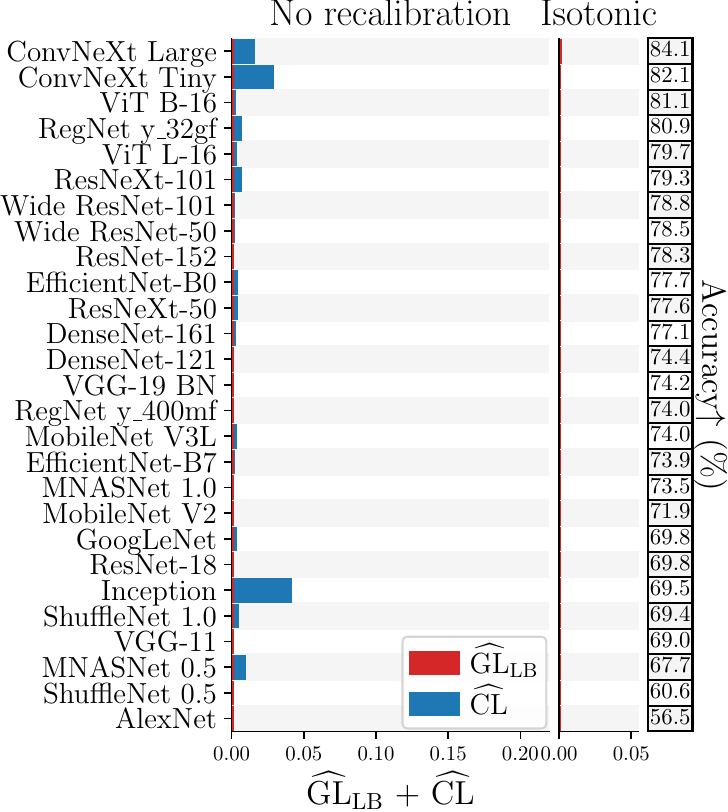}\\
  \caption{\textbf{Comparing vision models}: a debiased estimate of the
  grouping loss lower bound $\gllbhat$
  (\autoref{eq:lb-binning}) and an estimate of the calibration loss
  $\clhat$, both accounting for binning, evaluated on
  ImageNet-R, ImageNet-C and
  ImageNet-1K, sorted by model accuracy. Partitions $\Ccal$ are obtained
  from a decision tree partitioning constrained to create at most
  $\nicefrac{\text{\# samples in bin}}{30}$ regions in each bin. Isotonic
  regression is used for post-hoc recalibration of the models (right).}
  \label{fig:app:imagenet:comparison}

\end{figure}

\newcommand{\fourdiagrams}[8]{
  \makebox[\textwidth]{
    \makebox[2mm]{}
    \makebox[(\textwidth-9mm)/4]{#1}
    \makebox[(\textwidth-9mm)/4]{#3}
    \makebox[(\textwidth-9mm)/4]{#5}
    \makebox[(\textwidth-9mm)/4]{#7}
  }%
  \\
  \includegraphics[height=3.7cm]{#2}%
  \includegraphics[height=3.7cm]{#4}%
  \includegraphics[height=3.7cm]{#6}%
  \includegraphics[height=3.7cm]{#8}%
}

\begin{table}
  \small
  \centerfloat
  \makebox[\textwidth]{
    \makebox[0.72\textwidth]{\textbf{a. ImageNet-R}}
    \makebox[0.72\textwidth]{\textbf{b. ImageNet-C (merged)}}
  }\vspace*{1mm}
  \\
  \begin{subtable}{0.72\textwidth}
    \centering
    \begin{tabular}{lccccc}
\toprule
        Network & $\widehat{\mathrm{CL}}$ & $\widehat{\mathrm{CL}}'$ & $\widehat{\mathrm{GL}}_{\mathrm{LB}}$ & $\widehat{\mathrm{GL}}_{\mathrm{LB}}'$ &  Accuracy$\uparrow$ (\%) \\
\midrule
 ConvNeXt Large &                   0.029 &                    0.000 &                                 0.055 &                                  0.060 &                     34.8 \\
EfficientNet-B7 &                   0.055 &                    0.000 &                                 0.022 &                                  0.021 &                     32.0 \\
       ViT B-16 &                   0.044 &                    0.000 &                                 0.019 &                                  0.021 &                     30.1 \\
  ConvNeXt Tiny &                   0.017 &                    0.000 &                                 0.036 &                                  0.039 &                     29.0 \\
    ResNeXt-101 &                   0.114 &                    0.000 &                                 0.019 &                                  0.018 &                     28.9 \\
 RegNet y\_32gf &                   0.108 &                    0.000 &                                 0.020 &                                  0.020 &                     28.7 \\
     ResNet-152 &                   0.082 &                    0.000 &                                 0.020 &                                  0.018 &                     27.8 \\
       ViT L-16 &                   0.096 &                    0.000 &                                 0.044 &                                  0.043 &                     27.4 \\
      Inception &                   0.200 &                    0.000 &                                 0.006 &                                  0.009 &                     27.3 \\
   DenseNet-161 &                   0.093 &                    0.001 &                                 0.023 &                                  0.021 &                     26.9 \\
Wide ResNet-101 &                   0.079 &                    0.000 &                                 0.022 &                                  0.020 &                     26.9 \\
 Wide ResNet-50 &                   0.082 &                    0.000 &                                 0.020 &                                  0.019 &                     25.9 \\
     ResNeXt-50 &                   0.091 &                    0.000 &                                 0.013 &                                  0.013 &                     25.1 \\
EfficientNet-B0 &                   0.028 &                    0.000 &                                 0.015 &                                  0.018 &                     24.1 \\
   DenseNet-121 &                   0.059 &                    0.000 &                                 0.013 &                                  0.010 &                     23.9 \\
      GoogLeNet &                   0.009 &                    0.001 &                                 0.011 &                                  0.011 &                     23.3 \\
  MobileNet V3L &                   0.104 &                    0.000 &                                 0.007 &                                  0.006 &                     22.2 \\
      VGG-19 BN &                   0.073 &                    0.000 &                                 0.012 &                                  0.011 &                     21.2 \\
RegNet y\_400mf &                   0.084 &                    0.000 &                                 0.011 &                                  0.010 &                     21.0 \\
      ResNet-18 &                   0.057 &                    0.000 &                                 0.010 &                                  0.009 &                     20.4 \\
    MNASNet 1.0 &                   0.095 &                    0.000 &                                 0.008 &                                  0.008 &                     19.9 \\
   MobileNet V2 &                   0.070 &                    0.000 &                                 0.007 &                                  0.009 &                     18.9 \\
 ShuffleNet 1.0 &                   0.091 &                    0.000 &                                 0.007 &                                  0.006 &                     18.8 \\
    MNASNet 0.5 &                   0.017 &                    0.000 &                                 0.005 &                                  0.007 &                     15.9 \\
         VGG-11 &                   0.061 &                    0.000 &                                 0.007 &                                  0.007 &                     15.8 \\
 ShuffleNet 0.5 &                   0.072 &                    0.000 &                                 0.004 &                                  0.003 &                     14.6 \\
        AlexNet &                   0.063 &                    0.000 &                                 0.003 &                                  0.003 &                     12.6 \\
\bottomrule
\end{tabular}

  \end{subtable}%
  \begin{subtable}{0.72\textwidth}
    \centering
    \begin{tabular}{lccccc}
\toprule
        Network & $\widehat{\mathrm{CL}}$ & $\widehat{\mathrm{CL}}'$ & $\widehat{\mathrm{GL}}_{\mathrm{LB}}$ & $\widehat{\mathrm{GL}}_{\mathrm{LB}}'$ &  Accuracy$\uparrow$ (\%) \\
\midrule
       ViT L-16 &                   0.004 &                    0.000 &                                 0.016 &                                  0.016 &                     41.4 \\
       ViT B-16 &                   0.001 &                    0.000 &                                 0.007 &                                  0.009 &                     40.6 \\
 ConvNeXt Large &                   0.002 &                    0.000 &                                 0.016 &                                  0.016 &                     40.1 \\
  ConvNeXt Tiny &                   0.005 &                    0.000 &                                 0.008 &                                  0.008 &                     31.1 \\
EfficientNet-B7 &                   0.025 &                    0.000 &                                 0.009 &                                  0.007 &                     28.1 \\
 RegNet y\_32gf &                   0.053 &                    0.000 &                                 0.002 &                                  0.002 &                     28.0 \\
   DenseNet-161 &                   0.045 &                    0.000 &                                 0.004 &                                  0.004 &                     27.0 \\
    ResNeXt-101 &                   0.071 &                    0.000 &                                 0.005 &                                  0.006 &                     25.9 \\
Wide ResNet-101 &                   0.036 &                    0.000 &                                 0.003 &                                  0.004 &                     25.1 \\
     ResNet-152 &                   0.031 &                    0.000 &                                 0.002 &                                  0.002 &                     24.5 \\
EfficientNet-B0 &                   0.003 &                    0.000 &                                 0.002 &                                  0.002 &                     24.4 \\
      Inception &                   0.171 &                    0.000 &                                -0.006 &                                  0.001 &                     22.7 \\
 Wide ResNet-50 &                   0.040 &                    0.000 &                                 0.003 &                                  0.003 &                     22.1 \\
     ResNeXt-50 &                   0.058 &                    0.000 &                                 0.006 &                                  0.006 &                     22.1 \\
   DenseNet-121 &                   0.021 &                    0.000 &                                 0.002 &                                  0.002 &                     21.1 \\
  MobileNet V3L &                   0.024 &                    0.000 &                                 0.001 &                                  0.001 &                     20.2 \\
RegNet y\_400mf &                   0.034 &                    0.000 &                                 0.003 &                                  0.002 &                     17.7 \\
      GoogLeNet &                   0.004 &                    0.000 &                                 0.001 &                                  0.001 &                     17.1 \\
      VGG-19 BN &                   0.019 &                    0.000 &                                 0.003 &                                  0.003 &                     16.0 \\
    MNASNet 1.0 &                   0.032 &                    0.000 &                                 0.002 &                                  0.002 &                     15.4 \\
      ResNet-18 &                   0.027 &                    0.000 &                                 0.004 &                                  0.004 &                     14.8 \\
   MobileNet V2 &                   0.028 &                    0.000 &                                 0.002 &                                  0.002 &                     13.9 \\
 ShuffleNet 1.0 &                   0.073 &                    0.000 &                                 0.002 &                                  0.003 &                     12.7 \\
    MNASNet 0.5 &                   0.009 &                    0.000 &                                 0.003 &                                  0.003 &                     11.7 \\
         VGG-11 &                   0.022 &                    0.000 &                                 0.005 &                                  0.004 &                     10.2 \\
 ShuffleNet 0.5 &                   0.084 &                    0.000 &                                 0.006 &                                  0.005 &                      9.3 \\
        AlexNet &                   0.029 &                    0.000 &                                 0.001 &                                  0.001 &                      7.8 \\
\bottomrule
\end{tabular}

  \end{subtable}
  \vspace*{4mm}\\
  \makebox[\textwidth]{
    \makebox[0.72\textwidth]{\textbf{c. ImageNet-C (snow)}}
    \makebox[0.72\textwidth]{\textbf{d. ImageNet-1K (validation set)}}
  }\vspace*{1mm}
  \\
  \begin{subtable}{0.72\textwidth}
    \centering
    \begin{tabular}{lccccc}
\toprule
        Network & $\widehat{\mathrm{CL}}$ & $\widehat{\mathrm{CL}}'$ & $\widehat{\mathrm{GL}}_{\mathrm{LB}}$ & $\widehat{\mathrm{GL}}_{\mathrm{LB}}'$ &  Accuracy$\uparrow$ (\%) \\
\midrule
 ConvNeXt Large &                   0.002 &                    0.000 &                                 0.030 &                                  0.031 &                     43.8 \\
       ViT L-16 &                   0.007 &                    0.000 &                                 0.013 &                                  0.013 &                     40.9 \\
       ViT B-16 &                   0.004 &                    0.000 &                                 0.005 &                                  0.005 &                     39.5 \\
  ConvNeXt Tiny &                   0.011 &                    0.000 &                                 0.021 &                                  0.021 &                     35.6 \\
   DenseNet-161 &                   0.069 &                    0.000 &                                 0.019 &                                  0.019 &                     26.2 \\
EfficientNet-B7 &                   0.034 &                    0.000 &                                 0.006 &                                  0.005 &                     23.5 \\
EfficientNet-B0 &                   0.015 &                    0.000 &                                 0.008 &                                  0.009 &                     22.5 \\
 RegNet y\_32gf &                   0.102 &                    0.000 &                                 0.021 &                                  0.020 &                     21.9 \\
Wide ResNet-101 &                   0.057 &                    0.000 &                                 0.012 &                                  0.011 &                     21.4 \\
    ResNeXt-101 &                   0.133 &                    0.000 &                                 0.025 &                                  0.025 &                     21.1 \\
     ResNet-152 &                   0.070 &                    0.000 &                                 0.020 &                                  0.020 &                     20.9 \\
      Inception &                   0.194 &                    0.000 &                                 0.000 &                                  0.004 &                     20.6 \\
   DenseNet-121 &                   0.042 &                    0.000 &                                 0.009 &                                  0.007 &                     18.9 \\
     ResNeXt-50 &                   0.100 &                    0.000 &                                 0.018 &                                  0.019 &                     17.5 \\
      GoogLeNet &                   0.008 &                    0.000 &                                 0.005 &                                  0.005 &                     17.1 \\
 Wide ResNet-50 &                   0.083 &                    0.000 &                                 0.015 &                                  0.014 &                     17.0 \\
  MobileNet V3L &                   0.055 &                    0.000 &                                 0.004 &                                  0.003 &                     13.7 \\
      VGG-19 BN &                   0.053 &                    0.000 &                                 0.011 &                                  0.010 &                     12.7 \\
RegNet y\_400mf &                   0.070 &                    0.000 &                                 0.009 &                                  0.007 &                     12.2 \\
      ResNet-18 &                   0.088 &                    0.000 &                                 0.009 &                                  0.010 &                     10.5 \\
 ShuffleNet 1.0 &                   0.116 &                    0.000 &                                 0.006 &                                  0.005 &                     10.5 \\
    MNASNet 1.0 &                   0.081 &                    0.000 &                                 0.007 &                                  0.006 &                     10.1 \\
   MobileNet V2 &                   0.100 &                    0.000 &                                 0.007 &                                  0.006 &                      9.0 \\
    MNASNet 0.5 &                   0.026 &                    0.000 &                                 0.002 &                                  0.003 &                      7.5 \\
         VGG-11 &                   0.047 &                    0.000 &                                 0.005 &                                  0.004 &                      7.1 \\
 ShuffleNet 0.5 &                   0.094 &                    0.000 &                                 0.002 &                                  0.002 &                      6.2 \\
        AlexNet &                   0.051 &                    0.000 &                                 0.002 &                                  0.002 &                      4.9 \\
\bottomrule
\end{tabular}

  \end{subtable}%
  \begin{subtable}{0.72\textwidth}
    \centering
    \begin{tabular}{lccccc}
\toprule
        Network & $\widehat{\mathrm{CL}}$ & $\widehat{\mathrm{CL}}'$ & $\widehat{\mathrm{GL}}_{\mathrm{LB}}$ & $\widehat{\mathrm{GL}}_{\mathrm{LB}}'$ &  Accuracy$\uparrow$ (\%) \\
\midrule
 ConvNeXt Large &                   0.015 &                    0.000 &                                 0.002 &                                  0.002 &                     84.1 \\
  ConvNeXt Tiny &                   0.028 &                    0.000 &                                 0.002 &                                  0.002 &                     82.1 \\
       ViT B-16 &                   0.004 &                    0.000 &                                -0.000 &                                  0.000 &                     81.1 \\
 RegNet y\_32gf &                   0.009 &                    0.000 &                                -0.001 &                                 -0.000 &                     80.9 \\
       ViT L-16 &                   0.003 &                    0.000 &                                 0.001 &                                  0.001 &                     79.7 \\
    ResNeXt-101 &                   0.010 &                    0.000 &                                -0.002 &                                  0.000 &                     79.3 \\
Wide ResNet-101 &                   0.004 &                    0.000 &                                -0.001 &                                 -0.000 &                     78.8 \\
 Wide ResNet-50 &                   0.004 &                    0.000 &                                -0.001 &                                 -0.000 &                     78.5 \\
     ResNet-152 &                   0.004 &                    0.000 &                                -0.001 &                                  0.000 &                     78.3 \\
EfficientNet-B0 &                   0.006 &                    0.000 &                                 0.000 &                                  0.000 &                     77.7 \\
     ResNeXt-50 &                   0.006 &                    0.000 &                                -0.001 &                                  0.000 &                     77.6 \\
   DenseNet-161 &                   0.004 &                    0.000 &                                -0.001 &                                 -0.000 &                     77.1 \\
   DenseNet-121 &                   0.001 &                    0.000 &                                -0.000 &                                 -0.000 &                     74.4 \\
      VGG-19 BN &                   0.002 &                    0.000 &                                -0.001 &                                 -0.000 &                     74.2 \\
RegNet y\_400mf &                   0.001 &                    0.000 &                                -0.001 &                                 -0.000 &                     74.0 \\
  MobileNet V3L &                   0.006 &                    0.000 &                                -0.001 &                                 -0.000 &                     74.0 \\
EfficientNet-B7 &                   0.001 &                    0.000 &                                 0.003 &                                  0.002 &                     73.9 \\
    MNASNet 1.0 &                   0.003 &                    0.000 &                                -0.001 &                                 -0.000 &                     73.5 \\
   MobileNet V2 &                   0.001 &                    0.000 &                                -0.001 &                                  0.000 &                     71.9 \\
      GoogLeNet &                   0.006 &                    0.000 &                                -0.000 &                                  0.000 &                     69.8 \\
      ResNet-18 &                   0.001 &                    0.000 &                                 0.000 &                                 -0.000 &                     69.8 \\
      Inception &                   0.050 &                    0.000 &                                -0.007 &                                  0.001 &                     69.5 \\
 ShuffleNet 1.0 &                   0.008 &                    0.000 &                                -0.001 &                                  0.000 &                     69.4 \\
         VGG-11 &                   0.000 &                    0.000 &                                -0.000 &                                  0.000 &                     69.0 \\
    MNASNet 0.5 &                   0.012 &                    0.000 &                                -0.001 &                                 -0.000 &                     67.7 \\
 ShuffleNet 0.5 &                   0.004 &                    0.000 &                                -0.001 &                                  0.000 &                     60.6 \\
        AlexNet &                   0.001 &                    0.000 &                                -0.001 &                                 -0.001 &                     56.5 \\
\bottomrule
\end{tabular}

  \end{subtable}
  \caption{Raw values of the estimators of \autoref{fig:app:imagenet:comparison}, before ($\clhat$ and $\gllbhat$) and after ($\clhat'$ and $\gllbhat'$) isotonic recalibration.}
  \label{tab:app:imagenet:comparison}
  \end{table}

\section{NLP}
\label{sec:app:nlp}
We use BART Large \citep{Lewis2019} pre-trained on the Multi-Genre Natural Language Inference dataset \citep{MNLI} and fine-tuned on the Yahoo Answers Topics dataset for zero-shot topic classification. The fine-tuned model is available on HuggingFace at \url{https://huggingface.co/joeddav/bart-large-mnli-yahoo-answers}. Yahoo Answers Topics is composed of question titles and bodies and topic labels. There are 1\,400\,000 training samples, 60\,000 test samples and 10 topics. The dataset is available at \url{https://huggingface.co/datasets/yahoo_answers_topics}. The model is fine-tuned on 5 out of the 10 topics of the training set, totalizing 700\,000 samples. Given a question title and a hypothesis (\eg{} ``This text is about Science \& Mathematics''), the model outputs its confidence in the hypothesis to be true for the given question. The classification being zero-shot, the hypothesis can be about an unseen topic. We evaluate the model separately on the 5 unseen topics and the 5 seen topics of the test set (\ie{} seen topics but unseen samples). This results in a binary classification task in which each sample is composed of a question title and a hypothesis and each label is 1 or 0 whether the hypothesis is correct or not. As for the clustering and calibration procedure, we used a balanced decision stump in the same way as described in \autoref{par:app:exp}: ``\nameref{par:app:exp}''. We work in the high-level feature space of the network, \ie{} the output space of the penultimate layer (embedding space).
\vspace{-5mm}

\setlength{\fboxsep}{0pt}
\newcommand{\diagrams}[1]{
  \setsepchar{;}
  \readlist\arg{#1}

    \vspace*{5mm}
    \makebox[\textwidth]{
      \makebox[2mm]{}
      \makebox[(\textwidth-9mm)/4]{\arg[1]}
      \makebox[(\textwidth-9mm)/4]{\arg[3]}
      \makebox[(\textwidth-9mm)/4]{\arg[5]}
      \makebox[(\textwidth-9mm)/4]{\arg[7]}
    }%
    \\
    \includegraphics[height=3.7cm,trim={0.cm 0cm 0.3795cm 0cm},clip]{\arg[2]}\hspace*{3mm}%
    \includegraphics[height=3.7cm,trim={0.cm 0cm 0.3795cm 0cm},clip]{\arg[4]}\hspace*{3mm}%
    \includegraphics[height=3.7cm,trim={0.cm 0cm 0.3795cm 0cm},clip]{\arg[6]}\hspace*{3mm}%
    \includegraphics[height=3.7cm,trim={0.cm 0cm 0.3795cm 0cm},clip]{\arg[8]}%
    \\
    \vspace*{5mm}\\
    \makebox[\textwidth]{
      \makebox[2mm]{}
      \makebox[(\textwidth-9mm)/4]{\arg[9]}
      \makebox[(\textwidth-9mm)/4]{\arg[11]}
      \makebox[(\textwidth-9mm)/4]{\arg[13]}
      \makebox[(\textwidth-9mm)/4]{\arg[15]}
    }\\
    \includegraphics[height=3.7cm,trim={0.cm 0cm 0.3795cm 0cm},clip]{\arg[10]}\hspace*{3mm}%
    \includegraphics[height=3.7cm,trim={0.cm 0cm 0.3795cm 0cm},clip]{\arg[12]}\hspace*{3mm}%
    \includegraphics[height=3.7cm,trim={0.cm 0cm 0.3795cm 0cm},clip]{\arg[14]}\hspace*{3mm}%
    \includegraphics[height=3.7cm,trim={0.cm 0cm 0.3795cm 0cm},clip]{\arg[16]}%
    \\
    \vspace*{5mm}\\
    \makebox[\textwidth]{
      \makebox[2mm]{}
      \makebox[(\textwidth-9mm)/4]{\arg[17]}
      \makebox[(\textwidth-9mm)/4]{\arg[19]}
      \makebox[(\textwidth-9mm)/4]{\arg[21]}
      \makebox[(\textwidth-9mm)/4]{\arg[23]}
    }\\
    \includegraphics[height=3.7cm,trim={0.cm 0cm 0.3795cm 0cm},clip]{\arg[18]}\hspace*{3mm}%
    \includegraphics[height=3.7cm,trim={0.cm 0cm 0.3795cm 0cm},clip]{\arg[20]}\hspace*{3mm}%
    \includegraphics[height=3.7cm,trim={0.cm 0cm 0.3795cm 0cm},clip]{\arg[22]}\hspace*{3mm}%
    \includegraphics[height=3.7cm,trim={0.cm 0cm 0.3795cm 0cm},clip]{\arg[24]}%
    \\
    \vspace*{5mm}\\
    \makebox[\textwidth]{
      \makebox[2mm]{}
      \makebox[(\textwidth-9mm)/4]{\arg[25]}
      \makebox[(\textwidth-9mm)/4]{\arg[27]}
      \makebox[(\textwidth-9mm)/4]{\arg[29]}
      \makebox[(\textwidth-9mm)/4]{}
    }\\
    \includegraphics[height=3.7cm,trim={0.cm 0cm 0.3795cm 0cm},clip]{\arg[26]}\hspace*{3mm}%
    \includegraphics[height=3.7cm,trim={0.cm 0cm 0.3795cm 0cm},clip]{\arg[28]}\hspace*{3mm}%
    \includegraphics[height=3.7cm,trim={0.cm 0cm 0.3795cm 0cm},clip]{\arg[30]}%
    \hfill%
    \\


}

\newcommand{\diagramsbest}[1]{
  \setsepchar{;}
  \readlist\arg{#1}

    \vspace*{5mm}
    \makebox[\textwidth]{
      \makebox[2mm]{}
      \makebox[(\textwidth-9mm)/4]{\arg[1]}
      \makebox[(\textwidth-9mm)/4]{\arg[3]}
      \makebox[(\textwidth-9mm)/4]{\arg[5]}
      \makebox[(\textwidth-9mm)/4]{\arg[7]}
    }%
    \\
    \includegraphics[height=3.7cm,trim={0.cm 0cm 0.3795cm 0cm},clip]{\arg[2]}\hspace*{3mm}%
    \includegraphics[height=3.7cm,trim={0.cm 0cm 0.3795cm 0cm},clip]{\arg[4]}\hspace*{3mm}%
    \includegraphics[height=3.7cm,trim={0.cm 0cm 0.3795cm 0cm},clip]{\arg[6]}\hspace*{3mm}%
    \includegraphics[height=3.7cm,trim={0.cm 0cm 0.3795cm 0cm},clip]{\arg[8]}%
    \\
    \vspace*{5mm}\\
    \makebox[\textwidth]{
      \makebox[2mm]{}
      \makebox[(\textwidth-9mm)/4]{\arg[9]}
      \makebox[(\textwidth-9mm)/4]{\arg[11]}
      \makebox[(\textwidth-9mm)/4]{\arg[13]}
      \makebox[(\textwidth-9mm)/4]{\arg[15]}
    }\\
    \includegraphics[height=3.7cm,trim={0.cm 0cm 0.3795cm 0cm},clip]{\arg[10]}\hspace*{3mm}%
    \includegraphics[height=3.7cm,trim={0.cm 0cm 0.3795cm 0cm},clip]{\arg[12]}\hspace*{3mm}%
    \includegraphics[height=3.7cm,trim={0.cm 0cm 0.3795cm 0cm},clip]{\arg[14]}\hspace*{3mm}%
    \includegraphics[height=3.7cm,trim={0.cm 0cm 0.3795cm 0cm},clip]{\arg[16]}%
    \\
    \vspace*{5mm}\\
    \makebox[\textwidth]{
      \makebox[2mm]{}
      \makebox[(\textwidth-9mm)/4]{\arg[17]}
      \makebox[(\textwidth-9mm)/4]{\arg[19]}
      \makebox[(\textwidth-9mm)/4]{\arg[21]}
      \makebox[(\textwidth-9mm)/4]{\arg[23]}
    }\\
    \includegraphics[height=3.7cm,trim={0.cm 0cm 0.3795cm 0cm},clip]{\arg[18]}\hspace*{3mm}%
    \includegraphics[height=3.7cm,trim={0.cm 0cm 0.3795cm 0cm},clip]{\arg[20]}\hspace*{3mm}%
    \includegraphics[height=3.7cm,trim={0.cm 0cm 0.3795cm 0cm},clip]{\arg[22]}\hspace*{3mm}%
    \includegraphics[height=3.7cm,trim={0.cm 0cm 0.3795cm 0cm},clip]{\arg[24]}%
    \\


}

\setcounter{section}{4}
\setcounter{subsection}{1}
\begin{figure}[p]
  \addcontentsline{toc}{section}{Grouping diagrams of all vision networks}
  \addcontentsline{toc}{subsection}{ImageNet-R}
  \subsubsection{ImageNet-R – No post-hoc recalibration, small versions}
  \label{sec:app:imagenet-r:nocal}

  \diagrams{%
  AlexNet;%
  xp_nn_calibration/fig_imagenet_calibration/net_ILSVRC2012_img_test_r_alexnet_cal_None_clu_dt.pdf;
  VGG 11;%
  xp_nn_calibration/fig_imagenet_calibration/net_ILSVRC2012_img_test_r_vgg_11_cal_None_clu_dt.pdf;%
  ResNet 18;%
  xp_nn_calibration/fig_imagenet_calibration/net_ILSVRC2012_img_test_r_resnet_18_cal_None_clu_dt.pdf;%
  DenseNet 121;%
  xp_nn_calibration/fig_imagenet_calibration/net_ILSVRC2012_img_test_r_densenet_121_cal_None_clu_dt.pdf;%
  Inception;%
  xp_nn_calibration/fig_imagenet_calibration/net_ILSVRC2012_img_test_r_inception_cal_None_clu_dt.pdf;%
  GoogleNet;%
  xp_nn_calibration/fig_imagenet_calibration/net_ILSVRC2012_img_test_r_googlenet_cal_None_clu_dt.pdf;%
  ShuffleNet V2 x0.5;%
  xp_nn_calibration/fig_imagenet_calibration/net_ILSVRC2012_img_test_r_shufflenet_0_5_cal_None_clu_dt.pdf;%
  MobileNet V2;%
  xp_nn_calibration/fig_imagenet_calibration/net_ILSVRC2012_img_test_r_mobilenet_v2_cal_None_clu_dt.pdf;%
  ResNext 50;%
  xp_nn_calibration/fig_imagenet_calibration/net_ILSVRC2012_img_test_r_resnext_50_cal_None_clu_dt.pdf;%
  WideResNet 50;%
  xp_nn_calibration/fig_imagenet_calibration/net_ILSVRC2012_img_test_r_wideresnet_50_cal_None_clu_dt.pdf;%
  MNASNet 0.5;%
  xp_nn_calibration/fig_imagenet_calibration/net_ILSVRC2012_img_test_r_mnasnet_0_5_cal_None_clu_dt.pdf;%
  EfficientNet B0;%
  xp_nn_calibration/fig_imagenet_calibration/net_ILSVRC2012_img_test_r_efficientnet_b0_cal_None_clu_dt.pdf;%
  RegNet y\_400mf;%
  xp_nn_calibration/fig_imagenet_calibration/net_ILSVRC2012_img_test_r_regnet_y_400mf_cal_None_clu_dt.pdf;%
  ViT B-16;%
  xp_nn_calibration/fig_imagenet_calibration/net_ILSVRC2012_img_test_r_visiontransformer_b_16_cal_None_clu_dt.pdf;%
  ConvNeXt Tiny;%
  xp_nn_calibration/fig_imagenet_calibration/net_ILSVRC2012_img_test_r_convnext_tiny_cal_None_clu_dt.pdf%
  }

  \caption{\textbf{Vision}: Fraction of correct predictions versus
confidence score of predicted class ($\max_k S_k$) on ImageNet-R for
small versions of pre-trained networks, without post-hoc recalibration. In each bin on confidence scores, the level set is partitioned into 2 regions with a decision stump constrained to one balanced split, with a 50-50 train-test split strategy.}
  \label{fig:app:imagenet-r:nocal:stump}
\end{figure}

\begin{figure}[p]
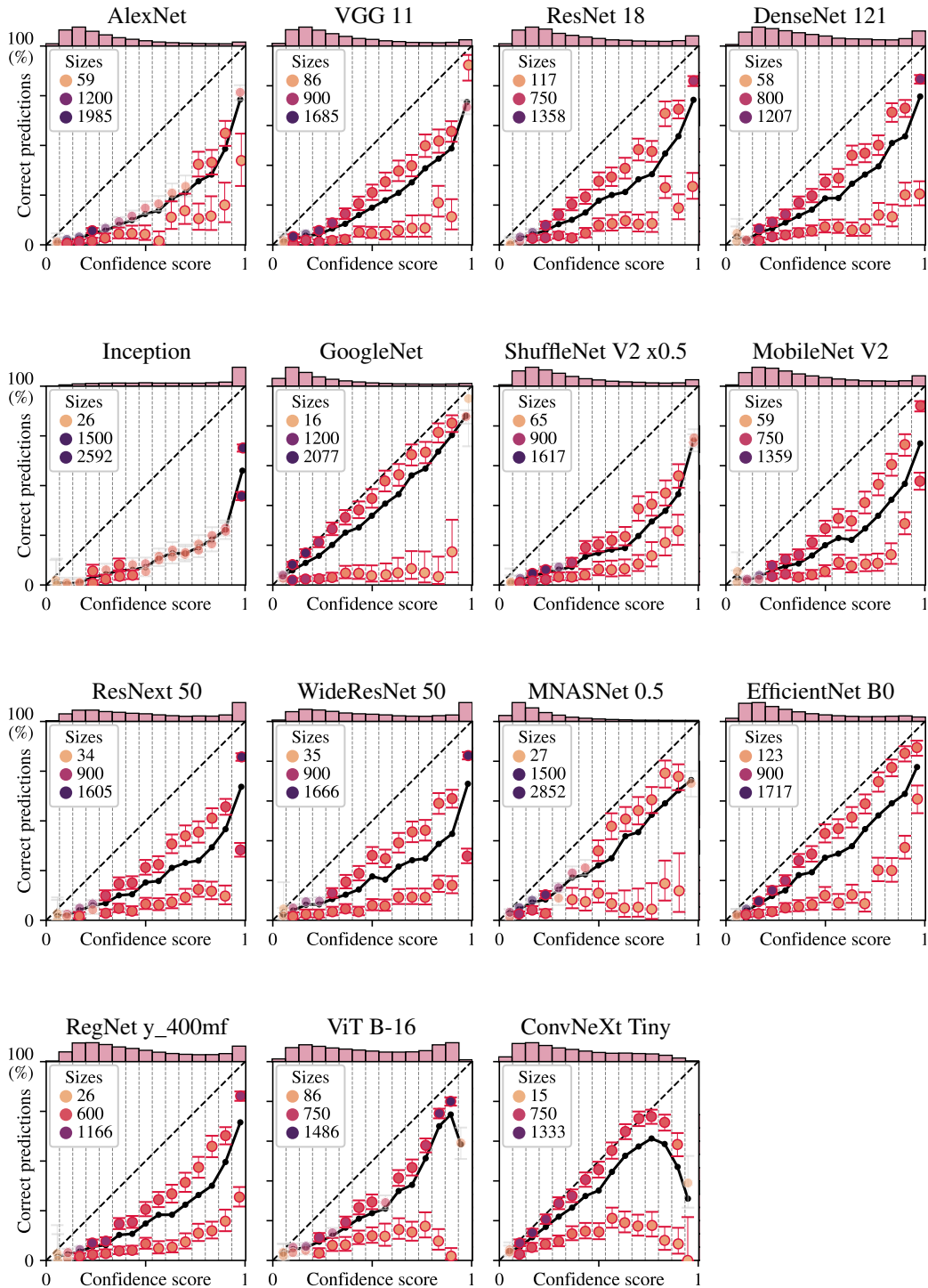

  \subsubsection{ImageNet-R – No post-hoc recalibration, small versions, k-means}
  \label{sec:app:imagenet-r:nocal:kmeans}

  \diagrams{%
  AlexNet;%
  xp_nn_calibration/fig_imagenet_calibration/net_ILSVRC2012_img_test_r_alexnet_cal_None_clu_km.pdf;
  VGG 11;%
  xp_nn_calibration/fig_imagenet_calibration/net_ILSVRC2012_img_test_r_vgg_11_cal_None_clu_km.pdf;%
  ResNet 18;%
  xp_nn_calibration/fig_imagenet_calibration/net_ILSVRC2012_img_test_r_resnet_18_cal_None_clu_km.pdf;%
  DenseNet 121;%
  xp_nn_calibration/fig_imagenet_calibration/net_ILSVRC2012_img_test_r_densenet_121_cal_None_clu_km.pdf;%
  Inception;%
  xp_nn_calibration/fig_imagenet_calibration/net_ILSVRC2012_img_test_r_inception_cal_None_clu_km.pdf;%
  GoogleNet;%
  xp_nn_calibration/fig_imagenet_calibration/net_ILSVRC2012_img_test_r_googlenet_cal_None_clu_km.pdf;%
  ShuffleNet V2 x0.5;%
  xp_nn_calibration/fig_imagenet_calibration/net_ILSVRC2012_img_test_r_shufflenet_0_5_cal_None_clu_km.pdf;%
  MobileNet V2;%
  xp_nn_calibration/fig_imagenet_calibration/net_ILSVRC2012_img_test_r_mobilenet_v2_cal_None_clu_km.pdf;%
  ResNext 50;%
  xp_nn_calibration/fig_imagenet_calibration/net_ILSVRC2012_img_test_r_resnext_50_cal_None_clu_km.pdf;%
  WideResNet 50;%
  xp_nn_calibration/fig_imagenet_calibration/net_ILSVRC2012_img_test_r_wideresnet_50_cal_None_clu_km.pdf;%
  MNASNet 0.5;%
  xp_nn_calibration/fig_imagenet_calibration/net_ILSVRC2012_img_test_r_mnasnet_0_5_cal_None_clu_km.pdf;%
  EfficientNet B0;%
  xp_nn_calibration/fig_imagenet_calibration/net_ILSVRC2012_img_test_r_efficientnet_b0_cal_None_clu_km.pdf;%
  RegNet y\_400mf;%
  xp_nn_calibration/fig_imagenet_calibration/net_ILSVRC2012_img_test_r_regnet_y_400mf_cal_None_clu_km.pdf;%
  ViT B-16;%
  xp_nn_calibration/fig_imagenet_calibration/net_ILSVRC2012_img_test_r_visiontransformer_b_16_cal_None_clu_km.pdf;%
  ConvNeXt Tiny;%
  xp_nn_calibration/fig_imagenet_calibration/net_ILSVRC2012_img_test_r_convnext_tiny_cal_None_clu_km.pdf%
  }

  \caption{\textbf{Vision}: Fraction of correct predictions versus
confidence score of predicted class ($\max_k S_k$) on ImageNet-R for
small versions of pre-trained networks, without post-hoc recalibration. In each bin on confidence scores, the level set is partitioned into 2 regions with a $k$-means clustering, with a 50-50 train-test split strategy (for a fair comparison with decision stump clustering).}
  \label{fig:app:imagenet-r:nocal:kmeans}
\end{figure}

\begin{figure}[p]
  \subsubsection{ImageNet-R – No post-hoc recalibration, best versions}
  \label{sec:app:imagenet-r:nocal:best}

  \diagramsbest{%
  VGG 19 BN;%
  xp_nn_calibration/fig_imagenet_calibration/net_ILSVRC2012_img_test_r_vgg_19_bn_cal_None_clu_dt.pdf;%
  ResNet 152;%
  xp_nn_calibration/fig_imagenet_calibration/net_ILSVRC2012_img_test_r_resnet_152_cal_None_clu_dt.pdf;%
  DenseNet 161;%
  xp_nn_calibration/fig_imagenet_calibration/net_ILSVRC2012_img_test_r_densenet_161_cal_None_clu_dt.pdf;%
  ShuffleNet V2 x1.0;%
  xp_nn_calibration/fig_imagenet_calibration/net_ILSVRC2012_img_test_r_shufflenet_1_0_cal_None_clu_dt.pdf;%
  MobileNet V3 Large;%
  xp_nn_calibration/fig_imagenet_calibration/net_ILSVRC2012_img_test_r_mobilenet_v3L_cal_None_clu_dt.pdf;%
  ResNext 101;%
  xp_nn_calibration/fig_imagenet_calibration/net_ILSVRC2012_img_test_r_resnext_101_cal_None_clu_dt.pdf;%
  WideResNet 101;%
  xp_nn_calibration/fig_imagenet_calibration/net_ILSVRC2012_img_test_r_wideresnet_101_cal_None_clu_dt.pdf;%
  MNASNet 1.0;%
  xp_nn_calibration/fig_imagenet_calibration/net_ILSVRC2012_img_test_r_mnasnet_1_0_cal_None_clu_dt.pdf;%
  EfficientNet B7;%
  xp_nn_calibration/fig_imagenet_calibration/net_ILSVRC2012_img_test_r_efficientnet_b7_cal_None_clu_dt.pdf;%
  RegNet y\_32gf;%
  xp_nn_calibration/fig_imagenet_calibration/net_ILSVRC2012_img_test_r_regnet_y_32gf_cal_None_clu_dt.pdf;%
  ViT L-16;%
  xp_nn_calibration/fig_imagenet_calibration/net_ILSVRC2012_img_test_r_visiontransformer_l_16_cal_None_clu_dt.pdf;%
  ConvNeXt Large;%
  xp_nn_calibration/fig_imagenet_calibration/net_ILSVRC2012_img_test_r_convnext_large_cal_None_clu_dt.pdf%
  }

  \caption{\textbf{Vision}: Fraction of correct predictions versus
confidence score of predicted class ($\max_k S_k$) on ImageNet-R for best
versions of pre-trained networks, without post-hoc recalibration. In each bin on confidence scores, the level set is partitioned into 2 regions with a decision stump constrained to one balanced split, with a 50-50 train-test split strategy.}
  \label{fig:app:imagenet-r:nocal:stump:best}
  \null
  \vfill
\end{figure}

\begin{figure}[p]
  \subsubsection{ImageNet-R – Isotonic recalibration, small versions}
  \label{sec:app:imagenet-r:cal}

  \diagrams{%
  AlexNet;%
  xp_nn_calibration/fig_imagenet_calibration/net_ILSVRC2012_img_test_r_alexnet_cal_isotonic_clu_dt.pdf;
  VGG 11;%
  xp_nn_calibration/fig_imagenet_calibration/net_ILSVRC2012_img_test_r_vgg_11_cal_isotonic_clu_dt.pdf;%
  ResNet 18;%
  xp_nn_calibration/fig_imagenet_calibration/net_ILSVRC2012_img_test_r_resnet_18_cal_isotonic_clu_dt.pdf;%
  DenseNet 121;%
  xp_nn_calibration/fig_imagenet_calibration/net_ILSVRC2012_img_test_r_densenet_121_cal_isotonic_clu_dt.pdf;%
  Inception;%
  xp_nn_calibration/fig_imagenet_calibration/net_ILSVRC2012_img_test_r_inception_cal_isotonic_clu_dt.pdf;%
  GoogleNet;%
  xp_nn_calibration/fig_imagenet_calibration/net_ILSVRC2012_img_test_r_googlenet_cal_isotonic_clu_dt.pdf;%
  ShuffleNet V2 x0.5;%
  xp_nn_calibration/fig_imagenet_calibration/net_ILSVRC2012_img_test_r_shufflenet_0_5_cal_isotonic_clu_dt.pdf;%
  MobileNet V2;%
  xp_nn_calibration/fig_imagenet_calibration/net_ILSVRC2012_img_test_r_mobilenet_v2_cal_isotonic_clu_dt.pdf;%
  ResNext 50;%
  xp_nn_calibration/fig_imagenet_calibration/net_ILSVRC2012_img_test_r_resnext_50_cal_isotonic_clu_dt.pdf;%
  WideResNet 50;%
  xp_nn_calibration/fig_imagenet_calibration/net_ILSVRC2012_img_test_r_wideresnet_50_cal_isotonic_clu_dt.pdf;%
  MNASNet 0.5;%
  xp_nn_calibration/fig_imagenet_calibration/net_ILSVRC2012_img_test_r_mnasnet_0_5_cal_isotonic_clu_dt.pdf;%
  EfficientNet B0;%
  xp_nn_calibration/fig_imagenet_calibration/net_ILSVRC2012_img_test_r_efficientnet_b0_cal_isotonic_clu_dt.pdf;%
  RegNet y\_400mf;%
  xp_nn_calibration/fig_imagenet_calibration/net_ILSVRC2012_img_test_r_regnet_y_400mf_cal_isotonic_clu_dt.pdf;%
  ViT B-16;%
  xp_nn_calibration/fig_imagenet_calibration/net_ILSVRC2012_img_test_r_visiontransformer_b_16_cal_isotonic_clu_dt.pdf;%
  ConvNeXt Tiny;%
  xp_nn_calibration/fig_imagenet_calibration/net_ILSVRC2012_img_test_r_convnext_tiny_cal_isotonic_clu_dt.pdf%
  }
  \caption{\textbf{Vision}: Fraction of correct predictions versus confidence score of predicted class ($\max_k S_k$) on ImageNet-R for small versions of pre-trained networks, with isotonic recalibration. In each bin on confidence scores, the level set is partitioned into 2 regions with a decision stump constrained to one balanced split, with a 50-50 train-test split strategy.}
  \label{fig:app:imagenet-r:cal:stump}
\end{figure}

\begin{figure}[p]
  \subsubsection{ImageNet-R – Isotonic recalibration, best versions}
  \label{sec:app:imagenet-r:cal:best}

  \diagramsbest{%
  VGG 19 BN;%
  xp_nn_calibration/fig_imagenet_calibration/net_ILSVRC2012_img_test_r_vgg_19_bn_cal_isotonic_clu_dt.pdf;%
  ResNet 152;%
  xp_nn_calibration/fig_imagenet_calibration/net_ILSVRC2012_img_test_r_resnet_152_cal_isotonic_clu_dt.pdf;%
  DenseNet 161;%
  xp_nn_calibration/fig_imagenet_calibration/net_ILSVRC2012_img_test_r_densenet_161_cal_isotonic_clu_dt.pdf;%
  ShuffleNet V2 x1.0;%
  xp_nn_calibration/fig_imagenet_calibration/net_ILSVRC2012_img_test_r_shufflenet_1_0_cal_isotonic_clu_dt.pdf;%
  MobileNet V3 Large;%
  xp_nn_calibration/fig_imagenet_calibration/net_ILSVRC2012_img_test_r_mobilenet_v3L_cal_isotonic_clu_dt.pdf;%
  ResNext 101;%
  xp_nn_calibration/fig_imagenet_calibration/net_ILSVRC2012_img_test_r_resnext_101_cal_isotonic_clu_dt.pdf;%
  WideResNet 101;%
  xp_nn_calibration/fig_imagenet_calibration/net_ILSVRC2012_img_test_r_wideresnet_101_cal_isotonic_clu_dt.pdf;%
  MNASNet 1.0;%
  xp_nn_calibration/fig_imagenet_calibration/net_ILSVRC2012_img_test_r_mnasnet_1_0_cal_isotonic_clu_dt.pdf;%
  EfficientNet B7;%
  xp_nn_calibration/fig_imagenet_calibration/net_ILSVRC2012_img_test_r_efficientnet_b7_cal_isotonic_clu_dt.pdf;%
  RegNet y\_32gf;%
  xp_nn_calibration/fig_imagenet_calibration/net_ILSVRC2012_img_test_r_regnet_y_32gf_cal_isotonic_clu_dt.pdf;%
  ViT L-16;%
  xp_nn_calibration/fig_imagenet_calibration/net_ILSVRC2012_img_test_r_visiontransformer_l_16_cal_isotonic_clu_dt.pdf;%
  ConvNeXt Large;%
  xp_nn_calibration/fig_imagenet_calibration/net_ILSVRC2012_img_test_r_convnext_large_cal_isotonic_clu_dt.pdf%
  }

  \caption{\textbf{Vision}: Fraction of correct predictions versus confidence score of predicted class ($\max_k S_k$) on ImageNet-R for best versions of pre-trained networks, with isotonic recalibration. In each bin on confidence scores, the level set is partitioned into 2 regions with a decision stump constrained to one balanced split, with a 50-50 train-test split strategy.}
  \label{fig:app:imagenet-r:cal:stump:best}
  \null
  \vfill
\end{figure}

\stepcounter{subsection}
\begin{figure}[p]
  \addcontentsline{toc}{subsection}{ImageNet-C}
  \subsubsection{ImageNet-C – No post-hoc recalibration, small versions}
  \label{sec:app:imagenet-c:nocal}

  \diagrams{%
  AlexNet;%
  xp_nn_calibration/fig_imagenet_calibration/net_ILSVRC2012_img_test_c__merged_no_rep5_alexnet_cal_None_clu_dt.pdf;
  VGG 11;%
  xp_nn_calibration/fig_imagenet_calibration/net_ILSVRC2012_img_test_c__merged_no_rep5_vgg_11_cal_None_clu_dt.pdf;%
  ResNet 18;%
  xp_nn_calibration/fig_imagenet_calibration/net_ILSVRC2012_img_test_c__merged_no_rep5_resnet_18_cal_None_clu_dt.pdf;%
  DenseNet 121;%
  xp_nn_calibration/fig_imagenet_calibration/net_ILSVRC2012_img_test_c__merged_no_rep5_densenet_121_cal_None_clu_dt.pdf;%
  Inception;%
  xp_nn_calibration/fig_imagenet_calibration/net_ILSVRC2012_img_test_c__merged_no_rep5_inception_cal_None_clu_dt.pdf;%
  GoogleNet;%
  xp_nn_calibration/fig_imagenet_calibration/net_ILSVRC2012_img_test_c__merged_no_rep5_googlenet_cal_None_clu_dt.pdf;%
  ShuffleNet V2 x0.5;%
  xp_nn_calibration/fig_imagenet_calibration/net_ILSVRC2012_img_test_c__merged_no_rep5_shufflenet_0_5_cal_None_clu_dt.pdf;%
  MobileNet V2;%
  xp_nn_calibration/fig_imagenet_calibration/net_ILSVRC2012_img_test_c__merged_no_rep5_mobilenet_v2_cal_None_clu_dt.pdf;%
  ResNext 50;%
  xp_nn_calibration/fig_imagenet_calibration/net_ILSVRC2012_img_test_c__merged_no_rep5_resnext_50_cal_None_clu_dt.pdf;%
  WideResNet 50;%
  xp_nn_calibration/fig_imagenet_calibration/net_ILSVRC2012_img_test_c__merged_no_rep5_wideresnet_50_cal_None_clu_dt.pdf;%
  MNASNet 0.5;%
  xp_nn_calibration/fig_imagenet_calibration/net_ILSVRC2012_img_test_c__merged_no_rep5_mnasnet_0_5_cal_None_clu_dt.pdf;%
  EfficientNet B0;%
  xp_nn_calibration/fig_imagenet_calibration/net_ILSVRC2012_img_test_c__merged_no_rep5_efficientnet_b0_cal_None_clu_dt.pdf;%
  RegNet y\_400mf;%
  xp_nn_calibration/fig_imagenet_calibration/net_ILSVRC2012_img_test_c__merged_no_rep5_regnet_y_400mf_cal_None_clu_dt.pdf;%
  ViT B-16;%
  xp_nn_calibration/fig_imagenet_calibration/net_ILSVRC2012_img_test_c__merged_no_rep5_visiontransformer_b_16_cal_None_clu_dt.pdf;%
  ConvNeXt Tiny;%
  xp_nn_calibration/fig_imagenet_calibration/net_ILSVRC2012_img_test_c__merged_no_rep5_convnext_tiny_cal_None_clu_dt.pdf%
  }

  \caption{\textbf{Vision}: Fraction of correct predictions versus
confidence score of predicted class ($\max_k S_k$) on ImageNet-C for
small versions of pre-trained networks, without post-hoc recalibration. In each bin on confidence scores, the level set is partitioned into 2 regions with a decision stump constrained to one balanced split, with a 50-50 train-test split strategy.}
  \label{fig:app:imagenet-c:nocal:stump}
\end{figure}

\begin{figure}[p]
  \subsubsection{ImageNet-C – No post-hoc recalibration, best versions}
  \label{sec:app:imagenet-c:nocal:best}

  \diagramsbest{%
  VGG 19 BN;%
  xp_nn_calibration/fig_imagenet_calibration/net_ILSVRC2012_img_test_c__merged_no_rep5_vgg_19_bn_cal_None_clu_dt.pdf;%
  ResNet 152;%
  xp_nn_calibration/fig_imagenet_calibration/net_ILSVRC2012_img_test_c__merged_no_rep5_resnet_152_cal_None_clu_dt.pdf;%
  DenseNet 161;%
  xp_nn_calibration/fig_imagenet_calibration/net_ILSVRC2012_img_test_c__merged_no_rep5_densenet_161_cal_None_clu_dt.pdf;%
  ShuffleNet V2 x1.0;%
  xp_nn_calibration/fig_imagenet_calibration/net_ILSVRC2012_img_test_c__merged_no_rep5_shufflenet_1_0_cal_None_clu_dt.pdf;%
  MobileNet V3 Large;%
  xp_nn_calibration/fig_imagenet_calibration/net_ILSVRC2012_img_test_c__merged_no_rep5_mobilenet_v3L_cal_None_clu_dt.pdf;%
  ResNext 101;%
  xp_nn_calibration/fig_imagenet_calibration/net_ILSVRC2012_img_test_c__merged_no_rep5_resnext_101_cal_None_clu_dt.pdf;%
  WideResNet 101;%
  xp_nn_calibration/fig_imagenet_calibration/net_ILSVRC2012_img_test_c__merged_no_rep5_wideresnet_101_cal_None_clu_dt.pdf;%
  MNASNet 1.0;%
  xp_nn_calibration/fig_imagenet_calibration/net_ILSVRC2012_img_test_c__merged_no_rep5_mnasnet_1_0_cal_None_clu_dt.pdf;%
  EfficientNet B7;%
  xp_nn_calibration/fig_imagenet_calibration/net_ILSVRC2012_img_test_c__merged_no_rep5_efficientnet_b7_cal_None_clu_dt.pdf;%
  RegNet y\_32gf;%
  xp_nn_calibration/fig_imagenet_calibration/net_ILSVRC2012_img_test_c__merged_no_rep5_regnet_y_32gf_cal_None_clu_dt.pdf;%
  ViT L-16;%
  xp_nn_calibration/fig_imagenet_calibration/net_ILSVRC2012_img_test_c__merged_no_rep5_visiontransformer_l_16_cal_None_clu_dt.pdf;%
  ConvNeXt Large;%
  xp_nn_calibration/fig_imagenet_calibration/net_ILSVRC2012_img_test_c__merged_no_rep5_convnext_large_cal_None_clu_dt.pdf%
  }

  \caption{\textbf{Vision}: Fraction of correct predictions versus
confidence score of predicted class ($\max_k S_k$) on ImageNet-C for best
versions of pre-trained networks, without post-hoc recalibration. In each bin on confidence scores, the level set is partitioned into 2 regions with a decision stump constrained to one balanced split, with a 50-50 train-test split strategy.}
  \label{fig:app:imagenet-c:nocal:stump:best}
  \null
  \vfill
\end{figure}

\begin{figure}[p]
  \subsubsection{ImageNet-C – Isotonic recalibration, small versions}
  \label{sec:app:imagenet-c:cal}

  \diagrams{%
  AlexNet;%
  xp_nn_calibration/fig_imagenet_calibration/net_ILSVRC2012_img_test_c__merged_no_rep5_alexnet_cal_isotonic_clu_dt.pdf;
  VGG 11;%
  xp_nn_calibration/fig_imagenet_calibration/net_ILSVRC2012_img_test_c__merged_no_rep5_vgg_11_cal_isotonic_clu_dt.pdf;%
  ResNet 18;%
  xp_nn_calibration/fig_imagenet_calibration/net_ILSVRC2012_img_test_c__merged_no_rep5_resnet_18_cal_isotonic_clu_dt.pdf;%
  DenseNet 121;%
  xp_nn_calibration/fig_imagenet_calibration/net_ILSVRC2012_img_test_c__merged_no_rep5_densenet_121_cal_isotonic_clu_dt.pdf;%
  Inception;%
  xp_nn_calibration/fig_imagenet_calibration/net_ILSVRC2012_img_test_c__merged_no_rep5_inception_cal_isotonic_clu_dt.pdf;%
  GoogleNet;%
  xp_nn_calibration/fig_imagenet_calibration/net_ILSVRC2012_img_test_c__merged_no_rep5_googlenet_cal_isotonic_clu_dt.pdf;%
  ShuffleNet V2 x0.5;%
  xp_nn_calibration/fig_imagenet_calibration/net_ILSVRC2012_img_test_c__merged_no_rep5_shufflenet_0_5_cal_isotonic_clu_dt.pdf;%
  MobileNet V2;%
  xp_nn_calibration/fig_imagenet_calibration/net_ILSVRC2012_img_test_c__merged_no_rep5_mobilenet_v2_cal_isotonic_clu_dt.pdf;%
  ResNext 50;%
  xp_nn_calibration/fig_imagenet_calibration/net_ILSVRC2012_img_test_c__merged_no_rep5_resnext_50_cal_isotonic_clu_dt.pdf;%
  WideResNet 50;%
  xp_nn_calibration/fig_imagenet_calibration/net_ILSVRC2012_img_test_c__merged_no_rep5_wideresnet_50_cal_isotonic_clu_dt.pdf;%
  MNASNet 0.5;%
  xp_nn_calibration/fig_imagenet_calibration/net_ILSVRC2012_img_test_c__merged_no_rep5_mnasnet_0_5_cal_isotonic_clu_dt.pdf;%
  EfficientNet B0;%
  xp_nn_calibration/fig_imagenet_calibration/net_ILSVRC2012_img_test_c__merged_no_rep5_efficientnet_b0_cal_isotonic_clu_dt.pdf;%
  RegNet y\_400mf;%
  xp_nn_calibration/fig_imagenet_calibration/net_ILSVRC2012_img_test_c__merged_no_rep5_regnet_y_400mf_cal_isotonic_clu_dt.pdf;%
  ViT B-16;%
  xp_nn_calibration/fig_imagenet_calibration/net_ILSVRC2012_img_test_c__merged_no_rep5_visiontransformer_b_16_cal_isotonic_clu_dt.pdf;%
  ConvNeXt Tiny;%
  xp_nn_calibration/fig_imagenet_calibration/net_ILSVRC2012_img_test_c__merged_no_rep5_convnext_tiny_cal_isotonic_clu_dt.pdf%
  }
  \caption{\textbf{Vision}: Fraction of correct predictions versus confidence score of predicted class ($\max_k S_k$) on ImageNet-C for small versions of pre-trained networks, with isotonic recalibration. In each bin on confidence scores, the level set is partitioned into 2 regions with a decision stump constrained to one balanced split, with a 50-50 train-test split strategy.}
  \label{fig:app:imagenet-c:cal:stump}
\end{figure}

\begin{figure}[p]
  \subsubsection{ImageNet-C – Isotonic recalibration, best versions}
  \label{sec:app:imagenet-c:cal:best}

  \diagramsbest{%
  VGG 19 BN;%
  xp_nn_calibration/fig_imagenet_calibration/net_ILSVRC2012_img_test_c__merged_no_rep5_vgg_19_bn_cal_isotonic_clu_dt.pdf;%
  ResNet 152;%
  xp_nn_calibration/fig_imagenet_calibration/net_ILSVRC2012_img_test_c__merged_no_rep5_resnet_152_cal_isotonic_clu_dt.pdf;%
  DenseNet 161;%
  xp_nn_calibration/fig_imagenet_calibration/net_ILSVRC2012_img_test_c__merged_no_rep5_densenet_161_cal_isotonic_clu_dt.pdf;%
  ShuffleNet V2 x1.0;%
  xp_nn_calibration/fig_imagenet_calibration/net_ILSVRC2012_img_test_c__merged_no_rep5_shufflenet_1_0_cal_isotonic_clu_dt.pdf;%
  MobileNet V3 Large;%
  xp_nn_calibration/fig_imagenet_calibration/net_ILSVRC2012_img_test_c__merged_no_rep5_mobilenet_v3L_cal_isotonic_clu_dt.pdf;%
  ResNext 101;%
  xp_nn_calibration/fig_imagenet_calibration/net_ILSVRC2012_img_test_c__merged_no_rep5_resnext_101_cal_isotonic_clu_dt.pdf;%
  WideResNet 101;%
  xp_nn_calibration/fig_imagenet_calibration/net_ILSVRC2012_img_test_c__merged_no_rep5_wideresnet_101_cal_isotonic_clu_dt.pdf;%
  MNASNet 1.0;%
  xp_nn_calibration/fig_imagenet_calibration/net_ILSVRC2012_img_test_c__merged_no_rep5_mnasnet_1_0_cal_isotonic_clu_dt.pdf;%
  EfficientNet B7;%
  xp_nn_calibration/fig_imagenet_calibration/net_ILSVRC2012_img_test_c__merged_no_rep5_efficientnet_b7_cal_isotonic_clu_dt.pdf;%
  RegNet y\_32gf;%
  xp_nn_calibration/fig_imagenet_calibration/net_ILSVRC2012_img_test_c__merged_no_rep5_regnet_y_32gf_cal_isotonic_clu_dt.pdf;%
  ViT L-16;%
  xp_nn_calibration/fig_imagenet_calibration/net_ILSVRC2012_img_test_c__merged_no_rep5_visiontransformer_l_16_cal_isotonic_clu_dt.pdf;%
  ConvNeXt Large;%
  xp_nn_calibration/fig_imagenet_calibration/net_ILSVRC2012_img_test_c__merged_no_rep5_convnext_large_cal_isotonic_clu_dt.pdf%
  }

  \caption{\textbf{Vision}: Fraction of correct predictions versus confidence score of predicted class ($\max_k S_k$) on ImageNet-C for best versions of pre-trained networks, with isotonic recalibration. In each bin on confidence scores, the level set is partitioned into 2 regions with a decision stump constrained to one balanced split, with a 50-50 train-test split strategy.}
  \label{fig:app:imagenet-c:cal:stump:best}
  \null
  \vfill
\end{figure}

\stepcounter{subsection}
\begin{figure}[p]
  \subsubsection{ImageNet-1K – No post-hoc recalibration, small versions}
  \label{sec:app:imagenet-1k:nocal}

  \diagrams{%
  AlexNet;%
  xp_nn_calibration/fig_imagenet_calibration/net_ILSVRC2012_img_val_alexnet_cal_None_clu_dt.pdf;
  VGG 11;%
  xp_nn_calibration/fig_imagenet_calibration/net_ILSVRC2012_img_val_vgg_11_cal_None_clu_dt.pdf;%
  ResNet 18;%
  xp_nn_calibration/fig_imagenet_calibration/net_ILSVRC2012_img_val_resnet_18_cal_None_clu_dt.pdf;%
  DenseNet 121;%
  xp_nn_calibration/fig_imagenet_calibration/net_ILSVRC2012_img_val_densenet_121_cal_None_clu_dt.pdf;%
  Inception;%
  xp_nn_calibration/fig_imagenet_calibration/net_ILSVRC2012_img_val_inception_cal_None_clu_dt.pdf;%
  GoogleNet;%
  xp_nn_calibration/fig_imagenet_calibration/net_ILSVRC2012_img_val_googlenet_cal_None_clu_dt.pdf;%
  ShuffleNet V2 x0.5;%
  xp_nn_calibration/fig_imagenet_calibration/net_ILSVRC2012_img_val_shufflenet_0_5_cal_None_clu_dt.pdf;%
  MobileNet V2;%
  xp_nn_calibration/fig_imagenet_calibration/net_ILSVRC2012_img_val_mobilenet_v2_cal_None_clu_dt.pdf;%
  ResNext 50;%
  xp_nn_calibration/fig_imagenet_calibration/net_ILSVRC2012_img_val_resnext_50_cal_None_clu_dt.pdf;%
  WideResNet 50;%
  xp_nn_calibration/fig_imagenet_calibration/net_ILSVRC2012_img_val_wideresnet_50_cal_None_clu_dt.pdf;%
  MNASNet 0.5;%
  xp_nn_calibration/fig_imagenet_calibration/net_ILSVRC2012_img_val_mnasnet_0_5_cal_None_clu_dt.pdf;%
  EfficientNet B0;%
  xp_nn_calibration/fig_imagenet_calibration/net_ILSVRC2012_img_val_efficientnet_b0_cal_None_clu_dt.pdf;%
  RegNet y\_400mf;%
  xp_nn_calibration/fig_imagenet_calibration/net_ILSVRC2012_img_val_regnet_y_400mf_cal_None_clu_dt.pdf;%
  ViT B-16;%
  xp_nn_calibration/fig_imagenet_calibration/net_ILSVRC2012_img_val_visiontransformer_b_16_cal_None_clu_dt.pdf;%
  ConvNeXt Tiny;%
  xp_nn_calibration/fig_imagenet_calibration/net_ILSVRC2012_img_val_convnext_tiny_cal_None_clu_dt.pdf%
  }

  \caption{\textbf{Vision}: Fraction of correct predictions versus
confidence score of predicted class ($\max_k S_k$) on ImageNet-1K
(validation set) for small versions of pre-trained networks, without
post-hoc recalibration. In each bin on confidence scores, the level set is partitioned into 2 regions with a decision stump constrained to one balanced split, with a 50-50 train-test split strategy.}
  \label{fig:app:imagenet-1k:nocal:stump}
\end{figure}

\begin{figure}[p]
  \subsubsection{ImageNet-1K – No post-hoc recalibration, best versions}
  \label{sec:app:imagenet-1k:nocal:best}

  \diagramsbest{%
  VGG 19 BN;%
  xp_nn_calibration/fig_imagenet_calibration/net_ILSVRC2012_img_val_vgg_19_bn_cal_None_clu_dt.pdf;%
  ResNet 152;%
  xp_nn_calibration/fig_imagenet_calibration/net_ILSVRC2012_img_val_resnet_152_cal_None_clu_dt.pdf;%
  DenseNet 161;%
  xp_nn_calibration/fig_imagenet_calibration/net_ILSVRC2012_img_val_densenet_161_cal_None_clu_dt.pdf;%
  ShuffleNet V2 x1.0;%
  xp_nn_calibration/fig_imagenet_calibration/net_ILSVRC2012_img_val_shufflenet_1_0_cal_None_clu_dt.pdf;%
  MobileNet V3 Large;%
  xp_nn_calibration/fig_imagenet_calibration/net_ILSVRC2012_img_val_mobilenet_v3L_cal_None_clu_dt.pdf;%
  ResNext 101;%
  xp_nn_calibration/fig_imagenet_calibration/net_ILSVRC2012_img_val_resnext_101_cal_None_clu_dt.pdf;%
  WideResNet 101;%
  xp_nn_calibration/fig_imagenet_calibration/net_ILSVRC2012_img_val_wideresnet_101_cal_None_clu_dt.pdf;%
  MNASNet 1.0;%
  xp_nn_calibration/fig_imagenet_calibration/net_ILSVRC2012_img_val_mnasnet_1_0_cal_None_clu_dt.pdf;%
  EfficientNet B7;%
  xp_nn_calibration/fig_imagenet_calibration/net_ILSVRC2012_img_val_efficientnet_b7_cal_None_clu_dt.pdf;%
  RegNet y\_32gf;%
  xp_nn_calibration/fig_imagenet_calibration/net_ILSVRC2012_img_val_regnet_y_32gf_cal_None_clu_dt.pdf;%
  ViT L-16;%
  xp_nn_calibration/fig_imagenet_calibration/net_ILSVRC2012_img_val_visiontransformer_l_16_cal_None_clu_dt.pdf;%
  ConvNeXt Large;%
  xp_nn_calibration/fig_imagenet_calibration/net_ILSVRC2012_img_val_convnext_large_cal_None_clu_dt.pdf%
  }

  \caption{\textbf{Vision}: Fraction of correct predictions versus
confidence score of predicted class ($\max_k S_k$) on ImageNet-1K
(validation set) for best versions of pre-trained networks, without
post-hoc recalibration. In each bin on confidence scores, the level set is partitioned into 2 regions with a decision stump constrained to one balanced split, with a 50-50 train-test split strategy.}
  \label{fig:app:imagenet-1k:nocal:stump:best}
  \null
  \vfill
\end{figure}

\begin{figure}[p]
  \addcontentsline{toc}{subsection}{ImageNet-1K}
  \subsubsection{ImageNet-1K – Isotonic recalibration, small versions}
  \label{sec:app:imagenet-1k:cal}

  \diagrams{%
  AlexNet;%
  xp_nn_calibration/fig_imagenet_calibration/net_ILSVRC2012_img_val_alexnet_cal_isotonic_clu_dt.pdf;
  VGG 11;%
  xp_nn_calibration/fig_imagenet_calibration/net_ILSVRC2012_img_val_vgg_11_cal_isotonic_clu_dt.pdf;%
  ResNet 18;%
  xp_nn_calibration/fig_imagenet_calibration/net_ILSVRC2012_img_val_resnet_18_cal_isotonic_clu_dt.pdf;%
  DenseNet 121;%
  xp_nn_calibration/fig_imagenet_calibration/net_ILSVRC2012_img_val_densenet_121_cal_isotonic_clu_dt.pdf;%
  Inception;%
  xp_nn_calibration/fig_imagenet_calibration/net_ILSVRC2012_img_val_inception_cal_isotonic_clu_dt.pdf;%
  GoogleNet;%
  xp_nn_calibration/fig_imagenet_calibration/net_ILSVRC2012_img_val_googlenet_cal_isotonic_clu_dt.pdf;%
  ShuffleNet V2 x0.5;%
  xp_nn_calibration/fig_imagenet_calibration/net_ILSVRC2012_img_val_shufflenet_0_5_cal_isotonic_clu_dt.pdf;%
  MobileNet V2;%
  xp_nn_calibration/fig_imagenet_calibration/net_ILSVRC2012_img_val_mobilenet_v2_cal_isotonic_clu_dt.pdf;%
  ResNext 50;%
  xp_nn_calibration/fig_imagenet_calibration/net_ILSVRC2012_img_val_resnext_50_cal_isotonic_clu_dt.pdf;%
  WideResNet 50;%
  xp_nn_calibration/fig_imagenet_calibration/net_ILSVRC2012_img_val_wideresnet_50_cal_isotonic_clu_dt.pdf;%
  MNASNet 0.5;%
  xp_nn_calibration/fig_imagenet_calibration/net_ILSVRC2012_img_val_mnasnet_0_5_cal_isotonic_clu_dt.pdf;%
  EfficientNet B0;%
  xp_nn_calibration/fig_imagenet_calibration/net_ILSVRC2012_img_val_efficientnet_b0_cal_isotonic_clu_dt.pdf;%
  RegNet y\_400mf;%
  xp_nn_calibration/fig_imagenet_calibration/net_ILSVRC2012_img_val_regnet_y_400mf_cal_isotonic_clu_dt.pdf;%
  ViT B-16;%
  xp_nn_calibration/fig_imagenet_calibration/net_ILSVRC2012_img_val_visiontransformer_b_16_cal_isotonic_clu_dt.pdf;%
  ConvNeXt Tiny;%
  xp_nn_calibration/fig_imagenet_calibration/net_ILSVRC2012_img_val_convnext_tiny_cal_isotonic_clu_dt.pdf%
  }
  \caption{\textbf{Vision}: Fraction of correct predictions versus confidence score of predicted class ($\max_k S_k$) on ImageNet-1K (validation set) for small versions of pre-trained networks, with isotonic recalibration. In each bin on confidence scores, the level set is partitioned into 2 regions with a decision stump constrained to one balanced split, with a 50-50 train-test split strategy.}
  \label{fig:app:imagenet-1k:cal:stump}
\end{figure}

\begin{figure}[p]
  \subsubsection{ImageNet-1K – Isotonic recalibration, best versions}
  \label{sec:app:imagenet-1k:cal:best}

  \diagramsbest{%
  VGG 19 BN;%
  xp_nn_calibration/fig_imagenet_calibration/net_ILSVRC2012_img_val_vgg_19_bn_cal_isotonic_clu_dt.pdf;%
  ResNet 152;%
  xp_nn_calibration/fig_imagenet_calibration/net_ILSVRC2012_img_val_resnet_152_cal_isotonic_clu_dt.pdf;%
  DenseNet 161;%
  xp_nn_calibration/fig_imagenet_calibration/net_ILSVRC2012_img_val_densenet_161_cal_isotonic_clu_dt.pdf;%
  ShuffleNet V2 x1.0;%
  xp_nn_calibration/fig_imagenet_calibration/net_ILSVRC2012_img_val_shufflenet_1_0_cal_isotonic_clu_dt.pdf;%
  MobileNet V3 Large;%
  xp_nn_calibration/fig_imagenet_calibration/net_ILSVRC2012_img_val_mobilenet_v3L_cal_isotonic_clu_dt.pdf;%
  ResNext 101;%
  xp_nn_calibration/fig_imagenet_calibration/net_ILSVRC2012_img_val_resnext_101_cal_isotonic_clu_dt.pdf;%
  WideResNet 101;%
  xp_nn_calibration/fig_imagenet_calibration/net_ILSVRC2012_img_val_wideresnet_101_cal_isotonic_clu_dt.pdf;%
  MNASNet 1.0;%
  xp_nn_calibration/fig_imagenet_calibration/net_ILSVRC2012_img_val_mnasnet_1_0_cal_isotonic_clu_dt.pdf;%
  EfficientNet B7;%
  xp_nn_calibration/fig_imagenet_calibration/net_ILSVRC2012_img_val_efficientnet_b7_cal_isotonic_clu_dt.pdf;%
  RegNet y\_32gf;%
  xp_nn_calibration/fig_imagenet_calibration/net_ILSVRC2012_img_val_regnet_y_32gf_cal_isotonic_clu_dt.pdf;%
  ViT L-16;%
  xp_nn_calibration/fig_imagenet_calibration/net_ILSVRC2012_img_val_visiontransformer_l_16_cal_isotonic_clu_dt.pdf;%
  ConvNeXt Large;%
  xp_nn_calibration/fig_imagenet_calibration/net_ILSVRC2012_img_val_convnext_large_cal_isotonic_clu_dt.pdf%
  }

  \caption{\textbf{Vision}: Fraction of correct predictions versus confidence score of predicted class ($\max_k S_k$) on ImageNet-1K (validation set) for best versions of pre-trained networks, with isotonic recalibration. In each bin on confidence scores, the level set is partitioned into 2 regions with a decision stump constrained to one balanced split, with a 50-50 train-test split strategy.}
  \label{fig:app:imagenet-1k:cal:stump:best}
  \null
  \vfill
\end{figure}

\end{document}